\theoremstyle{plain}
\newtheorem{thm}{\protect\theoremname}
\theoremstyle{plain}
\newtheorem{assumption}[thm]{\protect\assumptionname}
\theoremstyle{remark}
\newtheorem{rem}[thm]{\protect\remarkname}
\theoremstyle{plain}
\theoremstyle{plain}
\newtheorem{lem}[thm]{\protect\lemmaname}
\definecolor{blue}{rgb}{0.2, 0.3, 0.7}
\DeclarePairedDelimiter{\crl}{\{}{\}}
\DeclarePairedDelimiter{\prn}{(}{)}
\DeclarePairedDelimiter{\nrm}{\|}{\|}
\DeclarePairedDelimiter{\brk}{[}{]}
\newcommand{\remove}[1]{{}}
\newcommand{\ie}{\textit{i.e., }}
\newcommand{\eg}{\textit{e.g., }}
\newcommand{\sign}{\text{sign}}
\newcommand{\indicator}{\vec{1}}
\renewcommand{\vec}[1]{\boldsymbol{\mathbf{#1}}}
\renewcommand{\cal}[1]{\mathcal{#1}}
\newcommand{\x}{\vec{x}}
\newcommand{\w}{\vec{w}}
\renewcommand{\v}{\vec{v}}
\newcommand{\spf}{\vec{u}}
\newcommand{\z}{\vec{z}}
\newcommand{\K}{\vec{K}}
\newcommand{\vnu}{\vec{\nu}}
\newcommand{\X}{\vec{X}}
\newcommand{\bx}{\bar{\vec{x}}}
\newcommand{\noise}{\vec{\xi}}
\newcommand{\noisesmall}{\vec{\zeta}}
\newcommand{\noisesubscript}{{\xi}}
\newcommand{\noisesmallsubscript}{{\zeta}}
\newcommand{\noisepatchindex}{p^{\noisesubscript}}
\newcommand{\bR}{\mathbb{R}}
\newcommand{\cN}{\mathcal{N}}
\newcommand{\cP}{\mathcal{P}}
\newcommand{\cD}{\mathcal{D}}
\newcommand{\snote}[1]{{\color{red} Suriya: #1}}
\newtheorem{claim}{Claim}
\newtheorem{definition}{Definition}
\newtheorem{remark}{Remark}
\newtheorem{theorem}{Theorem}
\newtheorem{lemma}[theorem]{Lemma}
\newtheorem{corollary}[theorem]{Corollary}
\providecommand{\assumptionname}{Assumption}
\providecommand{\lemmaname}{Lemma}
\providecommand{\propositionname}{Proposition}
\providecommand{\theoremname}{Theorem}
\providecommand{\remarkname}{Remark}
\newcommand{\vtheta}{\vec{\theta}}
\newcommand{\vxi}{\vec{\xi}}
\newcommand{\pbpk}{\mathcal{P}_{bp,k}}%
\newcommand{\pbp}{\mathcal{P}_{bp}}%
\newcommand{\pbpi}{\mathcal{P}_{bp}^{(i)}}%
\newcommand{\pbpki}{\mathcal{P}_{bp,k}^{(i)}}%
\newcommand{\pbpkip}{\mathcal{P}_{bp,k'}^{(i)}}%
\newcommand{\pbpkj}{\mathcal{P}_{bp,k}^{(j)}}%
\newcommand{\api}{\alpha_{p,i}}%
\newcommand{\apj}{\alpha_{p,j}}%
\global\long\def\R{\mathbb{R}}%
\global\long\def\E{\mathbb{E}}%
\global\long\def\Ginit{\mathcal{G}_{\textup{init}}}%
\global\long\def\D{\mathcal{D}}%
\global\long\def\oT{\overline{T}}%
\global\long\def\oTaug{\overline{T}_{\text{aug}}}%
\global\long\def\T{\mathcal{T}}%
\global\long\def\Dtr{\mathcal{D}_{\text{train}}}%
\global\long\def\Dtraug{\mathcal{D}_{\text{train}}^{\text{(aug)}}}%
\global\long\def\N{\mathcal{N}}%
\global\long\def\K{\mathcal{K}}%
\global\long\def\tO{\widetilde{O}}%
\global\long\def\tom{\widetilde{\Omega}}%
\global\long\def\tthe{\widetilde{\Theta}}%
\global\long\def\poly{\textup{poly}}%
\global\long\def\sign{\textup{sign}}%
\begin{document}

\title{Data Augmentation as Feature Manipulation}
\author{Ruoqi Shen \thanks{University of Washington, {\tt shenr3@cs.washington.edu}. Part of this work was done as a intern at Microsoft Research.} \and S\'ebastien Bubeck \thanks{Microsoft Research, {\tt sebubeck@microsoft.com}}\and Suriya Gunasekar \thanks{Microsoft Research, {\tt suriyag@microsoft.com}}}
\date{}
\maketitle

\begin{abstract}
		Data augmentation is a cornerstone of the machine learning pipeline, yet its theoretical underpinnings remain unclear. Is it merely a way to artificially augment the data set size? Or is it about encouraging the model to satisfy certain invariance? In this work we consider another angle, and we study the effect of data augmentation on the dynamic of the learning process. We find that data augmentation can alter the relative importance of various features, effectively making certain informative but hard to learn features more likely to be captured in the learning process. Importantly, we show that this effect is more pronounced for non-linear models, such as neural networks. Our main contribution is a detailed analysis of data augmentation on the learning dynamic for a two layer convolutional neural network in the recently proposed multi-view data model by \citet{allen2020towards}. We complement this analysis with further experimental evidence that data augmentation can be viewed as feature manipulation.
\end{abstract}

\section{Introduction}
\label{sec:introduction}
Data augmentation is a powerful technique for inexpensively increasing the size and diversity of training data. Empirically, even minimal data augmentation can substantially increase the performance of neural networks. It is commonly argued that data augmentation is useful to impose domain specific symmetries on the model, which would be difficult to enforce directly in the architecture \citep{simard2000transformation,simard2003best,chapelle2001vicinal,yaeger1996effective,shorten2019survey}. For example, semantics of a natural image is invariant under translation and scaling, so it is reasonable to augment an image data set with translated and scaled variations of its inputs. Simple augmentation with random crop up to $4$ pixels can lead to gains in the range 5-10\% \citep{ciregan2012multi,krizhevsky2017imagenet}. Another explanation often proposed for the role of data augmentation is merely that it increases the sample size.
As an alternative to symmetry inducing or sample size increase, we consider in this work the possibility that data augmentation should in fact be viewed as a more subtle {\em feature manipulation} mechanism on the data.
\newline

Consider, for illustration, an image data set with the task of learning to detect whether there is a cow in the image. A simplified view would be that there are \textit{true cow features} that generate the cow images, and we hope to learn those \textit{true features}. At the same time, because most images of cows contain grass, it would not be surprising if a neural network would learn to detect the spurious \textit{grass feature} for the task, and perhaps simply overfit the rare images such as desert cows that are not explained by the \textit{grass feature} (and similarly overfit the perhaps few images with grass and no cows). Now consider a simple data augmentation technique such as Gaussian smoothing (let us assume black and white images or else use additional color space augmentations). The \textit{grass feature}, sans color, is essentially a high frequency texture information, so we can expect the smoothing operation to make this feature  significantly diminished. In this example, the {\em feature manipulation} that data augmentation performs is effectively to render the spurious feature harder to detect, or more precisely to make it harder to learn, which in turn boosts the \textit{true cow features} to become the dominant features.
\newline

Continuing the illustration above, let us explore further the idea of data augmentation as {\em feature manipulation}. First note that the \textit{true cow feature} need not be one ``single well-defined object", but rather we may have many different true cow features. For example, \textit{true cow features} could be different for left-facing and right-facing cows. 
An imbalance in the training data with respect to those different features could make the rarer features hard to learn compared to the more common features, similarly to how the spurious grass feature was occluding the true cow features. In the example above, it could happen that in most images in the training data, the cows are facing right, which in turn could mean that the neural network will learn a cow feature {\em with an orientation} (right-facing), and then simply memorize/overfit the cows facing left. Yet another commonly used data augmentation technique such as random horizontal flip would solve this by balancing the occurrence of cow features with right-orientation and those with left-orientation, hopefully leading to a neural network dynamic that would discover {\em both} of those types of cow features. 
Note that one might be tempted to interpret this as inducing a mirror symmetry invariance in the model, but we emphasize that the effect is more subtle: the learned invariance is {\em only} for the relevant features, rather than being an invariance for {\em all} images (\eg on non-cow images one might not be invariant to the orientation).
\newline

More generally, to understand feature learning with and without data augmentation in gradient descent trained neural networks, we can think of three types of features of interest: (a) The ``easy to learn and good" features, which are accurate features for the learning problem and are {easy to learn} in the sense that they have large relative contribution in the gradient descent updates of the network. (b) The ``hard to learn and good" features, which are more nuanced to detect but are essential to fit the harder samples in the population distribution (\eg examples with rare object orientations).
These are features that despite being accurate have small relative contribution in the gradient descent updates (perhaps due to lack of sufficient representation in the training data), which in turn makes them hard to learn. (c) Finally, there are the ``easy to learn and bad" features, which while inaccurate, nevertheless interfere with the learning process as they have a large contribution in the gradient updates. Such features often correspond to spurious correlations or dominating noise patterns (\eg the \textit{grass feature}) which arise due to limitations in training data size or data collection mechanisms.\footnote{We do not mention ``hard to learn and inaccurate" features as they are conceptually irrelevant for  the training dynamics or accuracy of the  model.}
In this paper, we study data augmentation as a technique for manipulating the ``easiness" and ``hardness" of features by essentially changing their relative contributions in the gradient updates for the neural network.
\newline

We believe that this view of data augmentation as a {\em feature manipulation} mechanism is more insightful (and closer to the truth) than the  complementary and more straightforward views of ``symmetry inducing" or ``it's just more data". For one, data augmentation with specific symmetries do not necessarily lead to models that are respectively invariant. For example, \citet{azulay2019deep} show that even models trained with extensive translation and scale augmentation can be sensitive to single pixel changes in translation and scaling on inputs far from the training distribution, suggesting the inductive bias from data augmentation is more subtle. Further, this view could form a basis for studying more recent data augmentation techniques like MixUp \citep{zhang2017mixup}, CutOut \citep{devries2017improved}, and variants, which in spite of being widely successful in image tasks do not fit the conventional narrative of data augmentation.

\paragraph{Contributions} Given the diversity of data augmentation techniques (\eg see \citet{shorten2019survey,feng2021survey} for a survey), it is a formidable challenge to understand and analyze the corresponding feature manipulation for each case, and this task is  beyond the scope of the present paper. Our more modest objective is to start this program by studying a simple mathematical model where data augmentation can be provably shown to perform feature manipulation along the lines described in the illustration above. Specifically, we consider a variant of the multi-view data setting introduced in the pioneering work of \citet{allen2020towards} on ensemble learning. 
In our data model, each data point is viewed as a set of patches, with each patch being represented by a high-dimensional vector in $\R^d$. Moreover there is a set of $K$ ``true/good" features $\v_1, \v_2, \hdots, \v_K \in \R^d$. For any data point, each patch is then some combination of noise and features. Specifically at least one patch contains a ``good" feature whose orientation indicates the  label, \ie for some $k \in [K]$ this patch is $y \v_k$ where $y \in \{-1,1\}$ is the binary class label to be predicted. In this case we say that the data point is of the $k^{th}$ type. The other patches contain different forms of noise. 
If the training data contains sufficiently many type-$k$ data points, then the corresponding feature $\v_k$ is ``easy to learn and good", while the features corresponding to rare types are ``hard to learn and good". To model the ``easy to learn and bad" features we assume that one patch per datapoint receives a large (Gaussian) noise, which we call the dominant  noise. 
See Section \ref{sec:prelim} for exact details of the model. 
Given such training data we show the following for a two layer patch-wise convolutional network (see \eqref{eq:model}) trained using gradient descent (there is a number of caveats, see below for a list):
\begin{enumerate}
	\item When one or more features are sufficiently rare, the network will only learn the frequent ``easy to learn and good" features, and will overfit  the remaining data using the ``easy to learn and bad" noise component.
	\item On the other hand, with any data augmentation technique that can permute or balance the features, the network will learn all $K$ features, and thus achieve better test loss (and, importantly learn a better representation of this data\footnote{As a consequence of learning all the $K$ features, the learned model will not only be more accurate on the data distribution of training samples, but will also be robust to distribution shifts that alter the proportion of data of the $K$ feature types.}). We show that this happens because the representation of the ``hard to learn and good" features in the gradient updates will be boosted, and simultaneously the relative contribution of the dominant noise or the ``easy to learn and bad" features will be diminished.
	\item We show that this phenomenon is more pronounced for gradient descent dynamics in non-linear models in the following sense: we prove that even at high signal-to-noise ratio (SNR) the non-linear models might memorize through the noise components, while gradient descent on linear models overfit to noise only at much lower SNR. 
	This shows that data augmentation is useful in a wider range of cases for non-linear models than for linear models. 
	
	Moreover, our non-linear model can learn the distribution even in the presence of feature noise (in the form of $- \alpha y \v_{k'}$ for some small $\alpha>0$, which points to wrong class). On the other hand, a linear model cannot have low test error with such feature noise, thus showing a further separation between linear and non-linear models. 

\end{enumerate}

Some of the caveats to our theoretical results include the following points (none seem essential, but for some of them going beyond would require significant technical work):
\begin{itemize}
	\item Neural network architecture: we study two layer neural network with a special activation function (the latter can be viewed as a smoothed ReLU with fixed bias). We also assume poly-logarithmic (in $d$) width.
    \item Training: we study gradient descent rather than stochastic gradient descent, and furthermore we assume a specific training time (the same one with and without data augmentation).
    \item Data model: the distribution can be generalized in many ways, including having data points with mixed types (\eg ``multi-view" as in \cite{allen2020towards}), heterogeneous noise components, or even correlated noise components (see below for more on this). We also assume a very high dimensional regime $d \gg n^2$ (where $n$ is the training set size), although we believe our results should hold for $d \gg n$.
\end{itemize}

Even though our theoretical results are in  a limited setting, the feature manipulation effect of data augmentation is conceptually broader. We complement our analysis with experiments on CIFAR-10 and synthetic datasets, where we study data augmentation in more generality. We  circle back to our motivating problem with spurious features (\textit{{\`a}la the cow grass features story}) in a classification task. Our experiments show that simply shifting the spurious feature position randomly up to 2 pixels in each epoch, can significantly improve the test performance by making the spurious feature hard to learn. This happens even when we do not change any non-spurious pixels/features (and hence control learning additional image priors). We further formulate experiments to evaluate the value of a single data augmented image compared to an fully independent sample, and see that on CIFAR10 dataset that once $~50\%$ independent samples are available, a data augmented sample is almost as effective as an independent sample for the learning task. Finally, we show on synthetic dataset that the problem arising from imbalance in views (as studied in our main result) also holds for deeper convolutional architectures, even when the views are merely translations of each other. 

\paragraph{Related Work}
Starting with \citep{bishop1995training} there is a long line of work casting data augmentation as an effective regularization technique, see \citep{dao2019kernel, rajput2019does, wu2020generalization, yang2022sample} for recent developments in that direction. Other theoretical analyses have studied and quantified the gains of data augmentation from an invariance perspective \citep{chen2020group,mei2021learning}. The viewpoint we take here, based on studying directly the effect of augmentation on the learning dynamic, is strongly influenced by the work of Zeyuan Allen-Zhu and Yuanzhi Li in the last few years. For example in \citet{allen2020feature} they develop this perspective for {\em adversarial training} (which in some ways can be thought as a form of data augmentation, where each data point is augmented to its adversarial version). There they show that adversarial training leads to a certain form of {\em feature purification}, which in essence means that the filters learned by a convolutional neural network become closer to some ``ground truth" features. In \citep{allen2020towards} they introduce the multi-view model that we study here, and they used it to study (among other things) ensemble learning. In a nutshell, in their version of the model each data point has several views that can be used for classification, and the idea is that each model might learn only one of those views, hence there is benefit to ensembling in that it will allow to uncover all the features, just like here we suggest that data augmentation is a way to uncover all the features. Other notable works which share the philosophy of studying the dynamic of learning (although focused on linear models) include \citep{hanin2021data} which investigates the impact of data augmentation on optimization, and \citep{wu2020generalization} which considers the overparametrized setting and show that data augmentation can improve generalization in this case.

\paragraph{Notation} We use tilde notation $\tO$, $\tthe$, $\tom$ to hide $\log$ factors in standard asymptotic notation. 
For an integer $K$, $[K]=\crl{1,2,\ldots,K}$. We interchangeably use $\vec{a}\cdot \vec{b}$, $\langle \vec{a},\vec{b}\rangle $, or $\vec{a}^\top\vec{b}$ for standard inner product between two vectors.

\section{A mathematical model for understanding feature manipulation} \label{sec:prelim}

Our data model defined below is a variation of the multi-view data distribution in \cite{allen2020towards} for a binary classification task.  We represent the inputs $\x$ as a collection of $P$ non-overlapping patches $\x=(\x_1,\x_2,\ldots, \x_P)\in\R^{d\times P}$, where each patch is a $d$ dimensional vector.
The task is associated with $K$ unknown ``good" features denoted as $\v_1,\v_2,\ldots, \v_K\in \R^d$, such that for labels  $y\in\crl{-1,1}$, their orientation as $\{y\v_{k}\}_{k\in[K]}$ constitutes the $K$ views or sub-types of the class $y$.\footnote{For $M$-class classification, our analysis can be adapted by using separate set of features $\crl{\v_{k,m}}_{k}$ for each class $m\in[M]$, rather than $\crl{\pm\v_k}_k$. For $M=2$, under our learning algorithm, using $(\v_{k,-1},\v_{k,1})$ as features for $y={-1,1}$ is equivalent to using $-\v_{k},\v_k$  with $\v_k=\v_{k,1}-\v_{k,-1}$.}  Each input $\x_p$ patches either contain one of the ``good" feature $\{y\v_k\}$ or a ``bad" feature in the form of random and/or feature noise. 
Formally, our distribution is defined below.

\begin{definition} \label{def:data} $\cD$ is parametrized by $\big(\vec{\rho},\sigma_{\noisesubscript},\sigma_{\noisesmallsubscript},\alpha\big)$, where  $\vec{\rho}=(\rho_1,\rho_2,\ldots,\rho_K)$ is a discrete distribution over the  features $\crl{\v_k}_{k\in[K]}$, and $\sigma_{\noisesubscript}$,$\sigma_{\noisesmallsubscript}$, and $\alpha$ are noise parameters. Without loss of generality, let $\rho_1\ge \ldots \ge \rho_K$.
	A sample $(\x,y)\sim\cD$ is generated as follows: 
	\begin{enumerate}[\quad (a)]
		\item Sample $y\in\crl{1,-1}$ uniformly.
		\item Given $y$, the input $\x=(\x_1,\x_2,\ldots, \x_P)\in\R^{d\times P}$ is sampled as below:

	Feature patch: Choose the main feature patch $p^*\in[P]$ arbitrarily and set $	\x_{p^*}=y\v_{k^*} \text{, where } k^*\sim\vec{\rho}.$.
 Dominant noise: Choose a dominant noise patch $\noisepatchindex\neq p^*$ and generate $\x_{\noisepatchindex}=\noise \text{,\quad where }\noise\overset{\text{i.i.d}}{\sim}\mathcal{N}\big(0,\frac{\sigma_{\noisesubscript}^2}{d} I_d\big). $

Background: {For the remaining background patches{\footnotemark} $p\in [P]\setminus\crl{p^*,\noisepatchindex}$, select $0\le\alpha_{p}\le\alpha$ and set} $
\x_p=-\alpha_{p}y\v_{k_p}+\noisesmall_p\text{, where }k_p\sim\vec{\rho}, \noisesmall_p\sim\cN(0,\sigma_{\noisesmallsubscript}^2I_d).  $ 
	\end{enumerate}

\end{definition}
\footnotetext{In our definition, the dominant noise $\noise$ and the main feature $\v_{k^*}$ appear in exactly one patch. But our results also hold (by virtue of parameter sharing in \eqref{eq:model}) when for any disjoint non-empty subsets $\cP_f,\cP_n\subset[P]$, we set $\forall\,{p\in\cP_f},\;\x_p=y\v_{k^*}$ and $\forall\,{p\in\cP_n},\;\x_p=\noise_p\underset{\text{i.i.d}}{\sim}\cN(0,\sigma^2_{\noisesubscript}I_d/d)$.}

\begin{assumption}\label{ass:orthonormal features}	We assume the features $\crl{\v_k}_{k\in[K]}$ are orthonormal, \ie $\forall_{k,k'\in[K]},\;\v_k\cdot \v_{k'}=\indicator_{k=k'}$. 
\end{assumption}

The training dataset  consists of $n$ i.i.d., samples from $\cD$, $\Dtr=\crl{(\x^{(i)},y^{(i)}):i\in[n]}\sim \cD^{\otimes n}.$ We are interested in the high dimensional regime where $n\ll d$. $n$, $P$ and $K$ can grow with $d$.
Note that, in Definition~\ref{def:data} $k^*$, $p^*$, $\noisepatchindex$, $\noise$, and $(\alpha_p,k_p,\noisesmall_p)_{p\notin\crl{p^*,p^{\noisesubscript}}}$ all depend on $\x$, but we have dropped this dependence in the notation to avoid clutter. In our analysis, for $i=1,2,\ldots,n$, we use  $k^*_i$, $p^*_i$, $\noisepatchindex_i$, $\noise^{(i)}$, and $(\alpha_{p,i},k_{p,i},\noisesmall_{p,i})_{p\notin\crl{p^*_i,p^{\noisesubscript}}_i}$ to denote the corresponding quantities for the  sample $(\x^{(i)},y^{(i)})$ in the training dataset.

\remove{
	\snote{May be we should move this discussion to appendix}
	\begin{remark}
	or $k\in[K]$, let $\mathcal{I}_k=\crl{i\in[n]:k_{i}^*=k}$ denote the indices of data points with view $y\v_k$ as the main feature and $\hat{\rho}_{k}=\frac{1}{n}|\mathcal{I}_k|$ denote its empirical fraction in the training data. Recall from Definition~\ref{def:data} that $k_i$ are sampled independently with $\text{Pr}(k_i^*=k)=\rho_k$.  Thus, with with high probability, $\rho_{k}$ and $\hat{\rho}_{k}$ differ at most by $\sqrt{\frac{\log(n)}{n}}$. In the rest of the paper,  for simplicity we assume $\rho_{k}=\hat{\rho}_{k}$.

	Similarly, let $\hat{\rho}_{k}^{\text{(noise)}}$ be the proportion of feature
	noise $-y\v_{k}$ in dataset $\Dtr$, i.e., $\hat{\rho}_{k}^{\text{(noise)}}=\frac{1}{n(P-2)}|\crl{i\in[n],p\in[P]\setminus\{p^*_i,p^\noisesubscript_i\}]|k_{p,i}=k}|$

	Again from standard concentration, we have $\rho_{k}$ and $\hat{\rho}_{k}^{\text{(noise)}}$ differ by negligible quantity with high probability, thus we also assume $\rho_{k}=\overline{\rho}_{k}^{\Ftext{(noise)}}$.

	\end{remark}
}

\paragraph{Data augmentation}

Let $\Dtraug$ denote the augmented dataset obtained by transforming the i.i.d. training dataset $\Dtr$.
Our model for data augmentation is such that $\Dtraug$ has  equal number of samples with main feature $y\v_{k}$ for each $k\in[K]$.
Concretely,  consider linear transformations $\T_{1},\ldots\T_{K-1}$, such that for all $k$, $\T_k:\R^d\to\R^d$ and satisfies
\begin{equation}
	\forall\,{k'\in[K]},\; \T_{k}(\v_{k'})=\v_{((k'+k-1)\!\!\!\!\mod K)+1)}.
	\label{eq:permutation}
\end{equation}

Such transformations are well defined for $K\le d$, and in essence permute the  feature vectors $\v_k$ on patches with true feature or feature noise. At the same time, the Gaussian noise patches before and after transformation are no longer i.i.d.
We slightly abuse notation and define $\T_k(\x)$ on $\x\in\R^{d\times P}$ as  $\T_k(\x)=(\T_{k}(\x_1),\T_{k}(\x_2),\ldots,\T_{k}(\x_p))\in\R^{d\times P}$, as well as  $\T_k(\Dtr)$ on the training dataset as $\T_k(\Dtr)=\{(\T_{k}(\x^{(i)}),y^{(i)}):i\in[n]\}$. \newline

\noindent Our augmented dataset $\Dtraug$ consists $\Dtr$ along with the $K-1$ transformations of
of $\Dtr$ as defined below:
\begin{equation}\label{eq:augment}
\Dtraug=\Dtr\,\cup\,\T_{1}(\Dtr)\ldots\cup\,\T_{K-1}(\Dtr).
\end{equation}
Note that in $\Dtraug$ all the views are equally represented, \ie for each $k\in[K]$, we will have exactly $n$ samples from the feature $y\v_k$, and further $\Dtraug$ has more samples compared to $\Dtr$ with $|\Dtraug|=nK$, but they are no longer i.i.d. \newline

Since the features $\{\v_k\}_{k}$ are orthonormal (Assumption~\ref{ass:orthonormal features}) and all the non-feature noise are spherically symmetric, without loss of generality, we can assume that $\crl{\v_k}_{k\in[K]}$ are simply the first $K$ standard basis vectors in $\R^d$, \ie  $\v_k=\vec{e}_k$. 
In this case,  we can choose $\T_k$ for $k\in[K-1]$ as a permutation of coordinates satisfying \eqref{eq:permutation} on the first $K$ coordinate. If we further assume that the the permutations $\T_k$ do not have any fixed points, \ie $\forall\, i\in[d]$, $\T_k(\z)[i]\neq \z[i]$, then at initialization and updates of gradient descent, the augmented samples in $\Dtraug$ satisfy the same properties as i.i.d. samples in $\Dtr$ (upto constants and log factors). In this rest of the proof, we thus assume that $\T_k$ are permutations of coordinates without any fixed points in the orthogonal basis extended from  $\{\v_k\}_k$, and satisfies \eqref{eq:permutation}.

\paragraph{Role of different noise components} Our main result shows that when the dominant noise parameter $\sigma_{\noisesubscript}$ is sufficiently large, a neural network can overfit to this noise rather than learn all the views. However, with the right data augmentation, we can show that all the views can be accurately learned using a non-linear network. Furthermore, in the presence of feature noise $\{-\alpha_py\v_{k_p}\}$ (pointing to wrong class), linear models are unable to fit our data distribution, thus establishing a gap from linear models. \newline

We choose the noise parameters $\sigma_{\noisesubscript},\sigma_{\noisesmallsubscript},\alpha$ such that the \textit{dominant noise} $\noise$ and the true features $\{y\v_{k^*}\}$ have the main contribution to the learning dynamic compared to the feature noise (\ie $-\alpha_py\v_{k_p}$) or the minor noise (\ie $\noisesmall_p$). Thus, our results do not necessarily require noise in the background patches beyond establishing gap with linear models.  Since the minor noise $\sigma_{\noisesmallsubscript}$ does not provide any additional insight, we assume $\sigma_{\noisesmallsubscript}=0$. Our analysis can handle small $\sigma_{\noisesmallsubscript}$ with more tedious bookkeeping.

\subsection{Learning algorithm}\label{sec:learning_algorithm}
We use the following patch-wise convolutional network architecture with $C$ channels: let  $\w=\{\w_1,\w_2,\ldots \w_C\}\in\R^{d\times C}$ denote the learnable parameters of the model,
\begin{equation}
	F(\w,\x) = \sum_{c \in [C]} \sum_{p \in [P]} \psi(\w_c \cdot \x_p) \,,
	\label{eq:model}
\end{equation}
where $\psi$ is a non-linear activation function defined below:

\begin{minipage}[c]{\linewidth}
\centering
	{\begin{minipage}[t][][b]{.35\linewidth}
		\smallskip\noindent
		\begin{flalign*}
			\psi(z)&=\left\{\!\!\begin{array}{ll}\sign(z)\cdot\frac{1}{q}|z|^q \!\!\!&\text{if }|z|\le 1\\
			z-\frac{q-1}{q} \!\!\!&\text{if }z\ge1\\
			z+\frac{q-1}{q} \!\!\!&\text{if }z\le1\\
			\end{array}\right.&
		\end{flalign*}
	\end{minipage}}\quad
	{\begin{minipage}[t][][b]{0.2\linewidth}
			\includegraphics[width=\linewidth]{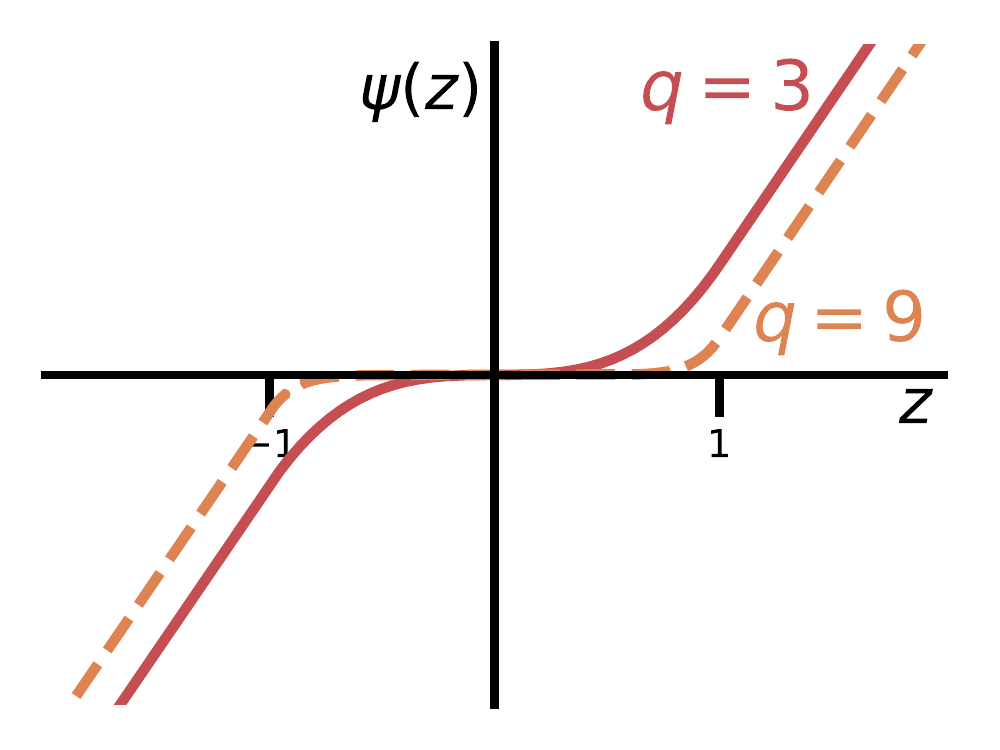}
	\end{minipage}}
\end{minipage}\newline

Our activation is a smoothed version of symmetrized ReLU with a fixed bias $\phi(z)=\text{ReLU}(z+1)-\text{ReLU}(-z-1)$. In fact, as $q\to \infty$, $\psi\to \phi$.
Note that since we do not train the second layer weights, we choose an odd-function as activation to ensure that the outputs can be negative.\newline

\noindent Consider the following logistic loss over the training dataset $\Dtr=\left\{ (\x^{(i)},y^{(i)}),i\in[n]\right\}$: 
	$L(\w)=\frac{1}{n}\sum_{i=1}^n\ell(y^{(i)}F(\w,{\x}^{(i)})),${ where } $\ell(z)=\log(1+\exp(-z)).$%
We learn the model using gradient descent on the above loss with step size $\eta$, \ie for $c\in[C]$, the weights $\w_c$ at  time step $t$ are given by
$\w_{c}(t)=\w_{c}(t-1)- \frac{\eta}{n} \sum_{i=1}^{n}y^{(i)}\ell'(y^{(i)}F(\w(t),\x^{(i)}))\nabla F(\w(t),\x^{(i)}).$\newline

\noindent The following lemma summarizes the conditions at Gaussian initialization $\w(0)=\{\w_c(0)\sim\cN(0,\sigma_0^2I_d):c\in[C]\}$.
\begin{restatable}{lemma}{ginitlemma}[$\Ginit$-conditions]\label{lem:init}
	Consider $n$ i.i.d. samples $\Dtr=\crl{(\x^{(i)},y^{(i)}):i\in[n]}$ from the distribution in Definition~\ref{def:data}. Let the parameters $\w$ of  the network in \eqref{eq:model} be initialized as $\w_c(0)\sim\cN(0,\sigma_0^2I_d)$ $\forall\,c\in[C]$.
	If the number of channels is $C=\Omega(\log{d})$, then with  probability greater than $1-O(\frac{n^2KC}{\poly(d)})$, the following conditions hold :
	\begin{enumerate}
		\item \textit{Feature-vs-parameter:} $\forall \, {k\in[K]}$,  $\max\limits_{c\in[C]} \w_{c}{(0)}\cdot\v_k \geq\Omega(\sigma_{0})$, and $\max\limits_{c\in[C]}\,|\w_{c}{(0)}\cdot\v_k|\leq\tO\left(\sigma_{0}\right).$
		\item \textit{Noise-vs-parameter:}$\forall \, {i\in[n]}$,  $\max\limits_{c\in[C]} \w_{c}{(0)}\cdot y^{(i)}\noise^{(i)} \geq\tom\left(\sigma_{0}\sigma_{\noisesubscript}\right)$, and $\max\limits_{c\in[C]}|\w_{c}{(0)}\cdot \noise^{(i)}|\leq\tO\left(\sigma_{0}\sigma_{\noisesubscript}\right).$
		\item \textit{Noise-vs-noise:} $\forall \, {i\in[n]}$, $\noise^{(i)}\cdot \noise^{(i)}=\Theta(\sigma^2_{\noisesubscript})$ and $\forall\,{i,j\in[n],i\neq j}$, $|\noise^{(i)}\cdot\noise^{(j)}|\leq\tO({\sigma_{\noisesubscript}^{2}}/{\sqrt{d}})$.
		\item \textit{Feature-vs-noise:} $\forall \, {i\in[n],k\in[K]}$, $|\noise^{(i)}\cdot\v_k|\leq\tO(\sigma_{\noisesubscript}/{\sqrt{d}})$.
		\item \textit{Parameter norm:} $\forall \, {c\in[C]}$, $\norm*{\w_{c}{(0)}} =\Theta(\sigma_{0}\sqrt{d})$.
	\end{enumerate}
\end{restatable}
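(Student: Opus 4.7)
The lemma collects five high-probability statements, each of which reduces to standard Gaussian or chi-squared concentration, so the plan is to verify each item in isolation and conclude by a single union bound over the $O(n^2 KC)$ underlying events. Throughout, I will use that each $\w_c(0)$ has i.i.d.\ $\mathcal{N}(0,\sigma_0^2)$ coordinates, each $\noise^{(i)}$ has i.i.d.\ $\mathcal{N}(0,\sigma_\noisesubscript^2/d)$ coordinates, and all of these are mutually independent across $c$ and $i$, and independent of the fixed orthonormal features $\v_k$.

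I would start with the simplest items. For item~5, $\|\w_c(0)\|^2$ is $\sigma_0^2$ times a $\chi^2_d$ variable, so Laurent--Massart gives $\|\w_c(0)\|^2 = \Theta(\sigma_0^2 d)$ with probability $1-e^{-\Omega(d)}$. Item~3 follows analogously: $\|\noise^{(i)}\|^2$ is $(\sigma_\noisesubscript^2/d)\chi^2_d$, giving $\Theta(\sigma_\noisesubscript^2)$, while for $i\neq j$, conditioning on $\noise^{(j)}$ makes $\noise^{(i)}\cdot\noise^{(j)}$ a centered Gaussian of variance $(\sigma_\noisesubscript^2/d)\|\noise^{(j)}\|^2 = \Theta(\sigma_\noisesubscript^4/d)$, so a Gaussian tail bound yields $|\noise^{(i)}\cdot\noise^{(j)}|\le \tO(\sigma_\noisesubscript^2/\sqrt{d})$. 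For item~4, since $\v_k$ is unit-norm and deterministic, $\noise^{(i)}\cdot\v_k\sim\mathcal{N}(0,\sigma_\noisesubscript^2/d)$, and a Gaussian tail bound at level $\sqrt{\log d}$ standard deviations gives the $\tO(\sigma_\noisesubscript/\sqrt{d})$ bound.

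The upper bounds in items~1 and~2 are obtained by the same idea combined with a union bound over channels. For item~1, $\w_c(0)\cdot\v_k \sim \mathcal{N}(0,\sigma_0^2)$ by orthonormality, so a Gaussian tail bound gives $|\w_c(0)\cdot\v_k|\le \tO(\sigma_0)$ uniformly in $c$. For item~2, conditioning on $\noise^{(i)}$ we have $\w_c(0)\cdot\noise^{(i)}\mid \noise^{(i)} \sim \mathcal{N}(0,\sigma_0^2\|\noise^{(i)}\|^2)$; by item~3 the conditional variance is $\Theta(\sigma_0^2\sigma_\noisesubscript^2)$ on the good event, so a Gaussian tail bound plus a union bound over $c\in[C]$ yields $|\w_c(0)\cdot\noise^{(i)}|\le \tO(\sigma_0\sigma_\noisesubscript)$.

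The most delicate parts are the lower-tail maxima in items~1 and~2, where we must ensure that \emph{some} channel has inner product of order $\sigma_0$ (respectively $\sigma_0\sigma_\noisesubscript$) with the target direction. Here I would use channel independence: since $\w_c(0)\cdot\v_k$ is a centered Gaussian of standard deviation $\sigma_0$, there is a constant $p_0>0$ with $\Pr[\w_c(0)\cdot\v_k\ge c_0\sigma_0]\ge p_0$ for some absolute $c_0>0$, so by independence across $c$ the probability that all $C$ channels fall below this threshold is at most $(1-p_0)^C = 1/\poly(d)$ once $C=\Omega(\log d)$. For item~2 the same argument applies conditionally on $\noise^{(i)}$, using that on the good event of item~3 the conditional standard deviation is $\Omega(\sigma_0\sigma_\noisesubscript)$; the resulting $\tom$ absorbs a $\sqrt{\log d}$ factor. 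Combining all these estimates through a union bound over the at most $O(n^2KC)$ events (pairs $(i,j)$ for item~3, pairs $(i,k)$ for item~4, pairs $(c,k)$ for item~1, pairs $(c,i)$ for item~2, and channels for item~5), each of which fails with probability $1/\poly(d)$, yields the stated overall failure probability of $O(n^2KC/\poly(d))$. No single step is a real obstacle; the main care is just in checking that the $\log d$ factors and conditioning structure line up correctly so that the $\tO/\tom$ notation absorbs them.
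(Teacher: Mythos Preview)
Your proposal is correct and follows essentially the same approach as the paper: each condition is reduced to a Gaussian tail bound, a chi-squared (Laurent--Massart) concentration, or a Gaussian inner-product concentration, with the lower bounds on maxima obtained from independence across channels and the assumption $C=\Omega(\log d)$, and the final statement obtained by a union bound over the $O(n^2KC)$ events. The paper packages these as named lemmas (Gaussian correlation, Gaussian tail, $\chi^2$ norm concentration), but the underlying arguments and the conditioning structure you describe are identical.
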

\noindent The above lemma proved in the Appendix~\ref{app:initialization} follows from standard Gaussian concentration bounds. Further, we can show that $\Ginit$ also hold for the augmented dataset $\Dtraug$ even though the samples in $\Dtraug$ are not i.i.d.
\begin{restatable}{sublemma}{ginitsublemma}\label{lem:init-sub} 
	$\Ginit$ in Lemma~\ref{lem:init}  also holds for $\Dtraug$ defined in \eqref{eq:augment} with $n$ replaced by $nK$.
\end{restatable}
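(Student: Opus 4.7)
The plan is to verify each of the five $\Ginit$ conditions for $\Dtraug$ by a union bound over the $nK$ augmented samples, exploiting the structural properties of the permutations $\T_k$. Conditions 1 (Feature-vs-parameter) and 5 (Parameter norm) concern only $\w(0)$ and the fixed feature vectors $\{\v_k\}_{k}$, make no reference to the dataset, and are already established by Lemma~\ref{lem:init}. For the remaining three conditions I would use two structural facts: (i) each $\T_k$ is a coordinate permutation, hence an orthogonal map that preserves the distribution of spherically symmetric Gaussians; and (ii) by the paper's assumption that every $\T_k$ ($k\neq 0$) is fixed-point free, together with its natural extension that $\T_k^{-1}\T_{k'}$ is also a derangement for all distinct $k,k'\in\{0,\ldots,K-1\}$ (which holds for the explicit cyclic-shift construction described above).

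For the \emph{Feature-vs-noise} condition I would write $\T_k(\noise^{(i)})\cdot \v_{k'} = \noise^{(i)}\cdot \T_k^{-1}(\v_{k'})$ and observe that $\T_k^{-1}(\v_{k'})$ is again a standard basis vector, so the bound reduces directly to the original Feature-vs-noise statement for $\noise^{(i)}$. For the \emph{Noise-vs-parameter} condition, the identity $\w_c(0)\cdot \T_k(\noise^{(i)}) = \T_k^{-1}(\w_c(0))\cdot \noise^{(i)}$ shows that, conditional on $\w_c(0)$, the inner product is a centered Gaussian with variance $\tilde{\Theta}(\sigma_0^2 \sigma_\xi^2)$, so the one-sample Gaussian tail bound applies unchanged; a union bound over $C\cdot nK$ pairs only adds a polylog overhead.

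The main technical step is the \emph{Noise-vs-noise} condition, where for distinct augmented indices $(i,k)\neq (j,k')$ I must control
\[
	\T_k(\noise^{(i)})\cdot \T_{k'}(\noise^{(j)}) \;=\; \noise^{(i)}\cdot (\T_k^{-1}\T_{k'})(\noise^{(j)}).
\]
When $i\neq j$, independence of $\noise^{(i)}$ and $\noise^{(j)}$ together with orthogonal invariance reduces the problem to the case already handled in Lemma~\ref{lem:init}. When $i=j$ but $k\neq k'$, the expression becomes $\sum_{\ell=1}^d \noise^{(i)}_\ell \noise^{(i)}_{\pi(\ell)}$ for $\pi=\T_k^{-1}\T_{k'}$; since $\pi$ is a derangement, each summand is a product of two independent $\cN(0,\sigma_\xi^2/d)$ variables, hence mean zero with variance $O(\sigma_\xi^4/d^2)$, and a Hanson--Wright (or direct sub-exponential Bernstein) bound yields the target $\tO(\sigma_\xi^2/\sqrt d)$ with failure probability $1/\poly(d)$. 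A union bound over $(nK)^2$ pairs then gives the total failure probability claimed.

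The main obstacle is precisely this $i=j$, $k\neq k'$ case: it is the only place where the deterministic coupling among augmented samples materially enters, and the bound rests on verifying the derangement property for every composition $\T_k^{-1}\T_{k'}$. This is a small combinatorial check for the cyclic-shift construction on the first $K$ coordinates, and can be arranged on the remaining $d-K$ coordinates by choosing a compatible fixed-point-free family of permutations; everything else follows from standard concentration plus the orthogonality of $\T_k$.
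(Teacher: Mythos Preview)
Your outline is correct and matches the paper's proof on all the easy parts: conditions~(1) and~(5) are dataset-independent; conditions~(2), (4), and the diagonal of~(3) follow because each $\T_k(\noise^{(i)})$ is marginally $\cN(0,\sigma_\xi^2 I_d/d)$; and the off-diagonal of~(3) with $i\neq j$ reduces to independent Gaussians. The substantive difference is how you handle the same-sample cross-term $\T_k(\noise^{(i)})\cdot\T_{k'}(\noise^{(i)})$ for $k\neq k'$. The paper does this by a combinatorial three-coloring of the cycle decomposition of the permutation: it partitions the coordinates of $\noise$ into three subvectors $\noise',\noise'',\noise'''$ (with matching pieces $\tilde\noise',\tilde\noise'',\tilde\noise'''$ of the permuted vector) so that within each piece the two factors have disjoint support and are therefore independent, and then applies the elementary Gaussian-correlation lemma three times. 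Your Hanson--Wright route is a cleaner, more standard alternative: writing the term as $\noise^\top P\,\noise$ for the permutation matrix $P$, one has $\E[\noise^\top P\,\noise]=0$ (zero diagonal), $\|P\|_F^2=d$, $\|P\|_{\mathrm{op}}=1$, and the $\tO(\sigma_\xi^2/\sqrt d)$ bound drops out directly. Both arguments need precisely the derangement property you isolate; the paper invokes it implicitly via the distributional reduction $\T_k(\noise)\cdot\T_{k'}(\noise)\overset{d}{=}\noise\cdot\T_{k'-k}(\noise)$, which tacitly assumes the $\T_k$ are powers of a single generator (as in the cyclic-shift construction), whereas you make the assumption on $\T_k^{-1}\T_{k'}$ explicit.

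One small correction: your parenthetical ``direct sub-exponential Bernstein'' alternative does not work as stated, because the summands $\noise_\ell\,\noise_{\pi(\ell)}$ are not independent across $\ell$ (adjacent terms in a cycle of $\pi$ share a coordinate). Hanson--Wright handles this dependence for you; a plain Bernstein bound on the individual products would not. Stick with Hanson--Wright, or, if you prefer an elementary argument, use the paper's cycle-coloring trick.
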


\subsection{Clarification on capacity in this model}
 We now informally discuss the size of our model class in the context of our data distribution. Consider the convolutional model \eqref{eq:model} with $C=1$ and say $\alpha=0$ for sake of simplicity in the data distribution. Using $\w_1=\w^{\texttt{gen}}=\gamma \sum_{k=1}^K \v_k$ for some large $\gamma>0$ will yield excellent training and test error. This is a model that would ``generalize". On the other hand for a fixed training set $\crl{(\x^{(i)}, y^{(i)})}_{i \in [n]}$,  one could also obtain almost perfect training error by using $\w_1=\w^{\texttt{overfit}}= \gamma \sum_{i=1}^n y^{(i)} \noise^{(i)}$, whenever $\sigma_\noisesubscript$ and $d\gg n$ (noise components $\crl{\noise^{(i)}}_{i \in [n]}$ are near orthonormal). Indeed with high probability, $\forall_{i\in[n]},\;y^{(i)} f(\w^{\texttt{overfit}},\x^{(i)})=y^{(i)}\sum_{p \in [P]} \psi(\w^{\texttt{overfit}} \cdot \x^{(i)}_p)$ is exactly
\begin{equation*}
\psi\prn[\big]{\gamma\sigma_{\noisesubscript}^2(1+\tO(\sqrt{n/d}))} + \psi\prn[\big]{\gamma \sigma_{\noisesubscript} O(\sqrt{n/d})}= \gamma\sigma_\noisesubscript^{2} (1+o(1)) \,.\newline
\end{equation*}

In other words the model with $\w^{\texttt{overfit}}$ will almost perfectly memorize the training set, while on the other hand it is clear that it will completely fail to generalize. This shows that the model class is large enough so that any classical measure of complexity, \eg Rademacher complexity, would fail to predict generalization (even data-dependent Rademacher complexity where the $\x^{(i)}$ follow our data distribution). In fact, our arguments below show that gradient descent could lead to a model of the form $\w^{\texttt{overfit}}$ in a Rademacher complexity setting (\ie with random label $y^{(i)}$ independent of the inputs $\x^{(i)}$). Thus, even restricting to models reached by gradient descent would still yield a high Rademacher complexity. This phenomenon has also been empirically observed in practical neural networks \cite{neyshabur2015search,zhang2021understanding}, and shown theoretically in simpler models in \cite{nagarajan2019uniform}. Thus, we are in a case where not only do we need to leverage the fact that we are using gradient descent to prove generalization, but we also  need to use the specific target function  (\ie the relation between $y$ and $\x$) that we are working with.

\subsection{Our argument in a nutshell} \label{sec:nutshell}
At a high level we show that there is a cutoff point in the features, denote it $K_{\texttt{cut}}$, such that running gradient descent on the above architecture and data distribution will lead to a model which is essentially a mixture of parts of $\w^{\texttt{gen}}$ and parts of $\w^{\texttt{overfit}}$ described above. Roughly it will be:
\begin{equation} \label{eq:mixture}
\sum_{k \leq K_{\texttt{cut}}} \v_k + \sum_{i : k^*_i > K_{\texttt{cut}}} y^{(i)} \noise^{(i)} \,.
\end{equation}
In words, the frequent enough features will be learned, and the data points that correspond to infrequent enough features will be memorized through their noise component. Quite naturally, this cutoff point will be decreasing with the magnitude of the noise $\sigma_{\noisesubscript}$, \ie the bigger the noise the fewer features will be learned. %
While this argument also holds for gradient descent dynamics on linear models, the cutoff point $K_{\texttt{cut}}$ of linear models can be higher than that of the non-linear models, which shows that non-linear models can memorize through the noise component at a higher SNR (see Section~\ref{sec:whatorder} for the exact cutoff point). \newline 

Where data augmentation will come in is that it can effectively change the frequency of the features, and in the extreme case we consider to make them all equal,\ie all with frequency $1/K$. We then show that there exists a setting of the parameters such that frequency $1/K$ is learned at noise magnitude $\sigma_\xi$, so that with data augmentation all the features are learned.

\subsection{Linear and tensor models}
Before diving into the dynamics of gradient descent for our neural network architecture and data distribution, let us expand briefly on linear models. In Appendix~\ref{app:linear} we study the max-$\ell_2$ margin linear classifier for our data, but for sake of simplicity we consider here an even more basic predictor that is specifically tailored to our data distribution:
$
\bar{\vtheta} := \frac{1}{n} \sum_{i=1}^n \sum_{p \in [P]} y^{(i)} \x_p^{(i)} \,.
$
Note that $\bar{\vtheta}$ is a linear function on $\R^d$, and we naturally extend it to the domain $\R^{d \times P}$ of our data points (with slight overloading of notation) as $\bar{\vtheta}(\x) = \sum_{p \in [P]} \bar{\vtheta} \cdot \x_p$. Compared to a gradient descent learned model, it is not clear whether this predictor is meaningful beyond our specific data distribution, and we emphasize that we study it merely as a shortest path to get quantitative estimates for the discussion in Section \ref{sec:nutshell} (\eg for the cutoff point and for the SNR of interest). In fact the (gradient descent learnable) max margin linear classifier has even better properties than the estimator $\bar{\w}$, see the Appendix~\ref{app:linear} for more details.

\paragraph{Derivation of a cutoff point.}
It is easy to check that with our data distribution we have $\bar{\vtheta} = \bar{\vtheta}_S + \bar{\vtheta}_N$ where $\bar{\vtheta}_S = \sum_{k=1}^K \rho_k \v_k$ (say the fraction of examples of type $k$ is exactly $\rho_k$) and $\bar{\vtheta}_N = \frac{1}{n} \sum_{i=1}^n y^{(i)} \noise^{(i)}$ (assume $\alpha =0$ for this discussion). In particular for $\x$ sampled from our distribution, we have with high probability $|\bar{\vtheta}_N(\x)|  \simeq \frac{\sigma_{\xi}^2}{\sqrt{n d}}$ and $\bar{\vtheta}_S(\x) \simeq \rho_k y$ if $\x$ is of type $k$. This means that the predictor $\bar{\vtheta}$ has successfully learned feature $\v_k$ iff $\rho_k > \frac{\sigma_{\xi}^2}{\sqrt{n d}}$. In other words for this linear model the cutoff frequency is at $\rho_{\mathrm{cut}} = \frac{\sigma_{\xi}^2}{\sqrt{n d}}$. With a small leap of faith (related to the fact that after data augmentation the noise terms are no longer i.i.d., which we show to be not a  in our proof of non-linear model) we can see that as long as this cutoff frequency is smaller than $\frac{1}{\sqrt{K}}$, data augmentation would enable full learning of all the views, since in that case the post-augmentation frequencies $\frac{1}{K}$ are larger than the cutoff frequency with $n$ replaced by $nK$, \ie $\frac{1}{K} \gg \frac{\sigma_{\xi}^2}{\sqrt{n K d}} = \frac{\rho_{\mathrm{cut}}}{\sqrt{K}}$.

\paragraph{Effect of simple non-linearity on SNR.} The simplest type of ``non-linearity" would be to consider a tensor method for this problem (note that this is nothing but a kernel method). Specifically, let
$T = \frac{1}{n} \sum_{i=1}^n \sum_{p\in [P]} y_i \left(\x_p^{(i)}\right)^{\otimes q} \,,$
be the natural empirical tensor for this problem, for some odd $q \in \mathbb{N}$, whose domain is extended from $\R^d$ to $\R^{d \times P}$ as before, i.e., $T(\x) = \sum_{p \in [P]} T(\x_p)$. Note that this function can be realized in our architecture with a pure polynomial activation function $\psi(z) = z^q$,  see \cite{BLN21} for more on neural network memorization with tensors. Similarly to the linear case one can decompose the tensor into a signal and noise components:
\[
T = S + N, \text{ where } S= \sum_{k=1}^K \rho_k \v_k^{\otimes q}, N=\frac{1}{n} \sum_{i=1}^n y_i \left(\noise^{(i)}\right)^{\otimes q}  \,.
\]
For $\x$ sampled from our distribution, we have with high probability, $|N(\x)| \simeq \frac{\sigma_{\xi}^{2 q}}{\sqrt{n d^q}}$ and $S(\x) \simeq \rho_k y$ if $\x$ has $\v_k$ as its main feature. Thus here the cutoff frequency is at $\rho_{\mathrm{cut}}^{(q)} = \frac{\sigma_{\xi}^{2 q}}{\sqrt{n d^q}}$. In particular we see that even at high SNR, say when $\sqrt{n d} \gg \sigma_{\xi}^2 \gg \sqrt d$ (in which case $\rho_{\mathrm{cut}}^{(1)} = o(1)$) we might have $\rho_{\mathrm{cut}}^{(q)} = \Omega(1)$ for $q>1$. To put it differently, the tensor methods will overfit to the noise at a different SNR from the pure linear model would, which in turns mean that there is a different range of SNR where data augmentation will be useful for non-linear models such as tensors. We will see this story repeating itself for the gradient descent on our neural network architecture.

\paragraph{Quantitative comparison with the neural network results.} We note that the thresholds derived here are {\em better} than those we obtain via our neural network analysis (note also that the tensor method can handle $\alpha >0$ similarly to what our non-linearity allows). However we emphasize again that, on the contrary to gradient descent on neural networks, the predictors here are artificial and specifically tailored to the data distribution at hand. Furthermore the complexity of the tensor method scales up with $q$, on the contrary to the neural network dynamic. %
 \section{Overview of gradient descent dynamics}
 \label{sec:overview}
Let us do some heuristic calculation in the simple case where $\alpha = 0$ (so that effectively there are only two relevant patches in inputs,  $\x_{p^*}=y\v_{k^*}$ and $\x_{p^{\noisesubscript}}=\noise$, respectively).  
Recall that $\w_c(0)\sim\cN(0,\sigma^2_0I_d)$ and $\noise\sim \cN(0,\sigma_{\noisesubscript}^2I_d/d)$. Thus, $\E[|\w_c(0) \cdot \x_{p^*}|^2] = \sigma_0^2$ and $\E[|\w_c(0) \cdot \x_{p^{\noisesubscript}}|^2] = \sigma_0^2 \sigma_{\noisesubscript}^2$ for all channels $c$. We will initialize so that these quantities are $o(1)$, and thus $f(\w(0),\x) = o(1)$ for $(\x,y)\sim\cD$. We study the gradient flow on minimizing $f$ in this section. 
\subsection{When you really learn...}
For $f$ to correctly classify a datapoint $\x$ with feature $\v_k$, it is morally sufficient that $|\w_c \cdot \v_k|$ is of order $1$ for some channel $c$. Let us look at the dynamics starting close to initialization (when $f(\w(0),\x) = o(1)$),  %
\begin{align} 
	 \frac{d}{dt} \w_c \cdot \v_k\,\notag 
	& = -\frac{1}{n} \sum_{i \in [n]}y^{(i)}\,\ell'\big(y^{(i)} F(\w_c,\x^{(i)})\big)\left[\nabla_{\w_c} F(\w,\x^{(i)})\cdot \v_k\right] \notag \\
	& \overset{(a)}= \frac{1+o(1)}{2n} \sum_{i \in [n]} \sum_{p \in [P]} \psi'(|\w_c \cdot \x_p^{(i)}|)\, y^{(i)}\x^{(i)}_p \cdot \v_k \notag \\
	& = \frac{1+o(1)}{2n} \sum_{i \in [n]} \psi'(|\w_c \cdot \v_{k^*_i}|) \v_{k^*_i} \cdot \v_k+ \underbrace{\frac{1+o(1)}{2n} \sum_{i \in [n]}  \psi'(|\w_c \cdot \noise^{(i)}|) y^{(i)}\noise^{(i)} \cdot \v_k}_{{:=\vartheta}} \notag \\\vspace{-5pt}
	& \overset{(b)}= \frac{1+o(1)}{2} \,\rho_k\,\psi'(|{\w_c \cdot \v_k}|) + \vartheta \,, \label{eq:approx1}
\end{align}
where in $(a)$, we use $-\ell'(o(1))=1/2+o(1)$ for logistic loss $\ell$, $\psi'(z) =\psi(|z|)$ since $\psi$ is odd, and $(b)$ follows from $\{\v_{k}\}$ being orthogonal. 
\newline

\noindent\textit{If we can ignore $\vartheta$}, resulting dynamic reduces to an ODE of the form $g'(t) = \rho_k \psi'(g(t))$  (ignoring constants) with $g(0) \approx \sigma_0 = o(1)$. As long as $g(t)=\w_c(t)\cdot\v_k$ is smaller than $1$ this can be rewritten as $g'(t) = \rho_k g(t)^{q-1}$ (because of the form of $\psi$ we chose), or equivalently $(g(t)^{2-q})' = -\rho_k$ up to constants. In particular, we see that after time $t=g(0)^{2-q}/\rho_k$, we will have  $g(t)=\Theta(1)$. This suggests that by time of order $1 / (\sigma_0^{q-2} \rho_k)$ at least one channel should have learned $\v_k$\footnote{We assume $q\geq3$. For the case $q=1$ or $q=2$, the time needed is $1/(\sigma_0^{q-1}\rho_k)$. }.
\newline

\noindent\textit{When can we indeed ignore (morally) the noise term $\vartheta$?} At initialization this term is of order
$\frac{\sigma^{q-1}_0 \sigma_{\noisesubscript}^{q}}{\sqrt{n d}}$. On the other hand the ``main" term $\w_c\cdot\v_k$ in \eqref{eq:approx1} is of order $ \rho_k \sigma_0^{q-1}$. Thus we see that we need $\frac{\sigma_{\noisesubscript}^q}{\sqrt{n d}} \ll \rho_k$. In fact we will need a slightly more stringent condition, because the cancellation in $\vartheta$ leading to a scaling of $1/\sqrt{n}$ becomes more complicated to analyze after initialization due to the dependencies getting introduced. So we will use the more brutal bound $|\vartheta| \lesssim \frac{ \sigma_0^{q-1} \sigma^q_{\noisesubscript}}{\sqrt{d}}$ which in turn means we need  $\frac{\sigma^q_{\noisesubscript}}{\sqrt{d}} \ll \rho_k$.
\newline

Summarizing the above, we expect that if ${\sigma^q_{\noisesubscript}}/{\sqrt{d}} \ll \rho_k$, then by time $1/(\sigma_0^{q-2} \rho_k)$ we will have one channel that has learned the feature $\v_k$.

\subsection{... and when you overfit ...}
Another sufficient condition to correctly classify a datapoint $(\x^{(j)},y^{(j)})$ would be to overfit to its dominant noise part $\noise^{(j)}$, \ie $|\w_c \cdot \noise^{(j)}|$ is of order $1$ for some channel $c$. Here we have at initialization:
\begin{align}
	&\frac{d}{dt} \w_c \cdot \noise^{(j)} \notag  \\
	& = \frac{1+o(1)}{2n} \sum_{i \in [n]} \sum_{p \in [P]}  \psi'(|\w_c \cdot \x_p^{(i)}|) y^{(i)}\x_p^{(i)} \cdot \noise^{(j)} \notag \\
	& = \frac{1+o(1)}{2n} \Bigg(y^{(j)} \psi'(|\w_c \cdot \noise^{(j)}|) \|\noise^{(j)}\|^2 \Bigg.
+\psi'(|\w_c \cdot \v_{k_j^*}|) \v_{k^*_j} \cdot \noise^{(j)}
+ \sum_{i \neq j, p \in [P]}\psi'(|\w_c \cdot \x_p^{(i)}|)\, y^{(i)}\x_p^{(i)} \cdot \noise^{(j)}\Bigg) \notag \\
	& = \frac{(1+o(1)) \sigma^2_\noisesubscript}{2n} y^{(j)} \psi'(|\w_c \cdot \noise^{(j)}|)+ \Gamma \, \label{eq:approx2}
\end{align}
where $\Gamma$ is the last two term from the penultimate step. \newline 

\noindent \textit{Assuming $\Gamma$ can be ignored,} we can mimic the reasoning above (for $\w_c\cdot\v_k$) with $h(t)=y^{(j)}\w_c\cdot \noise^{(j)}$ and $h(0)=O(\sigma_0 \sigma_{\noisesubscript})$.  We thus expect to correctly classify a datapoint by {\em overfitting to its noise} after time $O(n / (\sigma_0^{q-2} \sigma_{\noisesubscript}^q))$.
\newline

\noindent\textit{When can we ignore the noise term $\Gamma$?} The order of $\Gamma$ is $\sigma_{\noisesubscript}^{q+1} \sigma_0^{q-1} / \sqrt{d}$ (at initialization it is in fact this times $1/\sqrt{n}$ but we ignore this improvement due to the dependencies arising through learning). On the other hand the main term in \eqref{eq:approx2} is of order $\sigma^{q+1}_\noisesubscript \sigma_0^{q-1} / n$ at initialization, so we obtain the condition $\sqrt{d} \gg n$ (which could possibly be improved to $d \gg n$ if cancellation remained correct throughout learning). \newline 

Summarizing again, if $d\gg n^2$, by time in the order of $n / (\sigma_0^{q-2} \sigma_{\noisesubscript}^q)$, we can expect the data points that were not fit before this time to be overfit using noise parameters.\newline

\subsection{... and in what order}
\label{sec:whatorder}
Let us assume $d \gg n^2$ and $\frac{\sigma^{q}_{\noisesubscript}}{\sqrt{d}} \ll \rho_k$. Then the above discussion reveals that if  $n / (\sigma^{q-2}_0 \sigma^{q}_{\noisesubscript}) \ll 1/ (\sigma_0 \rho_k) \Leftrightarrow \rho_k \ll \sigma^q_{\noisesubscript} / n$, we will not be able to learn $\v_k$ because we will overfit before learning (In fact, in this case, we do not need the condition $\frac{\sigma^{q}_{\noisesubscript}}{\sqrt{d}} \ll \rho_k$). 
This essentially gives rise to a channel filter (or a combination thereof) of the form \eqref{eq:mixture}, with the cutoff point $K_{out} =\{k: \rho_k \ll \sigma^q_{\noisesubscript} / n\}$ being now specified. 
\newline

Data augmentation can fix the order by effectively permuting the features. After data augmentation, we get the proportion of any feature to be $1/K$ and the training set size to be $nK$. Note that our data augmentation only permutes the coordinates so that the inner product between $\noise$ and $\T_k(\noise)$ should be at the same order as two independent noise. The learning process only depend on the inner product between the samples so our previous analysis still holds. Then, after data augmentation, for every view $k\in[K]$, we have $\rho_{k}^{\text{(aug)}} =1/K$. Then, as long as $ \sigma_{\xi}^q/n=o(1)$, we have $\rho_{k}^{\text{(aug)}} \gg\sigma_{\xi}^q/(nK)$ and are able to learn $\v_k$ before overfitting. 

\subsection{What about spurious features?}
In addition to overfitting noise, the model can also overfit spurious features. The spurious features can be viewed as noise vectors that appear in more than one sample. We did not prove this case formally in our main theorems for simplicity, but we will give the proof intuition here. Let $\spf \in \R^d$, $\norm{\spf} = 1$, be some spurious feature. Now assume that in addition to the dominant noise patch and the feature patch, $\spf$ appears in $1> \rho_{\spf}^{(-1)} > 0 $ fraction of the datapoints with label $y = 1$ and $\rho_{\spf}^{(-1)} < \rho_{\spf}^{(1)} $ fraction of the datapoints with label $y = -1$. We assume $\spf$ is orthogonal to the main features $\v_1, ..., \v_K$. Let $\mathcal{I}_{\spf}$ be the set of samples with $\spf$. We have at initialization:
\begin{align}
	\frac{d}{dt} \w_c \cdot \spf \notag 
	& = \frac{1+o(1)}{2n} \sum_{i \in [n]} \sum_{p \in [P]}  \psi'(|\w_c \cdot \x_p^{(i)}|) y^{(i)}\x_p^{(i)} \cdot  \spf \notag \\	
	& = \frac{1+o(1)}{2n}  \sum_{i \in \mathcal{I}_{\spf}} y^{(i)} \psi'(|\w_c \cdot \spf |) \norm{\spf}^2  + \underbrace{\frac{1+o(1)}{2n} \sum_{i \in [n]}  \psi'(|\w_c \cdot \noise^{(i)}|) y^{(i)}\noise^{(i)} \cdot\spf}_{{:=\Upsilon}}  \notag \\
	& = \frac{1+o(1) }{2n} (\rho_{\spf}^{(1)} - \rho_{\spf}^{(-1)} )\psi'(|\w_c \cdot \spf |) \norm{\spf}^2  + \Upsilon.  \, \label{eq:approx3}
\end{align}
\noindent \textit{Assuming $ \Upsilon$ can be ignored,} we can mimic the reasoning for $\w_c\cdot\v_k$ with $h(t)=\w_c\cdot \spf$ and $h(0)=O(\sigma_0)$.  We thus expect to correctly classify a datapoint in class $y =1$ with spurious feature $\spf$ by {\em overfitting to $\spf$} after time $O(n / (\sigma_0^{q-2}(\rho_{\spf}^{(1)} - \rho_{\spf}^{(-1)} )))$.
\newline

\noindent\textit{When can we ignore the noise term $ \Upsilon$?} Similar to the term $\vartheta$ in \eqref{eq:approx1}, the order of $ \Upsilon$ is $\sigma_{\noisesubscript}^{q} \sigma_0^{q-1} / \sqrt{nd}$. On the other hand the main term in \eqref{eq:approx3} is of order $ (\rho_{\spf}^{(1)} - \rho_{\spf}^{(-1)} ) \sigma_0^{q-1} $. Thus we need $\frac{\sigma^q_\xi}{\sqrt{nd}} \ll \rho_{\spf}^{(1)} - \rho_{\spf}^{(-1)} $. \newline

Summarizing above, since $\spf$ can appear in both class $y = 1$ and class $y =-1$, it should not be used as an indicator of the label $y$. However, when $\spf$ appears predominantly in one class (e.g., when $\frac{\sigma^q_\xi}{\sqrt{nd}} \ll \rho_{\spf}^{(1)} - \rho_{\spf}^{(-1)} $), the model can overfit $\spf$ and use $\spf$ to classify the datapoints.  %

\section{Main Results}

We learn the model $F(\w,\x)$ in (\ref{eq:model}) using
gradient descent with step size $\eta$ on loss $L(\w)$ in (\ref{eq:loss}). The weight $\w_{c}$, $c\in[C]$, at time step $t$ is denoted as $\w_c(t)$. 
\remove{The weight $\w_{c}$, $c\in[C]$, at time step $t$ for training on
$\Dtr=\left\{ (\x^{(i)},y^{(i)}),i\in[n]\right\} $ is given by 
\[
\w_{c}(t)=\w_{c}(t-1)-\eta\sum_{i=1}^{n}\ell'(y^{(i)}F(\w(t),\x^{(i)}))\nabla F(\w(t),\x^{(i)}).
\]}
The weight $\w_{c}(t)$ for training on $\Dtraug$ is obtained similarly,
with the samples replaced by $\Dtraug=\left\{ (\x^{(i)},y^{(i)}),i\in[Kn]\right\} $.
In addition to the assumptions we have discussed in Section \ref{sec:overview},
we make some additional assumptions for controlling the omitted quantities arising through training and testing.

\begin{assumption}
\label{assu:assume_main}We assume the following holds. For some constant $q\geq 3$,
\begin{enumerate}
\item \label{assu:views}The first view is dominant, $1\geq \rho_{1}\geq \Omega(1)$. The other views
$k\in[K]\backslash\left\{ 1\right\} $ are minor views, $n\rho_{k}\leq o\left(\sigma_{\xi}^{q}\right)$.

\item \label{assu:noise}The standard deviation of the dominant noise satisfies $\omega (1) \leq \sigma_{\xi}^{q} \leq o(n)$.
\item \label{assu:weight}The standard deviation of the weights at initialization is bounded,
$\sigma_{0}\leq o(1/\sigma_\xi)$.
\item \label{assu:nk}The number of samples and views are bounded, $ nK\leq o\left(\sigma_{0}^{q-1}\sigma_{\xi}^{q-1}d^{1/2} \right)$. 

\item \label{assu:fnoise}The feature noise satisfies, for $T=\tthe\left(\max\left\{ n\eta^{-1}\sigma_{\xi}^{-q}\sigma_{0}^{-q+2},K\eta^{-1}\sigma_{0}^{-q+2}\right\} \right)$,
\[
\omega(P^{-1})\leq\alpha\leq o\left(\eta^{-1}T^{-1}P^{-\frac{1}{q}}\sigma_{\xi}\min\{d^{-\frac{1}{2}}, \sigma_0\}\right).
\]
\end{enumerate}
\end{assumption}

Condition 1-3 in Assumption~\ref{assu:assume_main} have been explained in Section~\ref{sec:overview}. $\sigma_0\leq o(1)$ and $\sigma_0 \sigma_{\xi}\leq o(1)$ guarantee that at initialization, the main features and the dominant noise have $o(1)$ correlation with the weight. We choose $\sigma_{\xi}\geq \omega(1)$ so that without properly learning the main feature, the inner product between random initialized weights and the dominant patch can dominate the model output. Condition 4 is a more stringent version of the condition $n \ll d^{1/2}$ in Section~\ref{sec:overview} to control all the terms during training. 
  In Condition 5, we assume an upper bound on the feature noise $\alpha$. We assume the existence of feature noise only for establishing gap with linear models, so we did not optimize the upper bound on $\alpha$. It is possible the proof can go through with milder constraints on $\alpha$.\newline

An example of a set of parameters that satisfy the above assumption is 
\begin{equation*}
    \begin{split}
        &q=3,\sigma_{0}=d^{-0.15},\sigma_{\xi}=d^{0.1},n=d^{0.33},\\ & K=d^{0.06},
        \rho_{1}=\frac{1}{2},\rho_{2}=\rho_{3}=...=\rho_{K}=\frac{1}{2(K-1)},\\ &\alpha=d^{-0.95},P=d.        
    \end{split}
\end{equation*}

In Theorem $\ref{thm:withoutaug}$, we show that under the above conditions, without data augmentation,
gradient descent can find a classifier with perfect training accuracy without learning the
minor views. On the other hand, Theorem $\ref{thm:withaug}$ shows that
with data augmentation, all $k$ views can be learned without 
overfitting to noise.
\begin{restatable}[Training without data augmentation]{thm}{withoutaug}
\label{thm:withoutaug}Suppose that Assumption $\ref{assu:assume_main}$
holds. Let $\oT$ be the first time step such that $\w(\oT)$
can classify all $(\x^{(i)},y^{(i)})\in\Dtr$ with constant margin,
\ie, 
\[
y^{(i)}F(\w(\oT),\x^{(i)})\geq\tom(1), \text{ \;\;for all (\ensuremath{\x^{(i)}},\ensuremath{y^{(i)}})\ensuremath{\in\Dtr}}.
\]
For hidden channel number $C=\Theta(\log d)$, and small step size
$\eta$, with
probability at least $1-O(\frac{n^2K}{\poly(d)}),$ $\text{\ensuremath{\oT}}=\tthe\left(n\eta^{-1}\sigma_{\xi}^{-q}\sigma_{0}^{-q+2}\right)$.
Moreover, at time step $\oT$, views $\v_{2},\ldots,\v_{K}$ have never
been learned, so that $\forall_{0\leq t\leq\oT}$,
\[
\Pr_{(\x,y)\sim\D}\left[yF(\w(t),\x)<0\right]\geq\left(\frac{1}{2}-O\left(\frac 1 {\sqrt{C}}\right)\right)\sum_{k=2}^{K}\rho_{k}.
\]
\end{restatable}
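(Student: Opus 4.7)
The plan is to track inductively the key inner products through the gradient iterations, following the heuristic ODEs of Section~\ref{sec:overview}. For each $t$, I would maintain (i) the \emph{signal coordinates} $A_k(t) := \max_{c \in [C]} \w_c(t) \cdot \v_k$ for $k \in [K]$; (ii) the \emph{noise-memorization coordinates} $B_i(t) := \max_{c} y^{(i)}\,\w_c(t) \cdot \noise^{(i)}$ for $i \in [n]$; and (iii) off-channel and cross-sample quantities such as $|\w_c(t) \cdot \noise^{(j)}|$ for non-winning channels and $|\noise^{(i)}\cdot\noise^{(j)}|$, which Lemma~\ref{lem:init} already controls at $t=0$. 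At each step the gradient update matches \eqref{eq:approx1}--\eqref{eq:approx2} up to perturbations of order $\tO(\sigma_\xi^q/\sqrt d)$ on the $\v_k$-dynamics and $\tO(\sigma_\xi^{q+1}/\sqrt d)$ on the $\noise^{(j)}$-dynamics; Assumption~\ref{assu:assume_main}(4) and (5) are designed precisely to keep these perturbations negligible relative to the leading terms all the way to $\oT$.

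Next I would split the run into phases by the first time each coordinate crosses $\Omega(1)$. Since $\rho_1 = \Omega(1)$, the reduced ODE $g' \approx \rho_1 g^{q-1}$ yields $A_1(t) = \Omega(1)$ by $t = \tthe(\eta^{-1}\sigma_0^{-q+2})$, after which $\v_1$ enters the linear regime of $\psi$ and every type-1 training sample has margin $\tom(1)$. For each point $(\x^{(i)},y^{(i)})$, the noise-memorization ODE $h' \approx (\sigma_\xi^2/n) h^{q-1}$ pushes $B_i(t)$ to $\Omega(1)$ at $t = \tthe(n\eta^{-1}\sigma_\xi^{-q}\sigma_0^{-q+2})$, which is the claimed $\oT$. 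By Assumption~\ref{assu:assume_main}(1), $n\rho_k \leq o(\sigma_\xi^q)$ for $k \geq 2$, so the learning time $\tthe(\eta^{-1}\sigma_0^{-q+2}\rho_k^{-1})$ for a minor view is strictly later than $\oT$. Hence at $\oT$ every sample is correctly classified with margin $\tom(1)$---type-1 via $\v_1$, and minor-view samples via overfitting to their own $\noise^{(i)}$---and $\oT$ is the first such time because no earlier mechanism covers the minor-view samples.

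To show that $\v_k$ for $k \geq 2$ is \emph{never} learned in $[0,\oT]$, I apply the same ODE argument per channel: since $\rho_k \ll \sigma_\xi^q/n$, integrating $g' \approx \rho_k g^{q-1}$ up to $\oT$ gives $\max_c|\w_c(\oT)\cdot\v_k| \leq o(1)$. For a fresh test sample $(\x,y)$ of type $k \geq 2$, the feature-patch contribution $\sum_c\psi(\w_c(\oT)\cdot y\v_k)$ is therefore $o(1)\cdot y$, the background-patch contribution is $o(1)$ by Assumption~\ref{assu:assume_main}(5), and the noise-patch contribution $G := \sum_c \psi(\w_c(\oT) \cdot \noise)$ depends on the trained weights only through the directions $\crl{\noise^{(i)}}$ and is thus distributed symmetrically in $y$ (the test label $y$ never enters the fresh Gaussian $\noise$). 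Writing $yF(\w(\oT),\x) = yG + o(1)$, we have $\Pr[yF<0] = \tfrac{1}{2}\prn{1-\Pr[|G|\leq o(1)]}$. A Berry--Esseen-type anti-concentration on the sum of $C$ approximately independent symmetric channel contributions (all of comparable variance by the $\Ginit$-conditions) gives $\Pr[|G|\leq o(1)] \leq O(1/\sqrt C)$, hence the per-view bound $1/2 - O(1/\sqrt C)$. Summing over $k \geq 2$ weighted by $\rho_k$ yields the stated bound; the same estimate at any earlier $t$ follows since $A_k(t)$ is only smaller.

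The main obstacle I foresee is controlling the cross terms across the full window $[0,\oT]$, which is much longer than the initialization-only analysis of Section~\ref{sec:overview}. Specifically, the perturbation $\vartheta$ in \eqref{eq:approx1} is a sum of $n$ correlated contributions once $\ell'(y^{(i)}F(\w,\x^{(i)}))$ departs from $1/2$, so the initial $1/\sqrt n$ cancellation is no longer available; I would have to bootstrap a stronger inductive bound of the form $|\w_c(t)\cdot \noise^{(j)}| = \tO(\sigma_0\sigma_\xi)$ for every channel except a single designated ``winner'', then use $|\noise^{(i)}\cdot \noise^{(j)}|\leq \tO(\sigma_\xi^2/\sqrt d)$ to keep the cross-channel mixing at the $\tO(\sigma_\xi^q/\sqrt d)$ scale throughout training. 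A parallel difficulty is that the feature-noise patches $-\alpha_p y\v_{k_p}$ create a systematic bias in the $\v_k$-dynamics, which is why Assumption~\ref{assu:assume_main}(5) upper-bounds $\alpha$ by an $\eta^{-1}\oT^{-1}$-scaled quantity---just small enough for the total drift over $\oT$ steps to remain below the signal scale.
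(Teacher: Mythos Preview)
Your proposal is correct and follows essentially the same approach as the paper: inductively track the feature and noise inner products (packaged in the paper as Lemmas~\ref{lem:v_bd_all}--\ref{lem:xii_bd}), compare the resulting ODE time scales to locate $\oT$, and apply Berry--Esseen anti-concentration (Lemma~\ref{lem:guassianq}) for the test-error lower bound. The one step where your sketch is looser than the paper is the anti-concentration: you treat the $C$ channel contributions $\psi(\w_c(\oT)\cdot\noise)$ as ``approximately independent,'' but after training the $\w_c(\oT)$ are correlated; the paper resolves this by first proving $|\w_c(t)\cdot\noise - \w_c(0)\cdot\noise|\le o(\sigma_0\sigma_\xi)$ for a fresh test noise $\noise$ (Lemma~\ref{lem:w_randxi}), thereby reducing the anti-concentration to $t=0$ where the channels genuinely are independent Gaussian projections.
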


\begin{restatable}[Training with data augmentation]{thm}{withaug}
\label{thm:withaug}Suppose assumption $\ref{assu:assume_main}$
holds. Let $\oTaug$ be the first time step such that $\w(\oTaug)$
can classify all $(\x^{(i)},y^{(i)})\in\Dtraug$ with constant margin, \ie
\[
y^{(i)}F(\w(\oTaug),\x^{(i)})\geq\tom(1),\text{ \;\;for all (\ensuremath{\x^{(i)}},\ensuremath{y^{(i)}})\ensuremath{\in\Dtraug}}.
\]
For hidden channels number $C=\Theta(\log d)$, and small step size
$\eta$, with
probability at least $1-O(\frac{n^2K^3}{\poly(d)})$, 
$\oTaug=\tthe\left(K\eta^{-1}\sigma_{0}^{-q+2}\right)$, and at $\oTaug$,
\[
\Pr_{(\x,y)\sim\D}\left[yF(\w(\oTaug),\x)<0\right]\leq\frac{nK}{\poly (d)}.
\]
\end{restatable}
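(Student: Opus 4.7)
The plan is to mirror the argument of Theorem~\ref{thm:withoutaug} on the augmented dataset $\Dtraug$, exploiting two structural differences: every feature now has effective frequency $1/K$ (rather than the possibly tiny $\rho_k$), and the effective training size is $nK$. First I would invoke Lemma~\ref{lem:init-sub} to obtain $\Ginit$ on $\Dtraug$ with $n$ replaced by $nK$. The key point here is that the permutations $\T_k$ have no fixed coordinate in the basis extended from $\{\v_k\}$, so the augmented noise patches $\T_k(\noise^{(i)})$ inherit the near-orthogonality, feature-vs-noise, and parameter-vs-noise bounds from the i.i.d.\ originals. Consequently I may treat the effective view distribution over $\Dtraug$ as uniform, $\rho_k^{\text{aug}} = 1/K$ for every $k \in [K]$.

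Next I would carry out the feature-learning induction. For each $k \in [K]$, pick a champion channel $c_k \in \arg\max_c \w_c(0) \cdot \v_k$, which by $\Ginit$ satisfies $\w_{c_k}(0) \cdot \v_k \geq \Omega(\sigma_0)$. Specializing \eqref{eq:approx1} to $\Dtraug$, the gradient step on $g(t) := \w_{c_k}(t) \cdot \v_k$ has main term proportional to $(1/K)\,\psi'(g(t))$, and the ODE analysis of Section~\ref{sec:overview} gives that $g$ reaches $\Theta(1)$ by time $\oTaug = \tthe(K \eta^{-1} \sigma_0^{-q+2})$. The delicate part is verifying that the noise term $\vartheta$, the cross-terms from other features appearing in background patches (each scaled by $\alpha_p$), and the feature-noise contributions are all dominated by $(1/K)\,\psi'(g)$ throughout: the sample-dimension gap $nK \ll \sigma_0^{q-1}\sigma_\xi^{q-1}\sqrt{d}$ (condition 4 of Assumption~\ref{assu:assume_main}) yields $|\vartheta| = o(\sigma_0^{q-1}/K)$, and the upper bound on $\alpha$ (condition 5) ensures the cumulative feature-noise drift over $\oTaug$ steps stays $o(1)$.

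Simultaneously I would run the non-overfitting induction, showing $\max_c |\w_c(t) \cdot \noise^{(i)}| = o(1)$ for every $i$ indexing an augmented sample and every $t \leq \oTaug$. By \eqref{eq:approx2}, the driving term for a noise correlation is of order $(\sigma_\xi^2/(nK))\,\psi'(\cdot)$, so overfitting would require time $\tthe(nK\eta^{-1}\sigma_\xi^{-q}\sigma_0^{-q+2})$, which exceeds $\oTaug$ by the ratio $n/\sigma_\xi^q = \omega(1)$ (condition 2 of Assumption~\ref{assu:assume_main}); the cross-sample error $\Gamma$ is controlled by the same $d \gg (nK)^2$ condition. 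With both invariants in hand, the training margin at $\oTaug$ on any $(\x^{(i)},y^{(i)}) \in \Dtraug$ comes from the champion channel for the feature patch: $y^{(i)} F(\w(\oTaug), \x^{(i)}) \geq \psi(g(\oTaug)) - o(1) = \tom(1)$.

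For the test-error bound I take a fresh $(\x, y) \sim \cD$. Since all $K$ features are learned, the champion channel for $\v_{k^*}$ contributes $\Theta(y)$ to $F(\w(\oTaug), \x)$, while every other channel-patch pair contributes $o(1)$ except on a Gaussian-tail failure event: $\w_c(\oTaug) \cdot \noise$ concentrates at scale $\|\w_c(\oTaug)\|\sigma_\xi/\sqrt{d}$, and a separate parameter-norm induction (tracking growth only in the $K$ feature directions) keeps this scale $o(1)$. A union bound over the $C = \Theta(\log d)$ channels and $P$ patches of the test example, combined with the analogous concentration for the feature-noise patches, yields the claimed test-error rate $nK/\poly(d)$. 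The principal obstacle is the coupled induction in the middle two paragraphs: the loss derivative $\ell'$ links every feature correlation and every noise correlation through the current logits, so each invariant must be re-verified at each gradient step under perturbations caused by the others, and one must further show that the deterministic permutations $\T_k(\noise^{(i)})$ behave like fresh Gaussians for the purpose of these cross-term bounds even though they are not independent of $\noise^{(i)}$.
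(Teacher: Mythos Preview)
Your proposal is correct and follows essentially the same route as the paper: invoke Lemma~\ref{lem:init-sub} for $\Ginit$ on $\Dtraug$, run the feature-learning dynamics (the paper's Lemma~\ref{lem:vphase1}) with $\rho_k^{\text{(aug)}}=1/K$ and sample size $nK$, use the noise upper bound (Lemma~\ref{lem:xii_bd}) together with $n\gg\sigma_\xi^q$ to rule out overfitting before $\oTaug$, and appeal to persistence (Lemma~\ref{lem:wxidecrease}) for the margin. The one place you deviate is the test-error step: the paper controls $\langle\w_c(\oTaug),\vxi\rangle$ for a fresh $\vxi$ by tracking $\langle\w_c(t),\vxi\rangle-\langle\w_c(0),\vxi\rangle$ directly through the gradient updates (Lemma~\ref{lem:w_randxi}), rather than via a parameter-norm bound; this is a bit cleaner and avoids having to separately bound growth in the $nK$ noise directions, which your phrase ``tracking growth only in the $K$ feature directions'' glosses over but would in fact be needed for the norm argument to go through.
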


\begin{rem}
In Theorem $\ref{thm:withoutaug}$ and Theorem $\ref{thm:withaug}$,
we evaluate the testing accuracy at the earliest time step $T$ when the trained neural network with weights $\w(T)$ can classify all samples in the training set $\Dtr$ with a constant margin. Our result does not rule
out the possibility that if trained longer than $\bar{T}$, the
network can learn the minor views as well. However, we should expect
the gradients on the training set stay small after the network can
classify all sample correctly. The main reason we assume an upper
bound on $\eta T$ is when training too long, the norm of the weights
$\w$ can blow up. One possible strategy to avoid
such upper bound on $\eta T$ is to add weight decay to the gradient
descent algorithm in training. 
\end{rem}

\begin{rem}
For simplicity of the proof, we only keep track of the channel with
the maximum correlation with the main feature or the noise, $\arg\max_{c\in[C]}\w_{c}(t)\cdot\v_{k}$
and $\arg\max_{c\in[C]}y\w_{c}(t)\cdot\vxi$. For the other channels,
we only give a rough bound on their correlation. For this reason,
we assume the number of channels is $C=\Theta(\log d)$ so that the
output is dominated by the channel with the maximum correlation. To
extend the result to higher number of channels, such as polynomial
in $d$, we need to keep track of all channels and scale the output
layer by $\frac{1}{C}$ .
\end{rem}

\begin{rem}
In our model, we show that when there exists some large dominant noise, the neural network overfits to
the noise instead of learning the minor features. In practice, the model
can overfit to any vector that contributes significantly to the gradient
of the loss. For example, our proof can be extended to the case where there exists some spurious feature that
appears in sufficiently many samples. In such case, even when the magnitude of the
spurious feature is smaller than the dominant noise in our distribution, the network can still overfit it.
\end{rem}

\captionsetup[figure]{font=small}

\section{Experiments}
\label{sec:exp}
Our theoretical results showed that data augmentation can make it harder to overfit to the noise components (the ``easy to learn and bad" feature in our model) by manipulating the relative gradient contribution of noise vs true features. To simplify our analysis, we assumed independent dominant noise in each sample. We hypothesize that the feature manipulation effect of data augmentation is broader in practice. In particular, our high level argument suggests that a model can also overfit to spurious features, like the \textit{grass feature} in our story of cows in the introduction, which have strong class dependent correlations. In Section~\ref{subsec:sf}, we show experiments to this effect that complement our theory. 
We further conduct two additional experiments that support this paper's thesis. 
In Section~\ref{subsec:davsindep}, we show an experiment with a modified data augmentation pipeline that demonstrates that the benefits of data augmentation cannot be fully explained by the learning of right invariance by the model. Finally, in Section~\ref{subsec:unbalanced} we elaborate on the problem with unbalanced views, where we show that adding extra samples from one dominant view to balanced dataset can hurt the performance of the learned models.
\subsection{Spurious Feature}
\label{subsec:sf}
We use images of the dog class and the cat class from CIFAR-10 dataset, which are of size $32\times 32$ pixels and  $3$ channels. We generate a row of random pixels $\vec{u}\sim\cN(0,\sigma^2I_d)$, where $d=32$ and $\sigma=25$, which is added as a synthetic spurious feature to a class dependent position in an image. The spurious feature $\vec{u}$ is added to the first channel in the row $r_{\textup{cat}}$ for cat images, and in row $r_{\textup{dog}}$ for dog images. For each  image $\x$ in the dataset, with probability $p<1$  we introduce a spurious feature, and with probability $(1-p)$ we leave it unperturbed.  We always select $r_{\textup{cat}}\in\{0,1,\ldots, 15\}$ in the upper half of the image, and $r_{\textup{dog}}\in\{16,17,\ldots,31\}$ in the lower half. In this way, the spurious feature position has a weak correlation to the class label. See Figure~\ref{fig:catdogvisualization} for sample images with spurious features. 
We consider three types training sets with varying degrees of data augmentation as shown Figure~\ref{fig:catdogvisualization}-(b,c,d). 
\begin{compactenum}
	\item \textit{No augmentation:} As a baseline without augmentation, we  center-crop the image to size $[3, 28, 28]$. 
	\item \textit{Random crop:} In each epoch, we  randomly crop a $[3, 28, 28]$ from the original $[3,32,32]$ image---a standard technique used in practice. This  would in essence disperse the position of spurious feature $\vec{u}$. For example, cat images with $\vec{u}$ in row $r_{\textup{cat}} = 9$, will now contain $\vec{u}$ in a row uniformly chosen from $r_{\textup{cat}}^\text{aug}\sim\mathcal{U}(\crl{5, 6, 7, 8, 9})$.   
	\item \textit{Randomized noise position:} Random crop, although standard, has a confounding effect that in addition to perturbing the position of $\vec{u}$, it might also incorporate other useful inductive biases about images. For a more direct comparison to the baseline, we also look at a special augmentation, wherein we perturb just the spurious feature row position by a uniform random number in $[-2, 2]$ in each epoch and then use a simple center crop. As in the case of random crop, this would again disperse the spurious feature from $r_{\textup{cat}} = 9$ to  $r_{\textup{cat}}^\text{aug}\in\mathcal{U}(\crl{5, 6, 7, 8, 9})$. But the non-spurious features/pixels remain the same as baseline.\newline

\end{compactenum}

\begin{figure*}[h]
	\centering
	\begin{subfigure}{.20\textwidth}
		\centering
		\hspace{-1mm}\includegraphics[width=.40\linewidth, height=0.4\linewidth]{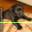}\\
		\vspace{0.25mm}
		\includegraphics[width=.40\linewidth, height=0.4\linewidth]{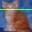}
		\label{fig:sample-orig}
		\caption{Original images}
	\end{subfigure}%
	\hspace{4mm}
	\begin{subfigure}{.20\textwidth}
		\centering
		\hspace{-1mm}\includegraphics[width=.35\linewidth,height=0.35\linewidth]{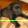}\\
		\vspace{0.5mm}
		\includegraphics[width=.35\linewidth,height=0.35\linewidth]{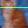}
		\label{fig:sample-noaug}
		\caption{No augmentation}
	\end{subfigure}%
	\hspace{4mm}
	\begin{subfigure}{.20\textwidth}
		\centering
		\hspace{-1mm}\includegraphics[width=.35\linewidth,height=0.35\linewidth]{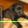}
		\includegraphics[width=.35\linewidth,height=0.35\linewidth]{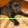}\\
		\vspace{0.5mm}
		\includegraphics[width=.35\linewidth,height=0.35\linewidth]{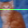}
		\includegraphics[width=.35\linewidth,height=0.35\linewidth]{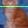}
		\label{fig:sample-random-crop}
		\caption{Random crop}
	\end{subfigure}%
	\hspace{4mm}
	\begin{subfigure}{.20\textwidth}
		\centering
		\hspace{-1mm}\includegraphics[width=.35\linewidth,height=0.35\linewidth]{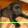}
		\includegraphics[width=.35\linewidth,height=0.35\linewidth]{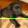}\\
		\vspace{0.5mm}
		\includegraphics[width=.35\linewidth,height=0.35\linewidth]{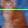}
		\includegraphics[width=.35\linewidth,height=0.35\linewidth]{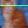}
		\label{fig:sample-random-feature}
		\caption{Random noise position}
	\end{subfigure}%
	\caption{Examples of training images in the spurious features experiment (Section~\ref{subsec:sf}). 
		For ease of visualization, we use a green line rather than random row vector $\vec{u}$ to indicate the spurious feature.  
		In the original $[3,32,32]$ images shown in (a), the  spurious feature is added to the first channel of row $r_{\textup{cat}} = 9$ for the cat class (lower images),  and of row $r_{\textup{dog}} = 22$ for the dog class (upper images).  
		Sub-figures (b,c,d) correspond to samples from different data augmentation methods described in the experiment.
		\label{fig:catdogvisualization}}
\end{figure*}

We compare the testing accuracy of training on these three types of training set in Figure~\ref{fig:catdog} for different values of $r_{\textup{cat}}$ and $r_{\textup{dog}}$.  
When $(r_{\textup{cat}} , r_{\textup{dog}}) = (15,16)$ (Figure~\ref{fig:catdog}, right), after data augmentation with either 
\textit{random noise position}  or \textit{random crop}, the position of $\vec{u}$ in the perturbed imaged has a large overlap across classes. So it is not surprising that the test accuracy with augmentation remains about the same for almost all values of $p$ (fraction of images with spurious features). 
On the other hand, for positions $(9,22)$ and $(12, 19)$ (Figure~\ref{fig:catdog}, left \& center)), although the two data augmentation techniques disperse the positions of spurious feature, its location in the two classes still stays separated. The cat images always have $\vec{u}$ in the upper half of the image while the dog images always have $\vec{u}$ in the lower half of the image. Interestingly, even so, the data augmentation, specially even the simple \textit{random feature position}, can improve the test accuracy. In this case, while augmentation does not eliminate the existence of spurious features, it still diminishes them  by making the spurious features harder to be learned and overfitted. In addition to shifting the spurious features, random crop can shift other important features as well to boost the minor views, so the testing accuracy when training with random crop can be even higher than only shifting the spurious feature position.

\begin{figure*}[h!]
	\centering
	\includegraphics[scale=0.32]{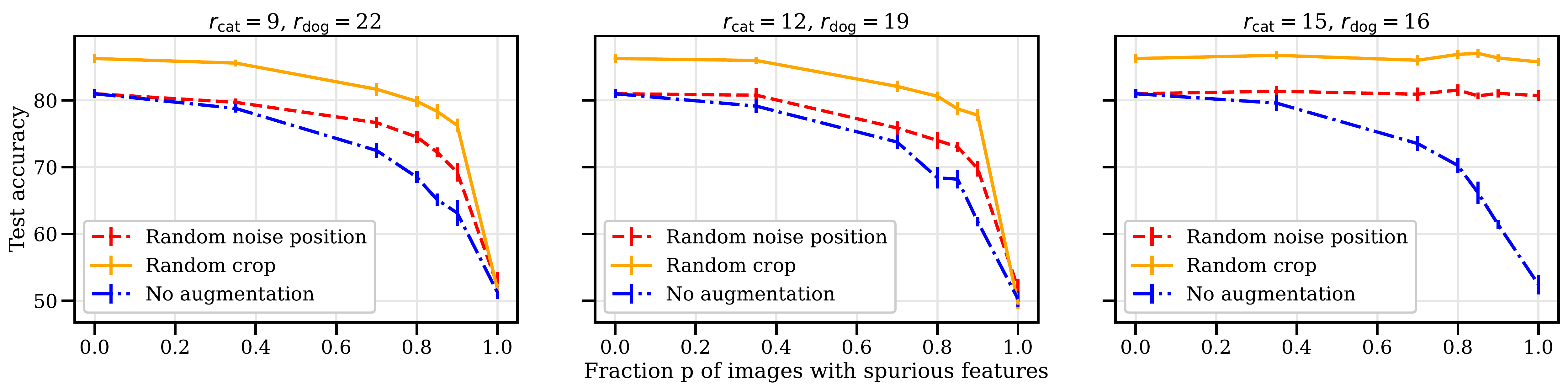}
	\caption{Comparison of different data augmentation strategies for the CIFAR-10 cat-vs-dog classification task with a synthetic spurious feature. The plots show results for different sets of positions of spurious feature $(r_{\textup{cat}},r_{\textup{dog}})$ as we vary the fraction $p$ of all the images that have the spurious feature. The plots are averaged over five runs with error bars of one standard deviation. The test datapoints are always center-cropped images of size $[3,28,28]$ with no spurious feature. In all configurations, we train a ResNet20 network using SGD for $120$ epochs with momentum $0.9$, weight decay $0.005$, and learning rate starting at $0.1$ and annealed to $(0.01,0.001)$ at epochs $(40,80)$. }\label{fig:catdog}
\end{figure*}

\subsection{Augmented samples vs. independent samples}
\label{subsec:davsindep}
\remove{When using data augmentation in practice, a new random transformation (\eg  randomly flip the image or crop the image at random position) is used in each epoch of training. Motivation usually is to encourage the model to learn task specific invariance. For example, if a model sees many variations of an image with few pixel translation shifts (across epochs), then we expect it will learn to be invariant to translations, which in turn boosts performance. Our paper provides a complementary hypothesis by which data augmentation can be helpful even without learning invariance. In this experiment we support our hypothesis by showing that data augmentation could be beneficial even when the model is unlikely to learn good invariance properties. \newline
	
	To demonstrate this, we conduct an experiment where we fix the augmented dataset prior to training and use the same fixed dataset for all epochs. 
	
	Another common motivation is to encourage  the model to learn task specific invariance, but training with many variations of the same input image across epochs. 
}

When using data augmentation, typically a new random transformation (\eg  random flip or crop at a random position of an image) is used in each epoch of training. This procedure effectively increases the training dataset size (albeit with non i.i.d correlated samples). In this experiment, we control for the number of unique samples seen by the training algorithm and ask the question: \textit{how effective is a single data augmented sample compared to an independent sample?}\newline 

For this experiment, we work with the full CIFAR-10 dataset which has $50000$ training examples for $10$ classes. Given a ratio $p$ of independent samples to total sample size, we generate a training set of size $n=50000$ as follows: We first select $pn$ independent samples for the task. We then cyclically generate a data augmented variant these $pn$ independent samples until we obtain the remaining $(1-p)n$ datapoint. 
For example, in the CIFAR-10 dataset with $n=50000$, if $p=0.6$, the training set consists of $30000$ independent samples, of which $20000$ have one additional augmented sample. If $p = 0.2$, the training set has $10000$ independent samples and four data augmented versions of each of the $10000$ independent samples. Thus, for $p=1$, there is no augmentation, and for smaller $p$, there are more augmented samples, but less independent samples. The dataset thus generated is then fixed for all epochs. In this way, the number of unique samples seen by training algorithm is always $n=50000$ for all $p$.\newline 

\begin{figure}[!h]
	\centering
	\includegraphics[scale=0.5]{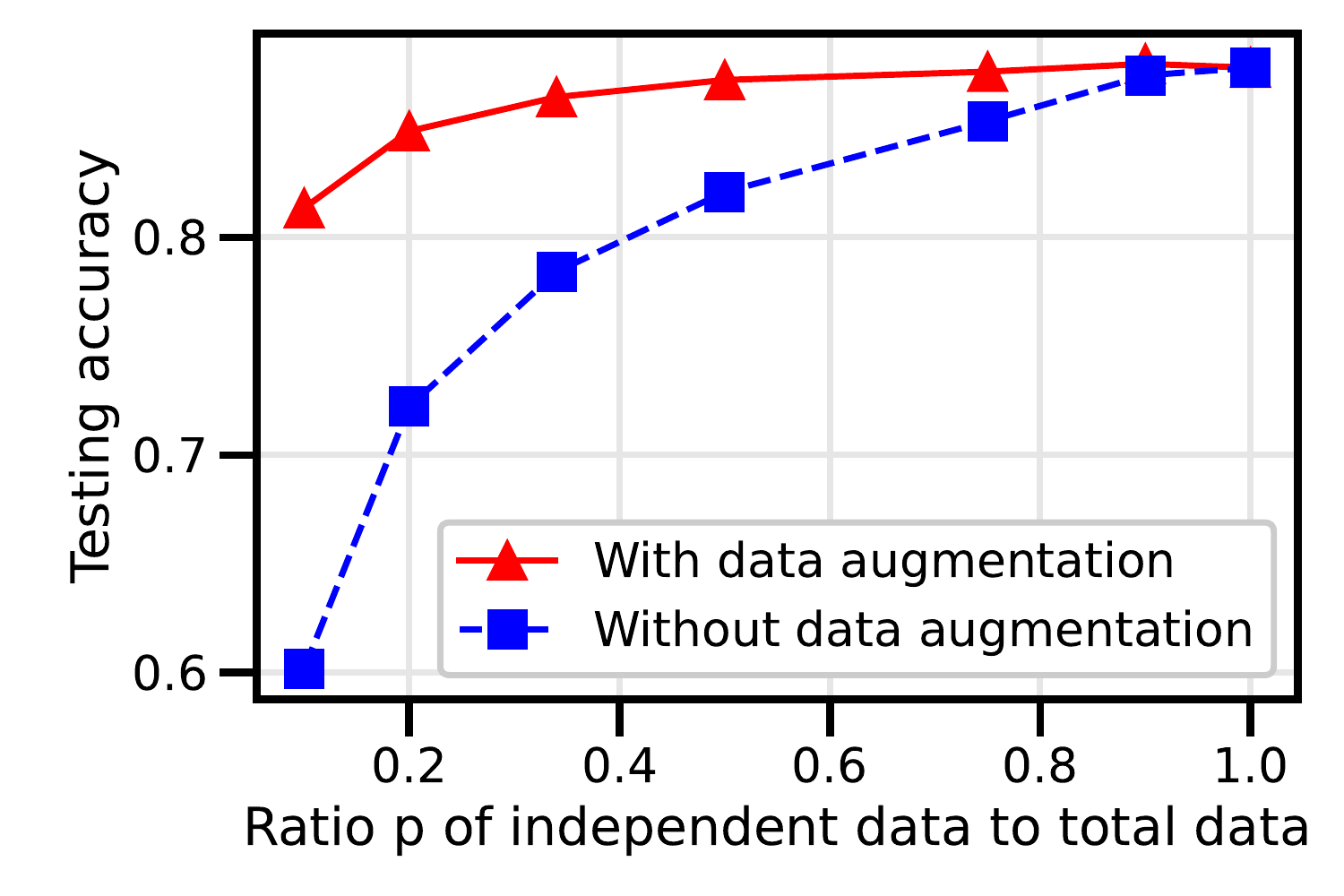}
	\caption{Augmented vs independent samples: for each $p$ on the x-axis, the data augmented training (red-solid curve) uses $50000p$ independent images from CIFAR-10, along with  $50000(1-p)$ data augmented samples. The  augmented dataset is fixed across epochs. For the baseline without data augmentation (blue-dashed curve) we simply use the $50000p$ independent samples. We use the standard CIFAR-10 test dataset and the results are averaged over $3$ runs. In each instance, we train a ResNet20 for $160$ epochs using SGD with momentum $0.9$, weight decay $0.005$, and learning rate starting at $0.1$ and annealed to $(0.01,0.001)$ at epochs $(80,120)$. }\label{fig:indepvsaug}
\end{figure}

In Figure~\ref{fig:indepvsaug}, we compare the accuracies of a ResNet20 model trained on such partially data augmented samples to the baseline of training with just the $pn$ independent samples without any augmentation. 
Our experiment shows that even this partial data augmentation can significantly improve the testing accuracy.  In this experiment,  since each example has only a small number of augmented variations (\eg for $p\ge 0.5$ at most one augmented variant of the an example is seen throughout training), it is unlikely that they lead to learning any kind of task specific invariance, which is the usual motivation. However, by having the important feature appearing at a slightly different location, data augmentation can still facilitate the learning of the important features via the feature manipulation view described in our paper. 
Furthermore, comparing the accuracy of un-augmented full dataset with $p=1.0$ on blue-dashed curve to that of data augmented training for $p\ge 0.5$ on the red curve, we see that a fixed data augmented image can improve the test accuracy nearly as much as an independent sample does. This shows that if we have an important feature in an image, \eg a cat ear, shifting it two pixels can help nearly as effectively as a completely new cat ear.

\subsection{Unbalanced Dataset}
\label{subsec:unbalanced}
In this experiment, we train a simple convolutional neural network on a synthetic dataset with unbalanced views. We show that when one view is much more prevalent in the dataset than the other views, having more samples of the dominant view can hurt learning. 
Our data consist of samples $(\x,y)$ from two classes $y \in\{-1,1\}$. The input $\x\in\R^{3\times 15}$ has $3$ channels, each with $15$ pixels. After sampling $y$ uniformly, we generate $\x$ by setting one of the $15$ pixels to the main feature $[y,y,y]$. The other pixels are set to a Gaussian noise $\mathcal{N}(0, \sigma_{\noisesubscript}^2I_3)$. For different choices of $\sigma_{\noisesubscript}$, we first construct a balanced dataset $\D_{\textup{bal}}$ of size $n_{\textup{bal}}$ such that roughly equal number of samples that have the good feature $[y,y,y]$ present at each pixel. Our full dataset $\D_{\textup{full}}$ with $n_{\textup{full}}$ samples consists of $\D_{\textup{bal}}$ along with additional $n_{\textup{full}} - n_{\textup{bal}}$ samples with the main feature only at pixel $3$. We use a balanced  testing dataset.  

\begin{figure*}[h!]
	\centering
	\includegraphics[scale=0.45]{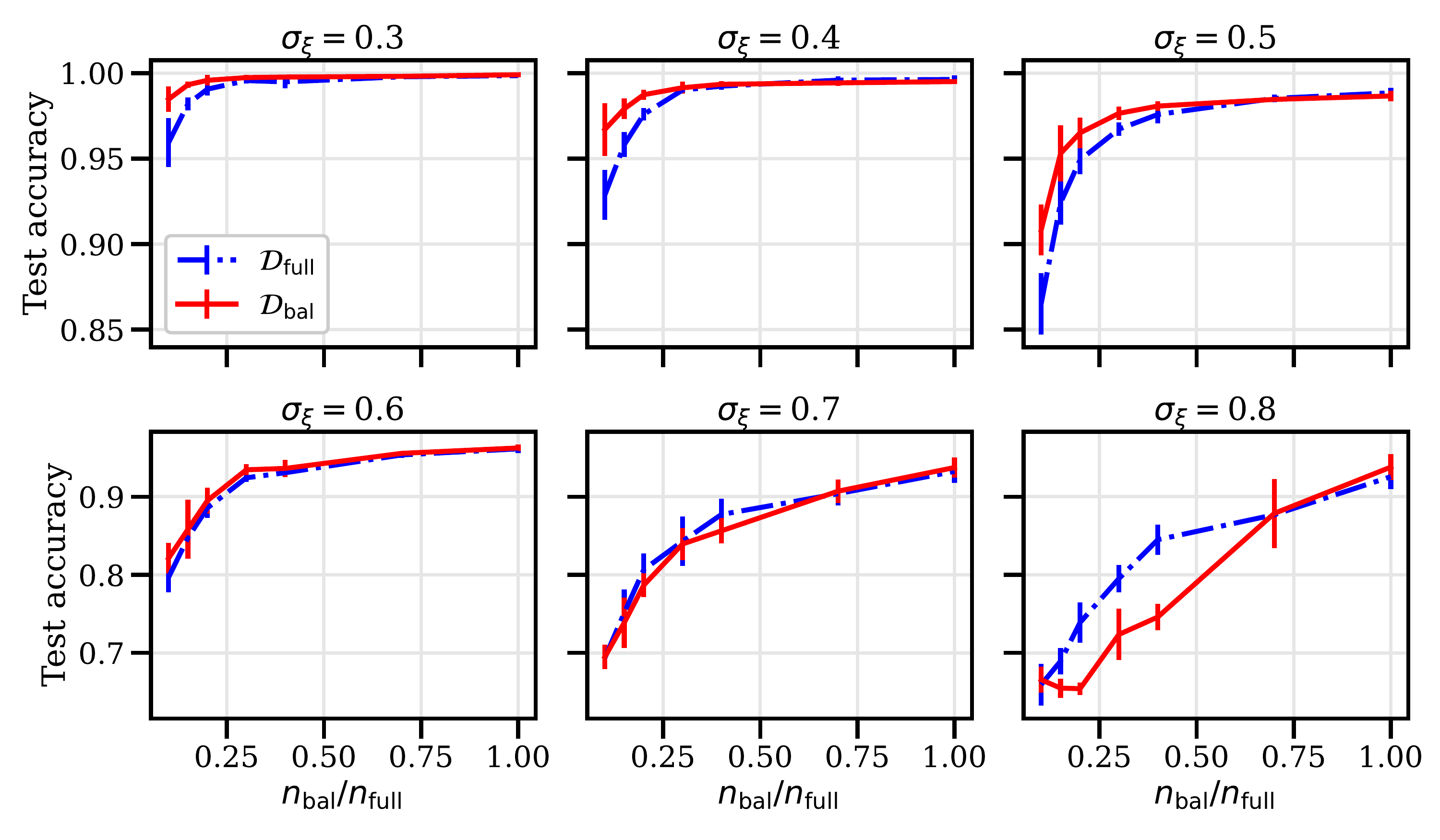}
	\caption{Comparison of training on $\D_{\textup{bal}}$ to $\D_{\textup{full}}$ as we vary the ratio of balanced examples $n_{\textup{bal}}/n_{\textup{full}}$ for different values of noise magnitude $\sigma_{\noisesubscript}$. 
		We learn the data using a simple convolutional neural network with two convolutional layers with ReLU activation, a maxpool layer and a linear layer. The two convolutional layers and the max pool layer have kernel size 4, and strides 2,1 and 2, respectively. The models are trained for $200$ epochs  using SGD with momentum $0.9$, weight decay $0.05$, and learning rate starting at $0.1$ and annealed to $0.01$ at epoch $100$. For all training sets, the training accuracy at the end of training is at least $0.99$. 
	} \label{fig:unbalanced}
\end{figure*}

In Figure~\ref{fig:unbalanced}, we see that compared to the balanced dataset $\D_{\textup{bal}}$, although the full dataset $\D_{\textup{full}}$ has strictly more samples with the accurate kind of features, when $\sigma_\xi$ is not too large, the test accuracy is consistently on par or even lower than training on just the balanced subset. In this case, the views are simply features positioned at different pixels. For very large $\sigma_\xi$, the test accuracy of the balanced subset can be low because in such case, the full dataset can learn the dominant view well, but the unbalanced dataset has too few samples to learn any view.  The experiment shows that even for architectures such as convolutional networks, which are believed to have some translation invariance, we should not expect samples from one view to help the learning of other views.

\section*{Acknowledgements}
We would like to thank Yi Zhang for valuable feedback.
 \newpage{}
\bibliographystyle{plainnat}
\bibliography{ref.bib}

\pagebreak{}

\appendix

\section*{Appendix}
We clarify that throughout the appendix $c_1,c_2,\ldots$ denote constants, while $C$ denotes the number of channels in our model \eqref{eq:model} and is not a constant, but is a function of $d$. 
Throughout the appendix, for any sample $(\x^{(i)}, y^{(i)})$, we let $\pbpi$ be the background patches of $(\x^{(i)}, y^{(i)})$ and for $k\in[K]$, $\pbpki$ be the  background patches with feature noise $-\api y\v_k$.

\section{Useful concentration lemmas}\label{app:concentration}
We first state the following standard results on Gaussian samples. These will be used in our proof frequently. .

\begin{lemma}[Laurent-Massart $\chi^2$ tail bound] Consider a standard Gaussian vector $\z\sim\cN(0,I_d)$. For any positive vector $\vec{a}\in\R^d_{\ge0}$, and any $t>0$, the following concentration holds
	\begin{flalign*}
		\text{Pr}\brk[\Big]{\sum_{i=1}^d \vec{a}_i\z_i^2\ge \nrm{\vec{a}}_1+2\nrm{\vec{a}}_2\sqrt{t}+2\nrm{\vec{a}}_\infty t}&\le \exp(-t),\\
		\text{Pr}\brk[\Big]{\sum_{i=1}^d \vec{a}_i\z_i^2\le \|\vec{a}\|_1-2\|\vec{a}\|_2\sqrt{t}}&\le \exp(-t).
	\end{flalign*}
	\label{lem:lm}
\end{lemma}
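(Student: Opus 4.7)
The plan is to prove both tail bounds by the standard Chernoff argument applied to the centered sum $Y := \sum_{i=1}^d \vec{a}_i(\vec{z}_i^2 - 1)$. Since the $\vec{z}_i$ are i.i.d.\ standard Gaussian, a one-line Gaussian integral gives $\E[\exp(s\vec{a}_i(\vec{z}_i^2 - 1))] = e^{-s\vec{a}_i}(1 - 2s\vec{a}_i)^{-1/2}$ for $s < 1/(2\vec{a}_i)$, and by independence the cumulant generating function
\begin{equation*}
\psi(s) := \log \E[e^{sY}] = \sum_{i=1}^d\left(-s\vec{a}_i - \tfrac{1}{2}\log(1 - 2s\vec{a}_i)\right)
\end{equation*}
is well-defined for $s \in \left(-\infty, 1/(2\nrm{\vec{a}}_\infty)\right)$.

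Next I would bound $\psi$ on each side using elementary series estimates. The identity $\sum_{k\ge 2} x^k/k \le x^2/(2(1-x))$ for $0 \le x < 1$, applied termwise with $x = 2s\vec{a}_i$, yields $\psi(s) \le s^2\nrm{\vec{a}}_2^2/(1 - 2s\nrm{\vec{a}}_\infty)$ for $s \in [0, 1/(2\nrm{\vec{a}}_\infty))$, after pulling out $\nrm{\vec{a}}_\infty$ from the denominator and recognizing $\sum_i \vec{a}_i^2 = \nrm{\vec{a}}_2^2$. On the lower side, the sharper bound $-\log(1+y) + y \le y^2/2$ for $y \ge 0$ (verified via $f(y) = y^2/2 + \log(1+y) - y$ satisfying $f(0)=0$, $f'(y) = y^2/(1+y) \ge 0$) gives the cleaner sub-Gaussian estimate $\psi(-s) \le s^2\nrm{\vec{a}}_2^2$ for all $s \ge 0$.

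Applying Chernoff and optimizing the dual variable $s$ then yields both bounds. For the upper tail with $u = 2\nrm{\vec{a}}_2\sqrt{t} + 2\nrm{\vec{a}}_\infty t$, the choice $s_\star = \sqrt{t}/(\nrm{\vec{a}}_2 + 2\nrm{\vec{a}}_\infty\sqrt{t})$ lies in the admissible range and produces $1 - 2s_\star\nrm{\vec{a}}_\infty = \nrm{\vec{a}}_2/(\nrm{\vec{a}}_2 + 2\nrm{\vec{a}}_\infty\sqrt{t})$; direct substitution into $\psi(s_\star) - s_\star u$ yields $-t$ exactly. For the lower tail, minimizing $s^2\nrm{\vec{a}}_2^2 - 2s\nrm{\vec{a}}_2\sqrt{t}$ over $s \ge 0$ at $s = \sqrt{t}/\nrm{\vec{a}}_2$ gives optimal value $-t$.

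The only real subtlety (modest, for this classical result) is guessing the right $s_\star$ in the upper-tail optimization so that the two-term deviation $2\nrm{\vec{a}}_2\sqrt{t} + 2\nrm{\vec{a}}_\infty t$ emerges cleanly, rather than as some messy interpolation. Structurally, this mixed form reflects the sub-exponential profile of weighted $\chi^2$ sums: the $\nrm{\vec{a}}_2$-term controls the Gaussian regime for small $t$, the $\nrm{\vec{a}}_\infty$-term the Poisson regime for large $t$, and $s_\star$ sits exactly at the crossover point where the denominator $1 - 2s\nrm{\vec{a}}_\infty$ in the cumulant bound cancels favorably against $s^2\nrm{\vec{a}}_2^2$ in the numerator.
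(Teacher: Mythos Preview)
Your argument is correct and is essentially the original Laurent--Massart proof via the Chernoff method: bounding the cumulant generating function of the centered weighted $\chi^2$ sum on each side and then optimizing in the exponential tilt parameter. The paper itself does not prove this lemma; it simply states it as a classical result and uses it as a black box, so there is nothing to compare against beyond noting that you have supplied the standard proof.
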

The following corollary immediately follows from using $t=\log{(2/\delta)}$ and $\vec{a}_i=1$ in the above lemma
\begin{corollary}[$\ell_2$ norm of Gaussian vector]\label{cor:gl2norm}Consider $\z\sim\cN(0,\sigma^2I_d)$, for any $\delta\in(0,1)$ and large enough $d$, we have with probability greater than $1-\delta$,  %
	\begin{equation*}
		\sigma^2d\prn[\Bigg]{1-2\sqrt{\frac{\log(2/\delta)}{d}}}\le \nrm{\z}^2_2\le \sigma^2d\prn[\Bigg]{1+4\sqrt{\frac{\log(2/\delta)}{d}}}.
	\end{equation*}
\end{corollary}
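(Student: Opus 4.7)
The plan is to derive this directly from Lemma~\ref{lem:lm} (Laurent-Massart) via a straightforward rescaling and a union bound, with the choices $\vec{a}_i = 1$ for all $i$ and $t = \log(2/\delta)$ as indicated in the excerpt.

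First I would reduce to the isotropic case: write $\z = \sigma \vec{g}$ where $\vec{g} \sim \cN(0, I_d)$, so $\nrm{\z}_2^2 = \sigma^2 \sum_{i=1}^d g_i^2$. With the uniform weights $\vec{a}_i = 1$ I then have $\nrm{\vec{a}}_1 = d$, $\nrm{\vec{a}}_2 = \sqrt{d}$, and $\nrm{\vec{a}}_\infty = 1$. Plugging these into the two inequalities of Lemma~\ref{lem:lm} with $t = \log(2/\delta)$, each tail event has probability at most $e^{-t} = \delta/2$, so by a union bound, with probability at least $1 - \delta$,
\begin{equation*}
d - 2\sqrt{d}\sqrt{t} \;\le\; \sum_{i=1}^d g_i^2 \;\le\; d + 2\sqrt{d}\sqrt{t} + 2t .
\end{equation*}

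Multiplying through by $\sigma^2$ and factoring out $\sigma^2 d$ yields
\begin{equation*}
\sigma^2 d\Bigl(1 - 2\sqrt{t/d}\Bigr) \;\le\; \nrm{\z}_2^2 \;\le\; \sigma^2 d\Bigl(1 + 2\sqrt{t/d} + 2t/d\Bigr).
\end{equation*}
The lower bound already matches the statement of the corollary. For the upper bound, I would absorb the quadratic-in-$t$ correction into the $\sqrt{t/d}$ term: it suffices to show $2\sqrt{t/d} + 2t/d \le 4\sqrt{t/d}$, i.e.\ $\sqrt{t/d} \le 1$, which is the single place where the hypothesis ``large enough $d$'' is used --- concretely it requires $d \ge t = \log(2/\delta)$.

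There is no real obstacle: the argument is just a rescaling, an application of Lemma~\ref{lem:lm}, a union bound, and the mild simplification $t/d \le 1$ needed to convert the $2t/d$ term into a $2\sqrt{t/d}$ term and combine it with the existing $2\sqrt{t/d}$ to produce the stated constant $4$.
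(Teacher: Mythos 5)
Your proposal is correct and follows exactly the route the paper intends: the corollary is stated there as an immediate consequence of Lemma~\ref{lem:lm} with $\vec{a}_i=1$ and $t=\log(2/\delta)$, and your rescaling, union bound, and absorption of the $2t/d$ term into the constant $4$ (using $d\ge\log(2/\delta)$) fill in the details faithfully.
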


\begin{lemma}[Gaussian correlation]\label{lem:gausscorr}Consider independently sampled Gaussian vectors $\z_1\sim\cN(0,\sigma_1^2I_d)$ and $\z_2\in\N(0,\sigma^2_2I_d)$. For any $\delta\in(0,1)$ and large enough $d$, there exists a constant $c_1,c_2$  such that
	\begin{equation*}
		\begin{split}
			|\z_1\cdot \z_2| &\le  c_1\sigma_1\sigma_2\sqrt{d\log(2/\delta)} \quad\text{w.p }\ge 1-\delta,\\
			\z_1\cdot \z_2 &\ge c_2\sigma_1\sigma_2\sqrt{d} \quad\text{w.p }\ge 1/4.
		\end{split}
	\end{equation*}
\end{lemma}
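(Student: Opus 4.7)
}
The plan is to condition on $\z_1$ and exploit the fact that, by rotational invariance, the conditional law of $\z_1\cdot\z_2$ given $\z_1$ is a centered Gaussian with variance $\sigma_2^2\,\|\z_1\|^2$. This reduces both bounds to one-dimensional Gaussian statements, which I then combine with the concentration of $\|\z_1\|^2$ around $\sigma_1^2 d$ provided by Corollary~\ref{cor:gl2norm}. Throughout, $g$ denotes a standard normal.

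For the upper bound, I would first apply Corollary~\ref{cor:gl2norm} to $\z_1$ with failure probability $\delta/2$ to obtain a good event $\mathcal{E}_1=\{\|\z_1\|\le c\,\sigma_1\sqrt{d}\}$ for some absolute constant $c$, where $\Pr(\mathcal{E}_1)\ge 1-\delta/2$. Conditional on $\z_1$, the standard Gaussian tail bound $\Pr(|g|\ge t)\le 2e^{-t^2/2}$ applied with $t=\sqrt{2\log(4/\delta)}$ gives
\begin{equation*}
\Pr\bigl(|\z_1\cdot\z_2|\ge \sigma_2\|\z_1\|\sqrt{2\log(4/\delta)}\,\big|\,\z_1\bigr)\le \delta/2.
\end{equation*}
On $\mathcal{E}_1$ the right-hand side of this inequality is at most $c\sqrt{2}\,\sigma_1\sigma_2\sqrt{d\log(2/\delta)}$, so a union bound yields the claim with $c_1=c\sqrt{2}$ (adjusting constants to absorb the $\log(4/\delta)$ vs.\ $\log(2/\delta)$ discrepancy).

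For the lower bound, I would fix a constant confidence parameter $\delta_0$ (say $\delta_0=1/10$) and use Corollary~\ref{cor:gl2norm} to define $\mathcal{E}_2=\{\|\z_1\|\ge \sigma_1\sqrt{d/2}\}$, so $\Pr(\mathcal{E}_2)\ge 9/10$ for $d$ large enough. Conditionally on $\z_1$, the event $\{\z_1\cdot\z_2\ge c_2\,\sigma_1\sigma_2\sqrt{d}\}$ coincides with $\{g\ge c_2\sigma_1\sqrt{d}/\|\z_1\|\}$, which on $\mathcal{E}_2$ contains $\{g\ge c_2\sqrt{2}\}$. Choosing $c_2$ to be a small enough absolute constant (e.g.\ $c_2=1/10$) makes $\Pr(g\ge c_2\sqrt{2})\ge 2/5$, so tower property gives
\begin{equation*}
\Pr\bigl(\z_1\cdot\z_2\ge c_2\sigma_1\sigma_2\sqrt{d}\bigr)\ge \Pr(\mathcal{E}_2)\cdot\tfrac{2}{5}\ge \tfrac{9}{10}\cdot\tfrac{2}{5}=\tfrac{9}{25}\ge \tfrac{1}{4}.
\end{equation*}

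No real obstacle is expected here: both parts are essentially textbook manipulations, and the main bookkeeping issue is simply to line up the constants so that the final probability exceeds $1/4$ (equivalently, to make the loss from the $\|\z_1\|^2$ concentration event small compared to the Gaussian anti-concentration mass). A minor subtlety worth noting explicitly in the write-up is that the lemma's lower bound is a one-sided statement, so the anti-concentration is the true Gaussian $\Pr(g\ge t_0)$ rather than $\Pr(|g|\ge t_0)$, which is why one cannot simply invert the upper bound.
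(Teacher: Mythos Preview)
Your proposal is correct and follows essentially the same approach as the paper: both decompose $\z_1\cdot\z_2$ as the product of the norm of one vector and a one-dimensional Gaussian (the paper writes $u=\|\z_2\|$ and $v=\z_1\cdot\z_2/\|\z_2\|$, while you condition on $\z_1$ instead), then combine the $\chi^2$ concentration of the norm with a standard Gaussian tail or anti-concentration bound. The role of the two vectors is symmetric, so the arguments are effectively identical up to relabeling and choice of constants.
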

{
\begin{proof}
	Let $u=\|\z_2\|_2$ and $v=\z_1\cdot\frac{\z_2}{\|\z_2\|_2}$. Since $\z_1$ is spherically symmetric, we have $v\sim\N(0,\sigma_1^2)$ and is independent of $u$. We first show the upper bound. 
	\begin{align}
		\Pr(|uv|\ge t)&=\Pr(|v|\ge t/u,u\ge c)+\Pr(|v|\ge t/u,u\le c)\tag{holds $\forall\,c>0$}\\
			&\le \min_{c>0} \Pr(u\ge c) + \Pr(|v|\ge t/c)\nonumber\\
			&\le \Pr(u \ge 2\sigma_2\sqrt{d}) + \Pr(|v|\ge \frac{t}{2\sigma_2\sqrt{d}})\tag{using $c=2\sigma_2\sqrt{d}$}\\
			&{\le} \exp(-\frac{d}{4})+\exp(-\frac{t^2}{8\sigma_1^2\sigma_2^2d})\tag{using Lemma~\ref{lem:lm} on $u=\|\z_2\|_2$ and $v\sim\N(0,\sigma_1^2)$}
	\end{align}
	Using $t=2\sigma_1\sigma_2\sqrt{2d\log(2/\delta)}$, we get the first inequality for all {$d\ge 4\log(2/\delta)$}. \newline 

	For the lower bound, using a similar argument as above we have
	\begin{align}
		\Pr(uv\le t)&\le \min_{c>0}  \Pr(v\le t/c)+\Pr(u\le c)\nonumber\\
			&\le \Pr(v\le \frac{\sigma_1}{4})+\Pr(u \le \frac{1}{2}\sigma_2\sqrt{d}) \tag{using $c=\frac{1}{2}\sigma_2\sqrt{d}$ and $t=\frac{1}{8}\sigma_1\sigma_2\sqrt{d}$}\\
			&{\le}\, \frac{5}{8}+\exp(-\frac{d}{16})\le \frac{3}{4}\tag{using Lemma~\ref{lem:lm} on $u$ and cdf bound on $v$}
	\end{align}
	The lower bound thus holds for $d\ge 16\log(8)$ using $t=\frac{1}{8}\sigma_1\sigma_2\sqrt{d}$. This concludes the proof of the lemma.
\end{proof}
}

\begin{lemma}[Gaussian tail concentration]\label{lem:maxgauss} Consider i.i.d samples $\crl{z_c\sim\cN(0,\sigma^2):c\in[C]}$. We have the following:
	\begin{flalign*}
		\max_{c\in[C]} |z_c|&\le \sigma \sqrt{2\log{\frac{2C}{\delta}}}, \quad \text{w.p} \ge 1-\delta,\\
		\max_{c\in[C]} z_c&\ge \frac{\sigma}{2}, \quad \text{w.p} \ge 1-\exp(-C/4).
	\end{flalign*}
\end{lemma}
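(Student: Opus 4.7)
}
The lemma is a standard pair of tail bounds for the maximum of $C$ i.i.d.\ centered Gaussians, so the proof will be short. I will handle the two inequalities separately using the classical one-dimensional Gaussian tail estimate $\Pr[z \ge t] \le \tfrac{1}{2}\exp(-t^2/(2\sigma^2))$ for $z\sim\cN(0,\sigma^2)$, $t\ge 0$, together with independence across $c\in[C]$.

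For the upper bound, I would start by fixing a threshold $t=\sigma\sqrt{2\log(2C/\delta)}$ and applying the two-sided tail bound $\Pr[|z_c|\ge t]\le 2\exp(-t^2/(2\sigma^2))$ to get $\Pr[|z_c|\ge t]\le \delta/C$. A union bound over $c\in[C]$ then yields $\Pr[\max_c|z_c|\ge t]\le \delta$, which is exactly the claim. This step is entirely routine.

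For the lower bound, the idea is to show that each individual $z_c$ has a constant probability of exceeding $\sigma/2$, and then use independence to boost the probability of at least one success to $1-\exp(-C/4)$. Concretely, let $p_0:=\Pr[z_c\ge \sigma/2]=\Pr[Z\ge 1/2]$ where $Z$ is standard normal. A direct estimate (for instance, by comparing the Gaussian density on $[0,1/2]$ with its value at $0$, which gives $\Pr[0\le Z<1/2]\le \tfrac{1}{2}\cdot\tfrac{1}{\sqrt{2\pi}}<\tfrac{1}{4}$) shows $p_0\ge 1/4$, so $\Pr[z_c<\sigma/2]\le 3/4\le \exp(-1/4)$. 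By independence of the $z_c$'s,
\begin{equation*}
\Pr\!\left[\max_{c\in[C]} z_c<\sigma/2\right]=\prod_{c\in[C]}\Pr[z_c<\sigma/2]\le \exp(-C/4),
\end{equation*}
giving the second inequality.

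There is no genuine obstacle here; the only mild point to be careful about is the numerical constant in the lower-bound step, where I need to verify $\Pr[Z\ge 1/2]\ge 1-\exp(-1/4)$ (equivalently $\ge 1-e^{-1/4}\approx 0.221$) so that the exponent $C/4$ in the statement is achievable. The comparison above, or a direct numerical lookup giving $\Pr[Z\ge 1/2]\approx 0.309$, makes this immediate.
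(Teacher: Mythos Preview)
Your proposal is correct and follows essentially the same approach as the paper: a union bound over the $2C$ events $\{|z_c|\ge t\}$ for the upper tail, and the observation $\Pr[z_c\ge \sigma/2]\ge 1/4$ combined with independence (so that $(3/4)^C\le e^{-C/4}$) for the lower tail. Your argument even supplies a slightly more explicit justification of the constant $1/4$ than the paper does.
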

\begin{proof}
	These are standard Gaussian tail bounds, which we prove here for completeness.
	We have:
	\begin{equation*}
		\Pr(\max_{c\in[C]}z_c\ge t)\le \sum_{c\in[C]}\Pr(z_c\ge t)\le C\exp(\frac{-t^2}{2\sigma^2}).%
	\end{equation*}
	Using the same argument for over $2C$ variables $\{z_c\sim\N(0,\sigma^2),-z_c\sim\N(0,\sigma^2)\}_{c\in[C]}$ along with $t=\sigma\sqrt{2\log(2C/\delta)}$, we have the first inequality that $\max_{c\in[C]} |z_c|\le \sigma \sqrt{2\log{\frac{2C}{\delta}}}, \quad \text{w.p} \ge 1-\delta$.\newline

	\noindent Furthermore, $\forall_{c\in[C]}$, we have $\Pr(z_c\ge \sigma/2)\ge 1/4$, hence
	\[\Pr(\max_{c\in[C]}z_c\ge \sigma/2)\ge 1-\big(1-1/4\big)^C\ge 1-\exp(-C/4)\]
	This concludes the proof of the lemma. \end{proof}

\begin{lem}[Berry--Esseen theorem \citep{berry1941accuracy}]
	\label{lem:bethm}Consider i.i.d samples $\{u_{i}:i\in[n]\}$ with
	$\E u_{i}=0$, $\E u_{i}^{2}=\sigma^{2}>0$ and $\E\left|u_{i}\right|^{3}=\rho<\infty$.
	Let $F_{n}$ be the cumulative distribution function of $\frac{1}{\sigma\sqrt{n}}\sum_{i=1}^{n}u_{i}$,
	and $\Phi$ be the cumulative distribution function of the standard
	normal distribution. For all $t$, there exists a constant $c_{1}$
	such that
	\[
	\left|F_{n}(t)-\Phi(t)\right|\leq\frac{c_{1}\rho}{\sigma^{3}\sqrt{n}}.
	\]
\end{lem}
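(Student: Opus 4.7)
The plan is to prove Berry--Esseen via the classical Fourier-analytic route, combining Esseen's smoothing inequality with a Taylor expansion of the characteristic function. Let $X_n = \frac{1}{\sigma\sqrt{n}}\sum_{i=1}^n u_i$, let its characteristic function be $\varphi_n(\theta) = \E[e^{i\theta X_n}]$, and let $\varphi(\theta) = e^{-\theta^2/2}$ denote the standard normal characteristic function. Esseen's smoothing inequality says that for any $T > 0$,
\[
\sup_t |F_n(t) - \Phi(t)| \;\leq\; \frac{1}{\pi}\int_{-T}^{T}\left|\frac{\varphi_n(\theta) - \varphi(\theta)}{\theta}\right| d\theta \;+\; \frac{c_0}{T},
\]
where $c_0$ absorbs the supremum of the standard normal density. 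This reduces the problem to controlling the integrand and choosing $T$ optimally at the end. I would invoke this inequality as a black box from a standard reference rather than rederiving it, since Esseen's inequality is itself a non-trivial harmonic-analytic fact.

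Next I would bound the pointwise difference $|\varphi_n(\theta) - \varphi(\theta)|$. By the i.i.d.\ assumption, $\varphi_n(\theta) = \hat u(\theta/(\sigma\sqrt{n}))^n$ where $\hat u(t) = \E[e^{i t u_1}]$. The moment hypotheses give the Taylor expansion $\hat u(t) = 1 - \tfrac{\sigma^2 t^2}{2} + r(t)$ with $|r(t)| \leq \tfrac{\rho |t|^3}{6}$, along with the trivial bound $|\hat u(t)| \leq 1$. Writing $\varphi(\theta) = (e^{-\theta^2/(2n)})^n$ and applying the telescoping identity $|a^n - b^n| \leq n \max(|a|,|b|)^{n-1}|a - b|$ with $a = \hat u(\theta/(\sigma\sqrt{n}))$ and $b = e^{-\theta^2/(2n)}$, together with the elementary estimate $|1 - \sigma^2 t^2/2 + r(t) - e^{-\sigma^2 t^2/2}| \leq C \rho |t|^3$ for $|t|$ sufficiently small, I would obtain
\[
|\varphi_n(\theta) - \varphi(\theta)| \;\leq\; C\,\frac{\rho |\theta|^3}{\sigma^3 \sqrt{n}}\, e^{-\theta^2/4}
\]
valid on the regime $|\theta| \leq c_1 \sigma^3 \sqrt{n}/\rho$.

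Finally I would choose $T = c_1 \sigma^3 \sqrt{n}/\rho$ and substitute into Esseen's inequality. The integral reduces to $\frac{\rho}{\sigma^3\sqrt{n}}\int_{-\infty}^{\infty} \theta^2 e^{-\theta^2/4} d\theta = O(\rho/(\sigma^3\sqrt{n}))$, while the boundary term $c_0/T$ is also of order $\rho/(\sigma^3\sqrt{n})$. Summing yields the claimed bound $|F_n(t) - \Phi(t)| \leq c_1 \rho/(\sigma^3 \sqrt{n})$ uniformly in $t$.

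The main obstacle I anticipate is the intermediate-$\theta$ regime where the Taylor remainder stops dominating. Near zero, $\hat u$ is well approximated by $e^{-\sigma^2 t^2/2}$ with cubic error, but once $|\theta|$ approaches $\sigma^3\sqrt{n}/\rho$ the cubic term is comparable to the quadratic and naive estimates blow up after being raised to the $n$-th power. The cleanest fix is to observe that for $|\theta| \leq c_1 \sigma^3\sqrt{n}/\rho$ the remainder satisfies $|r(\theta/(\sigma\sqrt{n}))| \leq \sigma^2 \theta^2/(4n)$, so $|\hat u(\theta/(\sigma\sqrt{n}))| \leq e^{-\sigma^2 \theta^2/(4n)}$ and hence $|\varphi_n(\theta)| \leq e^{-\theta^2/4}$, which supplies the Gaussian envelope in the displayed bound above and keeps the telescoping under control. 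Tracking the transition cleanly, and verifying that the contribution from outside $[-T,T]$ is correctly absorbed by the $c_0/T$ term, are the technical crux of the argument.
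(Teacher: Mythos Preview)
The paper does not prove this lemma; it is stated with a citation to \citep{berry1941accuracy} and used as a black box in the proof of Lemma~\ref{lem:guassianq}. Your proposal is the standard Fourier-analytic proof via Esseen's smoothing inequality and is correct in outline, so there is nothing to compare against --- you have supplied a proof where the paper simply invokes the classical result.
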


\begin{lem}[Anti-concentration of $q$-th power of Gaussian random variables]
	\label{lem:guassianq} Consider i.i.d samples $\{z_{c}\sim\mathcal{N}(0,1):c\in[C]\}$.
	\textup{For constant integer $q\geq 1$, there exist constants $c_{1},c_{2}>0$}
	such that for any $t\leq o(1)$,
	\[
	\Pr\left[\sum_{c\in[C]}z_{c}^{q}\geq c_{1}t\sqrt{C}\right]\geq\frac{1}{2}-o(1)-\frac{c_{2}}{\sqrt{C}}.
	\]
\end{lem}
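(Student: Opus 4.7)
The approach is a direct application of the Berry--Esseen theorem (Lemma~\ref{lem:bethm}) to the i.i.d.\ sum $S_C := \sum_{c \in [C]} z_c^{q}$, broken into cases depending on the parity of $q$.

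First I would compute the moments of $u_c := z_c^{q}$. Since $q$ is a constant integer, $\mu := \E u_c = \E z^{q}$, $\sigma^{2} := \mathrm{Var}(u_c) = \E z^{2q} - \mu^{2}$, and $\rho := \E |u_c|^{3} = \E |z|^{3q}$ are all finite constants depending only on $q$. Moreover $\sigma^{2} > 0$ because $z^{q}$ is not almost surely constant, and $\rho < \infty$ since the standard Gaussian has finite moments of every order.

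The easy case is when $q$ is even. Then $\mu = \E z^{q} > 0$ is a fixed positive constant, so $\E S_C = C\mu$. Since $t = o(1)$, for all sufficiently large $C$ we have $c_{1} t \sqrt{C} \le C\mu / 2$, and a standard Chebyshev (or Berry--Esseen) bound gives $\Pr[S_C \ge c_{1} t \sqrt{C}] \ge 1 - O(1/C)$, which trivially satisfies the stated lower bound.

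The interesting case is when $q$ is odd, in which case $\mu = 0$ by symmetry of the Gaussian. Applying Lemma~\ref{lem:bethm} to $\{u_c\}$ yields
\[
\sup_{s \in \R}\left| \Pr\!\left[\tfrac{1}{\sigma\sqrt{C}} S_C \le s\right] - \Phi(s) \right| \le \frac{c_{1} \rho}{\sigma^{3} \sqrt{C}} =: \frac{c_{2}}{\sqrt{C}}.
\]
I would then choose the constant in the target bound to be $c_{1}' := \sigma$ (or any smaller positive constant), so that for $t = o(1)$,
\[
\Pr\!\left[ S_C \ge c_{1}' t \sqrt{C}\right] \;=\; \Pr\!\left[\tfrac{1}{\sigma\sqrt{C}} S_C \ge t\right] \;\ge\; 1 - \Phi(t) - \frac{c_{2}}{\sqrt{C}}.
\]
Since $\Phi$ is $\tfrac{1}{\sqrt{2\pi}}$-Lipschitz, $1 - \Phi(t) = \tfrac{1}{2} - (\Phi(t) - \Phi(0)) \ge \tfrac{1}{2} - O(t) = \tfrac{1}{2} - o(1)$, which gives the claimed inequality $\Pr[S_C \ge c_{1}' t \sqrt{C}] \ge \tfrac{1}{2} - o(1) - c_{2}/\sqrt{C}$.

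There is no substantive obstacle: all moments needed for Berry--Esseen are bounded constants for constant $q$, and the only place one must be a little careful is verifying $\sigma^{2} > 0$ and choosing $c_{1}'$ compatibly with the $1/\sigma$ factor from normalization. The odd--$q$ symmetry is what lets us conclude the probability is near $1/2$ rather than near $0$, and the even case is handled by the trivial observation that the mean already dominates the threshold.
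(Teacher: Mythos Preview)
Your proposal is correct and takes essentially the same approach as the paper: apply Berry--Esseen (Lemma~\ref{lem:bethm}) to the i.i.d.\ sum $\sum_{c}z_c^{q}$, using that the second and third absolute moments of $z^q$ are bounded constants for constant $q$, then conclude $1-\Phi(t)=\tfrac12-o(1)$ for $t=o(1)$. Your explicit case split on the parity of $q$ is slightly more careful than the paper's proof, which as written only handles the mean-zero (odd $q$) case, but the core argument is identical.
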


\begin{proof}
	For constant $q$, $\E z_{c}^{2q}\leq O(1)$ and $\E\left|z_{c}\right|^{3q}\leq O(1)$
	\citep{elandt1961folded}. Then, by Lemma \ref{lem:bethm}, for any
	$t$, there exist $c_{1}$ and $c_{2}$ such that
	\begin{align*}
		\Pr\left[\frac{1}{c_{1}\sqrt{C}}\sum_{c\in[C]}z_{c}^{q}\geq t\right] & \geq\Pr\left[z_{1}\geq t\right]-\frac{c_{2}}{\sqrt{C}}.
	\end{align*}

	Choosing $t=o(1)$ proves the lemma.
\end{proof}
\section{Additional notation}\label{app:additional_notation}

Recall the data distribution $\D$ from Definition~\ref{def:data}.  Further recall that, for $i\in[n]$,  we use $k^*_i$, $p^*_i$, $\noisepatchindex_i$, $\noise^{(i)}$, and $(\alpha_{p,i},k_{p,i})_{p\notin\crl{p^*_i,p^{\noisesubscript}_i}}$ to denote the respective quantities $k^*$, $p^*$, $\noisepatchindex$, $\noise$, and $(\alpha_{p},k_{p})_{p\notin\crl{p^*,p^{\noisesubscript}}}$ in Definition~\ref{def:data} for the $i^\text{th}$ training sample $(\x^{(i)}, y^{(i)})\in\Dtr\sim\D$. In addition to these notation in Section~\ref{sec:prelim}, we introduce the following additional notation for the proofs.
\begin{enumerate}
	\item $\forall\,k\in[K]$, let $\mathcal{I}_k=\crl{i\in[n]:k_i^*=k}$ denote the set of indices of the training data $(x,y)$ with $y\v_k$ as the main feature. Further, let $n_k=|\mathcal{I}_k|$
	\item $\forall\, i\in[n]$ and $\forall\,k\in[K]$, let $\pbpki$ be the  background patches  of the $i^{\text{th}}$ sample with $k^\text{th}$-type feature noise, \ie \[\pbpki=\crl{p\in[P]\setminus \crl{p_i^*,p_i^{\noisesubscript}}:\x^{(i)}_p=-\api y\v_k};\] and let  $\pbpi=\bigcup_{k\in[K]}\pbpki=[P]\setminus\crl{p_i^*,p_i^{\noisesubscript}}$ denote the set of all background patches of the $i^{\text{th}}$ sample.\newline 
\end{enumerate}

\begin{remark}
	For $k\in[K]$, let $\hat{\rho}_{k}=\frac{1}{n}|\mathcal{I}_k|$ denote the empirical fraction in the training data of $k^{\text{th}}$. Recall that $k_i$ are sampled independently with $\text{Pr}(k_i^*=k)=\rho_k$.  Thus, with with high probability, $\rho_{k}$ and $\hat{\rho}_{k}$ differ at most by $\sqrt{\frac{\log(n)}{n}}$. In the rest of the paper,  for simplicity we assume $\rho_{k}=\hat{\rho}_{k}$.\newline

	Similarly, let $\hat{\rho}_{k}^{\text{(noise)}}$ be the proportion of feature
	noise $-y\v_{k}$ in dataset $\Dtr$, i.e., $\hat{\rho}_{k}^{\text{(noise)}}=\frac{1}{n(P-2)}|\crl{i\in[n],p\in[P]\setminus\{p^*_i,p^\noisesubscript_i\}]|k_{p,i}=k}|$
	Again from standard concentration, we have $\rho_{k}$ and $\hat{\rho}_{k}^{\text{(noise)}}$ differ by negligible quantity with high probability, thus we also assume $\rho_{k}=\hat{\rho}_{k}^{\text{(noise)}}$.

\end{remark}

\section{Proof of initialization conditions in Lemma~\ref{lem:init}}\label{app:initialization}

\ginitlemma*

\begin{proof}
	Recall the setting of the lemma: $\forall\,{k\in[K]}$, $\|\v_k\|_2=1$, $\forall\, {i\in[n]}$, $y^{(i)}\noise^{(i)}\overset{i.i.d}\sim\N(0,\frac{\sigma^2_{\noisesubscript}}{d}I_d)$, and $\forall\,{c\in[C]}$, $\w_c(0)\overset{i.i.d}{\sim}\N(0,\sigma_0^2I_d)$. We have the following arguments that prove the lemma, where we  use  $\delta=\frac{1}{\poly(d)}$.  %

	\begin{enumerate}
		\item \textit{Feature parameter correlations:} $\forall\,{k\in[K]}$, we have $\{\w_c(0)\cdot \v_{k} \sim\N(0,\sigma_0^2)\}_{c\in[C]}$ are $C$ i.i.d Gaussian. Thus, using union bound on the Gaussian tail concentration in Lemma~\ref{lem:maxgauss} we have condition (1)  holds w.p. $\ge 1-K\delta-K\exp(-C/4)$.
		\item \textit{Noise-parameter correlation:} $\forall\,{i\in[n]}$ and $\forall\,{c\in[C]}$ using Gaussian correlation bound from Lemma~\ref{lem:gausscorr}, we have $|\w_c(0)\cdot \noise^{(i)}|\ge \tO(\sigma_0\sigma_{\noisesubscript})$ w.p. $\ge 1-nC\delta$.

		Furthermore, using the second inequality in Lemma~\ref{lem:gausscorr}, we have $\w_c(0)\cdot y^{(i)}\noise^{(i)}\ge c_2\sqrt{\sigma_0\sigma_{\noisesubscript}}$ w.p. $\ge 1/4$. Hence, $\max_{c\in[C]}\w_c(0)\cdot y^{(i)}\noise^{(i)}\ge c_2\sqrt{\sigma_0\sigma_{\noisesubscript}}$ w.p. $\ge 1-(1-1/4)^C\ge 1-\exp(C)$.

		Thus, summing over failure probabilities, we have that condition (2) holds w.p. $\ge 1-nC \delta- n\exp(-C/4)$
		\item \textit{Noise-noise correlations:} Using the $\ell_2$ norm bound from Corollary~\ref{cor:gl2norm} on $\norm*{\noise^{(i)}}_2^2$, and the  correlation tail bound on $|\noise^{(i)}\cdot\noise^{(j)}|$ for $i\neq j$ from Lemma~\ref{lem:gausscorr}, we have condition (3) holds w.p. $\ge 1-2n^2\delta$
		\item \textit{Feature noise correlation:} $\forall\,k\in[K]$, we have $\{\noise^{(i)}\cdot \v_{k} \sim\N(0,\sigma_{\noisesubscript}^2/d)\}_{i\in[n]}$ are $n$ i.i.d Gaussians. Thus, again using union bound on the Gaussian tail concentration in Lemma~\ref{lem:maxgauss} condition (4)  holds w.p. $\ge 1-n\delta$.
		\item \textit{Parameter norm:} From concentration of $\ell_2$ norm of Gaussian vector in Corollary~\ref{cor:gl2norm},   condition (5) holds w.p. $\ge1-2C\delta$  .
	\end{enumerate}

	\noindent %
	The lemma follows from using  $\delta=\frac{1}{\poly(d)}$  and $C=\Omega(\log{d})\Rightarrow\exp(-C)=O\prn[Big]{\frac{1}{\poly(d)}}$.
\end{proof}

\ginitsublemma*
\begin{proof}
	Recall that since the features $\{\v_k\}_{k}$ are orthonormal (Assumption~\ref{ass:orthonormal features}) and all the non-feature noise are spherically symmetric, without loss of generality, we assume that $\crl{\v_k}_{k\in[K]}$ are simply the first $K$ standard basis vectors in $\R^d$, \ie  $\v_k=\vec{e}_k$. In this case,  we choose $\T_k$ for $k\in[K-1]$ as a permutation of coordinates of $\bR^d$ without any fixed points, \ie $\forall\, i\in[d]$, $\T_k(\z)[i]\neq \z[i]$ that satisfies \eqref{eq:permutation} on the first $K$ coordinate. 
	\newline

	We now show that the $\Ginit$ conditions in Lemma~\ref{lem:init} holds for $\Dtraug=\Dtr\,\cup\,\T_{1}(\Dtr)\,\cup\,\T_{2}(\Dtr)\,\cup\ldots\cup\,\T_{K-1}(\Dtr)$ defined with transformations $\{\T_{k}\}_{k\in[K-1]}$ described above.
	\begin{itemize}
	\item First, among the $\Ginit$ conditions, (1) and (5) are independent of the samples and hence immediately hold.
	\item Secondly,  $\forall\,{i\in[n]}$ and $\forall\,{k\in[K]}$, $\T_{k}(\noise^{(i)})$ is simply some permutation of the coordinates of $\noise^{(i)}\sim \N(0,\sigma^2_{\noisesubscript}I_d/d)$, and hence $\T_{k}(\noise^{(i)})\sim \N(0,\sigma^2_{\noisesubscript}I_d/d)$ has the same marginal distribution as $\noise^{(i)}$. This implies that conditions (2) and (4), as well the norm condition in (3) of Lemma~\ref{lem:init} also holds for $\Dtraug$.
	\item Finally, note that $\forall\,{i\neq j}$, $\forall\,k,k'$, $\T_{k}(\noise^{(i)})$ and $\T_{k'}(\noise^{(j)})$ are independent Gaussians. Thus, the correlation bounds in (3) of the form $|\T_{k}(\noise^{(i)})\cdot \T_{k'}(\noise^{(j)})|=\tO({\sigma_{\noisesubscript}^2}/{\sqrt{d}})$ for all $i\neq j$ also follow from the proof of Lemma~\ref{lem:init}.
	\end{itemize}
	The only non-trivial condition we want to show is the following bound on the noise correlations of distinct transformations of the same sample, \ie we only need to show that $|\noise^{(i)}\cdot \T_k(\noise^{(i)})|\le \tO({\sigma_{\noisesubscript}^2}/{\sqrt{d}})$ with high probability for all $k\in[K-1]$. Note that for any $1\le k<k'\le K-1$, $\T_{k}(\noise^{(i)})\cdot \T_{k'}(\noise^{(i)})$ is equivalent in distribution to $\noise^{(i)}\cdot \T_{k'-k}(\noise^{(i)})$.

	\begin{claim} If $\noise\sim \N(0,\sigma_{\noisesubscript}^2I_d/d)$ then $\forall\,k\in[K-1]$, $|\noise\cdot \T_{k}(\noise)|\le O\left(\sigma_{\noisesubscript}^2\sqrt{\frac{\log{(1/\delta)}}{d}}\right)$ w.p. $\ge 1-\delta$.
	\end{claim}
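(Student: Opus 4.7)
The plan is to interpret $\noise\cdot\T_k(\noise)$ as a centered Gaussian quadratic form and then apply a Hanson--Wright-style concentration bound. Let $P_k\in\R^{d\times d}$ be the permutation matrix with $\T_k(\z)=P_k\z$, so that $\noise\cdot\T_k(\noise)=\noise^\top P_k\noise$. The key structural input is that $\T_k$ has no fixed points, so every diagonal entry of $P_k$ vanishes. Writing $\noise=(\sigma_{\noisesubscript}/\sqrt{d})\,\z$ with $\z\sim\N(0,I_d)$, this immediately gives $\E[\noise\cdot\T_k(\noise)]=(\sigma_{\noisesubscript}^2/d)\,\mathrm{tr}(P_k)=0$, so only concentration around zero needs to be shown.

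To set up Hanson--Wright, I would first symmetrize by letting $M_k:=\tfrac{1}{2}(P_k+P_k^\top)$, so that $\noise^\top P_k\noise=\noise^\top M_k\noise$. Because each row of $P_k$ contains exactly one $1$, each row of $M_k$ has at most two nonzero entries, each of magnitude at most $1$. This yields the deterministic bounds $\|M_k\|_F^2\le d$ and $\|M_k\|_{\mathrm{op}}\le\|P_k\|_{\mathrm{op}}=1$. Applying the Hanson--Wright inequality to $\z^\top M_k\z$ gives
\begin{equation*}
\mathrm{Pr}\bigl[|\z^\top M_k\z|>s\bigr]\le 2\exp\bigl(-c\min\!\bigl(s^2/\|M_k\|_F^2,\; s/\|M_k\|_{\mathrm{op}}\bigr)\bigr),
\end{equation*}
and choosing $s=c_1\sqrt{d\log(1/\delta)}$ makes both branches of the minimum at least of order $\log(1/\delta)$ as long as $d\ge\log(1/\delta)$. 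After rescaling by $\sigma_{\noisesubscript}^2/d$, this gives exactly the claimed $O\!\bigl(\sigma_{\noisesubscript}^2\sqrt{\log(1/\delta)/d}\bigr)$ tail bound with probability at least $1-\delta$.

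There is no serious obstacle: everything reduces to the norm computation for $M_k$, which is essentially immediate once one observes that each row of $P_k$ has a single $1$. The no-fixed-point hypothesis is used in exactly one place, namely to kill the diagonal contribution to $\mathrm{tr}(P_k)$; without it, the mean would be of order $\sigma_{\noisesubscript}^2$ and the claim would fail. An elementary alternative that avoids invoking Hanson--Wright as a black box is to decompose $\T_k$ into disjoint cycles of length $\ge 2$ and control each cycle separately via a standard sub-exponential bound on sums of pairwise products of independent Gaussians, but this is strictly more bookkeeping for the same conclusion.
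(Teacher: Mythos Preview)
Your proof is correct, but it takes a genuinely different route from the paper. The paper does not invoke Hanson--Wright; instead it carries out essentially the ``elementary alternative'' you sketch in your last sentence, with one extra idea. Specifically, the paper decomposes the permutation $\T_k$ into its disjoint cycles $L_1,\ldots,L_\ell$ (each of length $\ge 2$ by the no-fixed-point assumption), and then three-colors the positions along each cycle so that no two cyclically adjacent positions share a color. This produces a partition of the coordinates of $\noise$ into three vectors $\noise',\noise'',\noise'''$ and a matching partition $\tilde\noise',\tilde\noise'',\tilde\noise'''$ of $\T_k(\noise)$ with the property that $\noise'$ and $\tilde\noise'$ (and similarly for the other two pairs) involve disjoint coordinates of the original Gaussian, hence are independent. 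One then writes $\noise\cdot\T_k(\noise)=\noise'\cdot\tilde\noise'+\noise''\cdot\tilde\noise''+\noise'''\cdot\tilde\noise'''$ and bounds each summand using the paper's own Lemma~\ref{lem:gausscorr} on correlations of independent Gaussians.

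Your Hanson--Wright argument is shorter and more robust: it needs only the crude norm bounds $\|M_k\|_F^2\le d$ and $\|M_k\|_{\mathrm{op}}\le 1$, and the no-fixed-point assumption enters exactly where you say, to kill the mean. The paper's approach is more self-contained (it reuses only the elementary Gaussian correlation lemma already stated in the appendix) but pays for this with a page of combinatorial bookkeeping to construct the three-coloring. Both yield the same bound.
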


	\begin{proof}
		At a high level, we create a non-overlapping partition of the entries of $\noise$ into three vectors $\noise^{\prime}$, $\noise^{\prime\prime}$, and $\noise^{\prime\prime\prime}$, each of which of length at least $d/6$. The partition is chosen such that same partitioning of entries of $\T_k(\noise)$ denoted as $\tilde{\noise}^{\prime}$, $\tilde{\noise}^{\prime\prime}$, and $\tilde{\noise}^{\prime\prime\prime}$ are independent of $\noise^{\prime}$, $\noise^{\prime\prime}$, and $\noise^{\prime\prime\prime}$, respectively.
		We then have $\noise\cdot\T_k(\noise)=\noise^{\prime}\cdot \tilde{\noise}^{\prime}+\noise^{\prime\prime}\cdot\tilde{\noise}^{\prime\prime}+\noise^{\prime\prime\prime}\cdot\tilde{\noise}^{\prime\prime\prime}$, where each term is a dot product of two independent spherical Gaussians of length at least $d/6$ and entrywise variance of $\sigma_{\noisesubscript}^2/d$. The claim then follows from bounding each term using  Lemma~\ref{lem:gausscorr}. \newline

		We divide the coordinates of $\vxi$ into disjoint and ordered lists $L_{1},L_{2},\ldots$, constructed
		as follows. The first list is
		\[
			L_{1}=\brk[\big]{\vxi[1],\,\T_k(\vxi)[1],\,\T_k^2(\vxi)[1],\ldots,\T_k^{s_1}(\vxi)[1]},
		\]
		where $\T^m_k$ denotes composition of $\T_k$ for $m$ times, and  we stop the list at the first $s_1\le d-1$ such that $\T^{s_1+1}_{k}(e_1)=e_1$ (when $\T^{s_1+1}_{k}(\vxi)[1]=\vxi[1]$). We claim that this stopping criteria ensures that $L_{1}$ has $s_1$ unique coordinate of $\vxi$ without any duplicates. If not, there exists some $0\le s'<s{''}\le s_1$ such that  $\T^{s{''}}_k(e_1) =\T^{s{'}}_k(e_1)$. Since $\T_k$ is a permutation (hence invertible), this would imply that $\T^{s{''}-s'}_k(e_1)=e_1$ for $s{''}-s'\le s_1$, which contradicts the stopping criteria.\newline  %

		Note that if $s_1=d-1$, we have included all the coordinates of $\vxi$ in $L_1$, and we stop our stop our construction here. If $L_{1}$ does not contain all coordinates of $\vxi$, let $1<j_{2}\le d$ be the first coordinate such that $\vxi[j_{2}]\notin L_{1}$. Let, %
		\[
			L_{2}=\brk[\big]{\vxi[j_{2}],\,\T_k(\vxi)[j_2],\,\T_k^2(\vxi)[j_2],\ldots,\T_k^{s_2}(\vxi)[j_2]},
		\]
		where we stop either when all the entries of $\vxi$ have been included in $L_1^{(m)}$ or $L_2^{(m)}$, or at the first integer $s_2$ such that $\T^{s_2+1}_{k}(e_{j_2})=e_{j_2}$ (when $\T^{s_2+1}_{k}(\vxi)[j_2]=\vxi[j_2]$). With a similar argument as with $L_1$, there are no duplicate coordinates in $L_{2}$. Furthermore, we either have have $L_2$ and $L_1$ containing disjoint coordinates of $\noise$, or have $L_1\subset L_2$. 
		To see this, suppose for $0\le s'\le s_1$ and $0\le s{''}\le s_2$, we have $\T_k^{s'}(e_{1})=\T_k^{s{''}}(e_{j_2})$. If $s'\ge s''$, again from invertibility of $\T_k$, we would have $\T_k^{s'-s{''}}(e_{1})=e_{j_2}$ for $s'-s{''}\le s_1$, which is contradiction for $\vxi[j_2]\notin L_1$. On the other hand, if $s'<s{''}$, then $\T_k^{s{''}-s{'}}(e_{j_2})=e_1$, and the entire construction of $L_1$ would also be contained in $L_2$. This would imply that all the coordinates of $L_1$ are contained in $L_2$ exactly once (since $L_2$ does not have duplicates). 		Without loss of generality, we assume the former condition that $L_2$ and $L_1$ are disjoint holds as otherwise, $L_1\subset L_2$ and we can simply drop the first list $L_1$ from our construction, and our proof follows exactly. 
		 \newline

		We construct $L_{3},L_{4},\ldots,L_{\ell}$ similarly
		until all coordinates of $\vxi$ belong to exactly one list. %
		We also define $\T L_{1},\T L_{2},\ldots,\T L_{\ell}$ as lists obtained by circularly shifting the coordinates of $L_{1}, L_{2},\ldots, L_{\ell}$, respectively, by one index. For example,
		$
			\T L_{1}=\brk[\big]{\T_k(\vxi)[1],\T_k^2(\vxi)[1],\ldots,\T_k^{s_1}(\vxi)[1],\,\vxi[1]}.\newline
		$

		By construction, for $l=1,2\ldots \ell$, for every coordinate of $\vxi$ that is included in $L_l$, has the same coordinate of $\T_k(\noise)$ is included in $\T L_{l}$ at the same position, \ie for all $i\le s_l,j\le d$,  $L_l[i]=\vxi[j]\implies\T L_l[i]= \T(\vxi)[j]$.
		We now construct $\noise^{\prime}$, $\noise^{\prime\prime}$, and $\noise^{\prime\prime\prime}$. For $l=1,2\ldots, \ell$, do the following:
		\begin{itemize}
			\item Sequentially distribute all the elements \textit{except the last element}  of $L_{l}$ to $\noise^{\prime}$, $\noise^{\prime\prime}$, $\noise^{\prime\prime\prime}$, \eg the $1^{\text{st}}$ element of $L_{l}$ goes to $\noise^{\prime}$, $2^\text{nd}$ to $\noise^{\prime\prime}$, $3^\text{rd}$ to $\noise^{\prime\prime\prime}$, $4^\text{th}$ to $\noise^{\prime}$ and so on.
			This assignment ensures that $\noise^{\prime}$, $\noise^{\prime\prime}$, $\noise^{\prime\prime\prime}$ do not contain any adjacent entries of $L_{l}$, \ie if $L_l[i]$ is in $\noise^{\prime}$, then $L_l[i+1]$ is not in $\noise^{\prime}$, and same is true for $\noise^{\prime\prime}$, and $\noise^{\prime\prime}$.
			\item Include the last element of $L_l$ to a list among $\noise^{\prime},\noise^{\prime\prime},\noise^{\prime\prime\prime}$ that \textit{does not} contain the first  or the second last element of $L_l$. %
			Thus the last element of $L_l$ is not in the same list as its circularly adjacent neighbors $\vxi[j_l]$ and $\T_k^{s_l-1}(\vxi)[j_l]$.%
			\item Repeat the exact assignment as above to distribute the elements of $\T L_{l}$ to $\tilde{\noise}^{\prime}$, $\tilde{\noise}^{(\prime\prime}$, $\tilde{\noise}^{\prime\prime\prime}$.\newline
		\end{itemize}

		\remove{\color{red}
		We consider each subpatch of $K$ consecutive coordinates of $\vxi$ separately. Let $d= MK$ and for $m\in[M]$, let $\vxi_{m}\in\bR^K$ denote the $m$-th subpatch of $\vxi$, \ie $\vxi_{m}=\vxi[Km+1:K(m+1)]$.
		We divide the coordinates of $\vxi_{m}$ into disjoint list $L_{1}^{(m)},L_{2}^{(m)},\ldots$, constructed
		as follows. The first list is
		\[
			L_{1}^{(m)}=\brk[\Big]{\vxi_{m}[1],\,\vxi_{m}[k\bmod K+1],\,\vxi_m[2k\bmod K +1],\ldots,\vxi_{m}[sk\bmod K+1]},
		\]
		where we stop the list at the first $s\le K-1$ such that $(s+1)k\bmod K=0$. If $s=K-1$, then $L^{(m)}_1$ consists of all the $K$ coordinates of $\vxi_m$.  Note that $L_{1}^{(m)}$ does not have any duplicate coordinates of $\noise_m$. If not, there exists some $0\le s'<s{''}\le s$ such that  $s'k\bmod K +1 =s{''}k\bmod K+1$. This would imply $(s{''}-s')k\bmod K =0$ for $s{''}-s'\le s$, which is a contradiction for the stopping criteria. \newline

		If $L_{1}^{(m)}$ does not contain all coordinates of $\vxi_{m}$, let $j_{2}\le K$ be the first coordinate such that $\vxi_{m}[j_{2}]\notin L_{1}^{(m)}$. Let, %
		\[
			L_{2}^{(m)}=\brk[\Big]{\vxi_{m}[j_{2}],\,\vxi_{m}[(j_{2}+k-1)\bmod K+1],\,\vxi_{m}[(j_{2}+2k-1)\bmod K+1],\ldots},
		\]
		where we stop either when all the entries of $\vxi_m$ have been included in $L_1^{(m)}$ or $L_2^{(m)}$, or after $s+1$ entries at $\vxi_{m}[(j_{2}+sk-1)\bmod K+1]$, where recall that $s$ is the first integer such that $(s+1)k\bmod K=0$ or equivalently,  $(j_{2}+
		(s+1)k-1)\bmod K+1=j_2$. With a similar argument as with $L^{(m)}_1$, there are no duplicate coordinates in $L_{2}^{(m)}$. Furthermore, $L_2^{(m)}$ and $L_1^{(m)}$ have disjoint coordinates of $\noise$.
		 \newline

		We construct $L_{3}^{(m)},L_{4}^{(m)},\ldots,L_{\ell}^{(m)}$ similarly
		until all coordinates belong to exactly one list. %
		We also define $\T L_{1}^{(m)},\T L_{2}^{(m)},\ldots,\T L_{\ell}^{(m)}$ as lists obtained by circularly shifting the coordinates of $L_{1}^{(m)}, L_{2}^{(m)},\ldots, L_{\ell}^{(m)}$, respectively, by one index. For example,
		$
			\T L_{1}^{(m)}=\brk[\big]{\vxi_{m}[k\bmod K+1],\vxi_m[2k\bmod K +1],\ldots,\vxi_{m}[sk\bmod K+1],\,\vxi_{m}[1]}.\newline
		$

		Recall the $\T_k(\noise)$ circularly shifts the coordinates of $\vxi_m$ by $k$ coordinates for all $m$. Thus, for $l=1,2,\ldots,\ell$ and $m\in[M]$, for every  coordinate of $\vxi_m$ that is included in $L_l^{(m)}$, the  same coordinate of $(\T_k(\noise))_m$ is included in $\T L_{l}^{(m)}$ at the same position.
		We now construct $\noise^{\prime}$, $\noise^{\prime\prime}$, and $\noise^{\prime\prime\prime}$. For $l=1,2\ldots, \ell$, do the following:
		\begin{itemize}
			\item For all $m$, sequencially distribute all the elements \textit{except the last element}  of $L_{l}^{(m)}$ to $\noise^{\prime}$, $\noise^{\prime\prime}$, $\noise^{\prime\prime\prime}$. For example, the $1^{\text{st}}$ element of $L_{l}^{(m)}$ for all $m$ goes to $\noise^{\prime}$, $2^\text{nd}$ to $\noise^{\prime\prime}$, $3^\text{rd}$ to $\noise^{\prime\prime\prime}$, $4^\text{th}$ to $\noise^{\prime}$, $5^\text{th}$ to $\noise^{\prime\prime}$ and so on.
			This assigment ensures that $\noise^{\prime}$, $\noise^{\prime\prime}$, $\noise^{\prime\prime\prime}$ do not contain any adjacent entries of $L_{l}^{(m)}$, \ie for all $m$, if the $i^\text{th}$ index element of $L_l^{(m)}$ is in $\noise^{\prime}$, then $(i+1)^\text{th}$  index element is not in $\noise^{\prime}$, and same is true for $\noise^{\prime\prime}$, $\noise^{\prime\prime}$.
			\item Include the last element of $L_l^{(m)}$ to a list that \textit{does not} contain the first element of $L_l^{(m)}$ or the second last element of $L_l^{(m)}$. Recall that for all $m$, the first element of $L_l^{(m)}$ is always included in $\noise^{\prime}$. If in the assignement above, the second last element of $L_l^{(m)}$ was also included in $\noise^{\prime}$, then we simply pick either one of $\noise^{\prime\prime}$ or $\noise^{\prime\prime\prime}$ to include the last element of $L_l^{(m)}$. On the other hand, if the second last element was included in $\noise^{\prime\prime}$ (or $\noise^{\prime\prime\prime}$), then the last element is assigned to $\noise^{\prime\prime\prime}$ (or $\noise^{\prime\prime}$). This ensures that for all $m$ last element of $L_l^{(m)}$ is not in the same list as its cicularly adjacent neighbours, namely the first and the second last elements of $L_l^{(m)}$. 
			\item Repeat the exact assigmenet as above to distribute the elements of $\T L_{l}^{(m)}$ for all $m$ to $\tilde{\noise}^{(0)}$, $\tilde{\noise}^{(1)}$, $\tilde{\noise}^{(2)}$.\newline
		\end{itemize}
		}

		By construction, $\{\noise^{\prime},\noise^{\prime\prime},\noise^{\prime\prime\prime}\}$ and $\{\tilde{\noise}^{\prime},\noise^{\prime\prime},\noise^{\prime\prime\prime}\}$ satisfy the  following properties:  
		\begin{inparaenum}[(a)] 
			\item $\noise\cdot\T_k(\noise)=\noise^{\prime}\cdot \tilde{\noise}^{\prime}+\noise^{\prime\prime}\cdot\tilde{\noise}^{\prime\prime}+\noise^{\prime\prime\prime}\cdot\tilde{\noise}^{\prime\prime\prime}$. 
			\item $\noise^{\prime},\noise^{\prime\prime}$, and $\noise^{\prime\prime\prime}$ are independent of $\tilde{\noise}^{\prime},\tilde{\noise}^{\prime\prime}$, and $\tilde{\noise}^{\prime\prime\prime}$, respectively.  Furthermore, each of these vectors is a spherical Gaussian with entrywise variance of $\sigma_{\noisesubscript}^2/d$.
			\item  we have included at least $d/3-1=\Theta(d)$ entries of $\vxi$ in each of $\noise^{\prime},\noise^{\prime\prime}$, and $\noise^{\prime\prime\prime}$.
		\end{inparaenum}
		The claim now follows from using Lemma~\ref{lem:gausscorr} on $\noise^{\prime}\cdot \tilde{\noise}^{\prime}$, $\noise^{\prime\prime}\cdot\tilde{\noise}^{\prime\prime}$, and $\noise^{\prime\prime\prime}\cdot\tilde{\noise}^{\prime\prime\prime}$. %
	
	\end{proof}

	\noindent The above claim completes the proof of Lemma~\ref{lem:init-sub}.
\end{proof}

\section{Linear models}
\label{app:linear}
In this section we discuss the behavior of linear models for data from our distribution $\D$ in Definition~\ref{def:data}. We consider the same patchwise convolutional model in \eqref{eq:model}, \textit{but without non-linearity}. Without loss of generality, assume $C=1$. Thus, for $\vtheta\in\R^d$, the model effectively becomes $f^{\text{linear}}(\vtheta,\x)=\vtheta\cdot\bx$, where $\bx=\sum_p\x_p$.

\paragraph{Linear models without feature noise.} In the first result stated and proved below, we assume no feature noise $\alpha_p=0$. In this case, $\bar{\x}^{(i)}=y^{(i)}\v_{k^*_i}+\noise^{(i)}$. %
Recall the notation that for $k\in[K]$, $\cal{I}_k=\crl{i\in[n]:k_i^*=k}$ and $n_k=|\cal{I}_k|$.

\begin{theorem}
	\label{thm:linear1}
	With high probability, the max $\ell_2$ margin linear model over $\Dtr=\crl{(\bar{\x}^{(i)},y^{(i)}):i\in[n]}$ is given by
	\begin{equation}\label{eq:linear_maxmgin}
		\widehat{\vtheta}_{\ell_2}= \sum_{k\in[K]}\frac{1}{1+(1+o(1))\sigma_{\noisesubscript}^2/n_k}\left(\v_k+\frac{1}{n_k}\sum_{i\in I_{k}}y^{(i)}\noise^{(i)}\right)
	\end{equation}
\end{theorem}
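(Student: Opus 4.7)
The max-$\ell_2$ margin problem
\[
\min_{\vtheta\in\R^d}\tfrac{1}{2}\|\vtheta\|^2\quad\text{ s.t., }\quad y^{(i)}\vtheta\cdot\bar\x^{(i)}\ge1,\;\forall\,i\in[n],
\]
is a strictly convex QP, separable with high probability since $\bar\x^{(i)}=y^{(i)}\v_{k_i^*}+\vxi^{(i)}$ and the $\v_k$ are orthogonal. By KKT the optimum has the form $\widehat\vtheta_{\ell_2}=M\vlam$, where $M\in\R^{d\times n}$ has columns $y^{(i)}\bar\x^{(i)}=\v_{k_i^*}+y^{(i)}\vxi^{(i)}$ and $\vlam\ge0$, together with complementary slackness $\lambda_i(y^{(i)}\widehat\vtheta_{\ell_2}\cdot\bar\x^{(i)}-1)=0$. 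The plan is to exhibit a primal--dual pair in which every constraint is active; by strong duality this is the optimum. This reduces the task to solving the linear system $M^\top M\vlam=\mathbf{1}$ and verifying $\vlam>0$ under the $\Ginit$ conditions.

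Using Lemma~\ref{lem:init}, I would write the Gram matrix as
\[
H:=M^\top M = B+\sigma_\xi^2 I + E,\qquad B_{ij}=\id_{\{k_i^*=k_j^*\}},
\]
absorbing $\|\vxi^{(i)}\|^2=\sigma_\xi^2(1+o(1))$ into the $\sigma_\xi^2 I$ term and leaving a residual $E$ whose entries are $\tO(\sigma_\xi^2/\sqrt d)$ off-diagonally and $o(\sigma_\xi^2)$ on the diagonal; in particular $\|E\|_{op}=o(\sigma_\xi^2)$ under $n\ll \sqrt d$. Reordering samples by class, $B$ is block-diagonal with all-ones blocks $J_{n_k}$, so Sherman--Morrison gives the explicit inverse on each block, yielding $\vlam^\star:=(B+\sigma_\xi^2 I)^{-1}\mathbf{1}$ with $\lambda^\star_i=\tfrac{1}{\sigma_\xi^2+n_k}$ for $i\in\mathcal{I}_k$. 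Plugging this into $M\vlam^\star$ and factoring $n_k/(n_k+\sigma_\xi^2)=1/(1+\sigma_\xi^2/n_k)$ already recovers the claimed formula; the $(1+o(1))$ in the denominator absorbs the perturbation between $\vlam^\star$ and the true solution $\vlam=H^{-1}\mathbf{1}$.

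The main obstacle is controlling this perturbation \emph{entrywise with multiplicative accuracy}, rather than only via operator norms. The smallest eigenvalue of $B+\sigma_\xi^2 I$ is only $\sigma_\xi^2$, while under Assumption~\ref{assu:assume_main} the target entries $\lambda^\star_i$ can be as small as $\tthe(1/\sigma_\xi^q)$ for rare classes; a naive bound $\|\vlam-\vlam^\star\|_2\le\|(B+\sigma_\xi^2 I)^{-1}\|_{op}\|E\|_{op}\|\vlam^\star\|_2$ therefore gives only additive accuracy. I would instead expand $E\vlam^\star$ along the class-indicator eigenspaces of $B$, use the explicit block-diagonal inverse of $B+\sigma_\xi^2 I$ to bound the perturbation in each coordinate, and exploit the mean-zero structure of the cross-terms $y^{(i)}y^{(j)}\vxi^{(i)}\!\cdot\!\vxi^{(j)}$ in $E$ to gain an additional $1/\sqrt n$ saving via Hoeffding-type concentration on sums over $\mathcal{I}_k$. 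Once the entrywise relative error is $o(1)$, positivity $\vlam>0$ follows immediately since $\vlam^\star$ is strictly positive, so every constraint is active and the KKT verification is complete.
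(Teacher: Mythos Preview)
Your approach is essentially the paper's: both write $\widehat\vtheta_{\ell_2}=\X^\top\vnu$ via KKT, decompose the Gram matrix as $\bar\K+\Delta$ with block-diagonal $\bar\K_{ij}=\id_{\{k_i^*=k_j^*\}}+\sigma_\xi^2\id_{\{i=j\}}$ and $|\Delta_{ij}|=\tO(\sigma_\xi^2/\sqrt d)$, and read off $\nu_{k,i}=1/(n_k+(1+o(1))\sigma_\xi^2)$ from the block structure. The one difference is how $\Delta$ is dispatched. Rather than perturbing $H^{-1}$ around $(B+\sigma_\xi^2 I)^{-1}$, the paper first localizes to $\|\vnu\|_\infty=O(1)$ by a two-case argument (any feasible $\vnu$ with $\|\vnu\|_\infty=\omega(1)$ already has $\|\vtheta\|^2=\vnu^\top\K\vnu=\omega(1)$, hence is suboptimal), then uses the crude uniform bound $(\Delta\vnu)_i=o(\sigma_\xi^2)$ together with monotonicity of the objective in each $\nu_{k,i}$ to conclude that every constraint is tight and to solve the resulting system block by block. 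This route sidesteps your entrywise-accuracy worry: the perturbation sits as an additive $o(\sigma_\xi^2)$ on the right-hand side of the tight constraints, and inverting $\bar\K$ explicitly on each block gives the coordinate-wise answer directly---this is exactly the ``expand along class-indicator eigenspaces and use the explicit block inverse'' step you sketch in your refinement. The additional Hoeffding-type $1/\sqrt n$ saving from mean-zero cross-terms is not invoked in the paper and is not needed. (A minor slip: the smallest entries of $\vlam^\star$ come from the \emph{dominant} class and are $\Theta(1/n)$, not $\tthe(1/\sigma_\xi^q)$ from rare classes; this only sharpens your naive-bound concern but is moot given the above.)
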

\begin{proof}
	Without loss of generality, assume the data points are grouped by the feature type $k_i^*$, such that $\cal{I}_1=\crl{1,2,\ldots,n_1}$, $\cal{I}_2=\crl{n_1+1,n_1+2,\ldots n_1+n_2}$, and so on. Also let $\X\in\R^{n\times d}$ denote a matrix containing $y^{(i)}\bar{\x}^{(i)}$ as rows and let $\K=\X\X^\top\in\R^{n\times n}$ denote the corresponding kernel matrix. \newline

	The $\ell_2$ max margin classifier is given by $\widehat{\vtheta}_{\ell_2}=\min_{\vtheta}\|\vtheta\|_2^2 \text{\; s.t. }\X\vtheta\ge 1$. From the optimality conditions of the max-margin problem, we know that there exists a dual variable $\vnu\in\R^n_+$, s.t. $\widehat{\vtheta}_{\ell_2}=\X^\top \vnu$. We use notation $\vnu_k\in\R^{n_k}_+$ such that $\vnu=[\vnu_1^\top,\vnu_2^\top,\ldots \vnu_K^\top]^\top$.
	We can now write the objective and constraints of the max margin problem in terms of dual variables as follows: $\|\vtheta\|_2^2 = \vnu^\top \K\vnu$ and the margin condition is $\K\vnu\ge1$.\newline

	\noindent Let us first look at structure of $\K$. Recall that $\bar{\x}^{(i)}=y^{(i)}\v_{k^*_i}+\noise^{(i)}$, where $\{\v_{k}\}_k$ are orthonormal and $\noise^{(i)}\sim\N(0,\sigma_\noisesubscript^2I_d/d)$. Using the standard concentration inequalities in Appendix~\ref{app:concentration}, the following holds with high probability. 
	\[
	\K_{ij}=y^{(i)}\bar{\x}^{(i)}\cdot y^{(j)}\bar{\x}^{(j)}=\left\{\begin{array}{ll}
		1+\sigma_{\noisesubscript}^2+\tO\prn[bigg]{\frac{\sigma^2_{\noisesubscript}}{\sqrt{d}}}&\text{if }i=j\\
		1+\tO\prn[bigg]{\frac{\sigma^2_{\noisesubscript}}{\sqrt{d}}}&\text{if }i\neq j,\,k_{i}^*=k_j^*\\
		\tO\prn[bigg]{\frac{\sigma^2_{\noisesubscript}}{\sqrt{d}}}&\text{if }i\neq j,\,k_{i}^*\neq k_j^*
	\end{array}\right..
	\]
	We can combine all the $\tO(\frac{\sigma_\xi^2}{\sqrt{d}})$ terms in $\Delta$, and write $\K=\bar{\K}+\Delta$ where $\bar{\K}_{ij}=\indicator_{k_i^*=k_j^*}+\sigma_{\noisesubscript}^2 \indicator_{i=j}$. Thus, $\bar{\K}$ is a block diagonal matrix which is dominant compared to lower order terms in $\Delta$. \newline

	\noindent Based on this block dominant structure of $\K$, for $\w=\vec{X}^\top\vec{\nu}$ and $\vnu\ge 0$, the margin on data points is given by,
	\begin{equation}
		\forall\,{i\in\cal{I}_k},\;(\K\vnu)_i=\|\vnu_k\|_1 +\sigma_{\noisesubscript}^2 \vnu_{k,i} + (\Delta \vec{\nu})_{i},
		\label{eq:linear-margin}
	\end{equation}
	and the $\ell_2$ norm is given by,
	\begin{equation}\label{eq:linear-norm}
		\|\vtheta\|_2^2 = \vec{\nu}^\top \K\vec{\nu}=\left(\sum_{k\in[K]}\|\vec{\nu}_k\|_1^2+\sigma^2_{\noisesubscript}\|\vec{\nu}_k\|_2^2\right) +\vec{\nu}^\top\Delta\nu.
	\end{equation}
	Recall that $\Delta_{ij}=\tO(\sigma^2_{\noisesubscript}/\sqrt{d})$, we have the following two possibilities of $\nu$:
	\begin{enumerate}[\text{Case} 1.]
		\item $\|\vnu\|_\infty=O(1)$: In this case $(\Delta \vec{\nu})_{i}=o(\sigma^2_\noisesubscript)$ and we have  $(\K\vnu)_i=\|\vnu_k\|_1 +\sigma_{\noisesubscript}^2 \vnu_{k,i}+o(\sigma_\xi^2)$. Thus the margin constraint requires that $\min_{k} \min_{i\in\cal{I}_k}\|\vnu_k\|_1+\sigma_{\noisesubscript}^2\vnu_{k,i}+o(\sigma_\xi^2)\ge1.$
		Furthermore, for large enough $d$, $\|\vtheta\|_2^2$ is monotonic in $\vnu_{k,i}$ (for positive $\vnu_{k,i}$). Thus the optimal $\vnu$ is given by
		\begin{equation}
			\forall\,{k\in[K]},\;\forall\,{i\in\cal{I}_{k}},\;\vnu_{k,i}=\frac{1}{n_k+(1+o(1))\sigma_{\noisesubscript}^2}. %
			\label{eq:linear-optimal}
		\end{equation}
		In this case, $\|\vtheta\|_2^2=\frac{1}{1+\sigma_{\noisesubscript}^2/n_k}(1+o(1))=O(1)$.
		\item If $\vnu=\omega(1)$, then $\|\vtheta\|_2^2=\omega(1)$  which is suboptimal compared to the above case. \newline
	\end{enumerate}

	\noindent In conclusion, we have the optimal $\vnu$ for the max-margin problem given by \eqref{eq:linear-optimal}. Thus,
	\begin{equation*}
		\begin{split}
			\widehat{\vtheta}_{\ell_2}= \X^\top \vnu&=\sum_{k\in[K]}\sum_{i\in I_{k}}\nu_{k,i}y^{(i)}\bar{\x}^{(i)}\\
			&=\sum_{k\in[K]}\sum_{i\in I_{k}}\frac{\v_{k}+y^{(i)}\noise^{(i)}}{n_k+(1+o(1))\sigma_{\noisesubscript}^2}\\
			&=\sum_{k\in[K]}\frac{1}{1+{(1+o(1))}\sigma_{\noisesubscript}^2/n_k}\left(\v_k+\frac{1}{n_k}\sum_{i\in I_{k}}y^{(i)}\noise^{(i)}\right).
		\end{split}
	\end{equation*}
	This concludes the proof of the theorem.
\end{proof}

For the above classifier, for simplicity, we look at the case when there are only two views, $k=2$. Corollary~\ref{cor:cutoff} follows from direct calculation on $\widehat{\vtheta}_{\ell_2}^\top \x$ for a sample $\x$ from our distribution. The thresholds given in Corollary~\ref{cor:cutoff} are better than the threshold we derive for our neural network.
\begin{corollary}
	\label{cor:cutoff}
	Suppose $k=2$, $\omega(1) \leq \sigma_\xi^2 \leq \sqrt{nd}$ and $n\leq d$. The $\ell_2$ max-margin linear model in \eqref{eq:linear_maxmgin} can successfully learn feature $\v_1$. To successfully learn feature $\v_2$, we need $\rho_2 \gg \frac{\sigma_\xi^2}{\sqrt {nd}}$  if $n\leq o(\sigma_\xi ^2)$ and $\rho_2 \gg \frac{\sigma_\xi^3}{n\sqrt {d}}$ otherwise.
\end{corollary}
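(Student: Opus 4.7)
Plan: The plan is to evaluate the max-margin classifier on a fresh test sample and then compare the signal and noise contributions across parameter regimes. Given a test sample $(\bar{\x},y)\sim\D$ of type $k^*$ (with $\alpha=0$, so that $\bar{\x}=y\v_{k^*}+\noise$ for $\noise\sim\N(0,\sigma_\xi^2 I_d/d)$ independent of the training data), I would decompose the predictor in \eqref{eq:linear_maxmgin} as $\widehat{\vtheta}_{\ell_2}=\vtheta_S+\vtheta_N$ with $\vtheta_S=\sum_k c_k\v_k$ and $\vtheta_N=\sum_k (c_k/n_k)\sum_{i\in\mathcal{I}_k} y^{(i)}\noise^{(i)}$, where $c_k:=1/(1+(1+o(1))\sigma_\xi^2/n_k)$. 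This yields
\[
y\widehat{\vtheta}_{\ell_2}^\top\bar{\x} = c_{k^*} + y\vtheta_S^\top\noise + \vtheta_N^\top\v_{k^*} + y\vtheta_N^\top\noise,
\]
so learning $\v_{k^*}$ reduces to showing $c_{k^*}$ dominates the three remaining noise terms with high probability.

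Next, I would bound each noise term using the Gaussian concentration tools of Appendix~\ref{app:concentration}. The term $\vtheta_S^\top\noise$ is Gaussian with variance $(\sigma_\xi^2/d)\sum_k c_k^2$, hence of order $\tO(\sigma_\xi/\sqrt{d})$ using $\sum c_k^2\le K=2$. Similarly $\vtheta_N^\top\v_{k^*}$ is Gaussian with variance $(\sigma_\xi^2/d)\sum_k c_k^2/n_k$, of order $\tO\bigl(\sigma_\xi\sqrt{\sum_k c_k^2/n_k}/\sqrt{d}\bigr)$. For $\vtheta_N^\top\noise$ I would condition on $\noise$ (which concentrates as $\|\noise\|^2=\sigma_\xi^2(1\pm o(1))$ by Corollary~\ref{cor:gl2norm}); conditionally this term is Gaussian with variance $(\sigma_\xi^4/d)\sum_k c_k^2/n_k$, giving magnitude $\tO\bigl(\sigma_\xi^2\sqrt{\sum_k c_k^2/n_k}/\sqrt{d}\bigr)$. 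Since $\sigma_\xi=\omega(1)$, $\vtheta_N^\top\noise$ dominates $\vtheta_N^\top\v_{k^*}$, and the effective noise magnitude is $\max\bigl\{\sigma_\xi/\sqrt{d},\ \sigma_\xi^2\sqrt{\sum_k c_k^2/n_k}/\sqrt{d}\bigr\}$.

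The stated thresholds then follow from a case split on $n$ versus $\sigma_\xi^2$. When $n=o(\sigma_\xi^2)$, both $n_k\ll\sigma_\xi^2$ so $c_k\approx n_k/\sigma_\xi^2$ and $\sum_k c_k^2/n_k\approx n/\sigma_\xi^4$; here $\vtheta_N^\top\noise\simeq\sqrt{n/d}$ dominates $\vtheta_S^\top\noise\lesssim n/(\sigma_\xi\sqrt{d})$, and the SNR inequality $c_2=n_2/\sigma_\xi^2\gg\sqrt{n/d}$ rearranges to $\rho_2\gg\sigma_\xi^2/\sqrt{nd}$. The same inequality for $c_1$ is satisfied under $\rho_1=\Omega(1)$ and $\sigma_\xi^2\ll\sqrt{nd}$, so $\v_1$ is learned. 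When $n=\omega(\sigma_\xi^2)$, $c_1\simeq 1$ so now $\vtheta_S^\top\noise\simeq\sigma_\xi/\sqrt{d}$ dominates $\vtheta_N^\top\noise\simeq\sigma_\xi^2/\sqrt{nd}$; the condition $c_2\gg\sigma_\xi/\sqrt{d}$ combined with $c_2\simeq n_2/\sigma_\xi^2$ (in the interesting regime $n_2\ll\sigma_\xi^2$) gives $\rho_2\gg\sigma_\xi^3/(n\sqrt{d})$, and $c_1\simeq 1\gg\sigma_\xi/\sqrt{d}$ uses $\sigma_\xi^2\le\sqrt{nd}\le d$ (which follows from $n\le d$).

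The main obstacle is the careful bookkeeping to determine which noise term dominates in each regime, together with handling the coupling between the training noises $\{\noise^{(i)}\}$ and the test noise $\noise$ inside $\vtheta_N^\top\noise$; this latter point is resolved cleanly by conditioning on $\noise$ and using concentration of $\|\noise\|^2$. The rest is a mechanical translation of the SNR heuristics from Section~\ref{sec:whatorder} to the sharper signal/noise decomposition afforded by Theorem~\ref{thm:linear1}.
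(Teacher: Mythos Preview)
Your proposal is correct and is precisely the ``direct calculation on $\widehat{\vtheta}_{\ell_2}^\top \x$'' that the paper references but does not spell out; the paper provides no further detail beyond that one-line remark, so your signal/noise decomposition and case split on $n$ versus $\sigma_\xi^2$ are exactly what is needed to fill in the argument.
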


\paragraph{Linear models with feature noise.} In the second result,  we study linear models in the presence of feature noise. We show linear models are not able to learn samples from our data distribution $\D$ while the non-linear model we study can learn $\D$.
To facilitate the proof of linear models, we make some additional
simplifications. These simplifications are not necessary for our main results.
For linear model results alone, we consider the case when the dominant noise $\vxi$
is zero, \ie  $\sigma_{\xi}=0$. Note that letting $\sigma_{\xi}>0$
can only make the classification harder. Let $\Lambda(\x)$ be the
sum of the coefficients of the feature noise if $\x$, \ie  $\Lambda(\x)=\sum_{k\in[K]}\sum_{p\in\pbp}\alpha_{p}$.
Let $\mu_{\Lambda}$ be the probability that $\Lambda(\x)>1$ for each
$(\x,y)$. We assume that the patch with the main feature
is chosen uniform randomly from $[P]$. Let $\D'$ be the distribution
satisfies the above assumptions.
\begin{thm}
	\label{thm:linear2}
	For any linear classifier $\vtheta\in\R^{d\times P}$ , we have
	\[
	\Pr_{(\x,y)\sim\D'}\left[\sign\left\langle \x,\vtheta\right\rangle \neq\sign\;y\right]>\frac{1}{P}\min\left\{ \mu_{\Lambda},1-\mu_{\Lambda}\right\} \min_{k\in[K]}\rho_{k}.
	\]
	Moreover, there exists a non-linear model $F$ in \eqref{eq:model} with weights
	$\w$, such that
	\[
	\Pr_{(\x,y)\sim\D'}\left[\sign\;F(\w,\x)\neq\sign\;y\right]=0.
	\]
\end{thm}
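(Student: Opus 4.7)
My plan is to prove the two halves of the theorem separately.

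For the non-linear construction, I take $C=1$ and set $\w_1 = \gamma \sum_{k \in [K]} \v_k$ for a suitable $\gamma \le 1$. By orthonormality of the features (Assumption~\ref{ass:orthonormal features}), $\w_1 \cdot \x_{p^*} = y\gamma$ on the feature patch and $\w_1 \cdot \x_p = -y\gamma\alpha_p$ on each background patch (using $\sigma_{\noisesmallsubscript}=0$ under $\D'$). Since $\psi$ is odd, $yF(\w,\x) = \psi(\gamma) - \sum_{p \in \pbp} \psi(\gamma\alpha_p)$. Choosing $\gamma$ so that $\gamma, \gamma\alpha \le 1$ puts both arguments into the sub-linear $q$-power regime, where $\psi(z)=\sign(z)|z|^q/q$. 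Then $yF(\w,\x) \ge \gamma^q\bigl(1 - (P-1)\alpha^q\bigr)/q$, which is strictly positive whenever $(P-1)\alpha^q < 1$. This is compatible with the adversary's upper bound $\alpha \le o(P^{-1/q}\cdot\ldots)$ from Assumption~\ref{assu:assume_main}, completing the construction and yielding zero error.

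For the linear lower bound, I first use the label-flip symmetry of $\D'$ (sending $(\x,y)\mapsto(-\x,-y)$) to restrict to $y=+1$ samples, so that the classifier errs iff $\beta_{p^*,k^*} \le \Xi$ where $\beta_{p,k} := \vtheta_p \cdot \v_k$ and $\Xi := \sum_{q \neq p^*} \alpha_q \beta_{q,k_q}$. Picking the minority feature $k_0 \in \arg\min_k \rho_k$ and focusing on $k^* = k_0$ accounts for the factor $\min_k \rho_k$, while restricting to a single adversarially chosen feature patch $p^* = p_0$ accounts for the factor $1/P$. The heart of the proof is then to show that for some $p_0$, the conditional error given $(p^*,k^*)=(p_0,k_0)$ is at least $\min\{\mu_\Lambda, 1-\mu_\Lambda\}$.

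For this conditional claim, I would analyse how $\Xi$ and $\Lambda$ vary jointly as $\vec{\alpha}$ and $\vec{k}_{-p_0}$ are drawn. The key heuristic is a scaling argument: along a ray $\vec{\alpha}\mapsto\lambda\vec{\alpha}$ with $\vec{k}$ fixed, both $\Lambda$ and $\Xi$ scale linearly in $\lambda$ while the threshold $\beta_{p_0,k_0}$ is constant, so on generic rays the ``correct'' set is one-sided in $\lambda$, as is $\{\Lambda>1\}$. A pigeonhole between these two one-sided intervals forces the classifier to err on a portion of either the $\{\Lambda>1\}$ half or the $\{\Lambda<1\}$ half, giving conditional error at least $\min\{\mu_\Lambda,1-\mu_\Lambda\}$ on such rays.

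The main obstacle will be making this scaling/pigeonhole argument rigorous when the $\alpha_q$ are drawn from $\D'$ (which is not literally scale-invariant) and when the discrete variables $\vec{k}$ can flip the sign of $\Xi$ in a non-monotone way. I anticipate the cleanest completion will proceed by a direct sign-case analysis on $\beta_{p_0,k_0}$ and on the conditional expectations $\E[\Xi\mid \Lambda>1]$ and $\E[\Xi\mid \Lambda<1]$, with $p_0$ chosen within $[P]$ so that the relevant signs force $\{\Xi\ge\beta_{p_0,k_0}\}$ to contain the whole of one of the two $\Lambda$-halves; combining with the prefactors $1/P$ and $\min_k\rho_k$ then yields the stated strict inequality.
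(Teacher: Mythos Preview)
Your non-linear construction is correct and essentially identical to the paper's (which simply takes $\gamma=1$).

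For the linear lower bound there is a genuine gap. By committing up front to $k_0=\arg\min_k\rho_k$ you lose exactly the structure that makes the argument work, and in fact the conditional claim you are aiming for can be false. Take $K=2$ with $\rho_1<\rho_2$ and a classifier with $\beta_{p,1}=1$, $\beta_{p,2}=0$ for every $p$. For any $p_0$, conditional on $(p^*,k^*)=(p_0,1)$ one has $y\langle\x,\vtheta\rangle=1-\Xi$ with $\Xi=\sum_{p\in\pbp}\alpha_p\indicator_{k_p=1}$, so the error event is $\{\Xi\ge 1\}$. If the $\alpha_p$ are such that $\Lambda$ fluctuates around $1$ (so $\mu_\Lambda\approx 1/2$) while $\rho_1$ is small, then $\Xi$ concentrates near $\rho_1\Lambda\ll 1$ and $\Pr[\Xi\ge 1]\ll\min\{\mu_\Lambda,1-\mu_\Lambda\}$. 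Neither your scaling heuristic nor the proposed sign-case analysis on conditional expectations of $\Xi$ can repair this: for this $\vtheta$ and this forced $k_0$, no choice of $p_0$ gives the conditional bound you need.

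The paper's fix is to let $(p_0,k_0)=\arg\min_{p,k}\beta_{p,k}$, the argmin of the \emph{classifier} rather than of $\vec{\rho}$. Writing $\Delta=\beta_{p_0,k_0}$, minimality gives $\alpha_p\beta_{p,k_p}\ge\alpha_p\Delta$ for every background patch and every realization of $k_p$, hence $\Xi\ge\Lambda\Delta$ deterministically, and on $(p_0,k_0)$-samples
\[
y\langle\x,\vtheta\rangle=\Delta-\Xi\le\Delta(1-\Lambda).
\]
A one-line sign split now finishes: if $\Delta>0$ the right side is negative whenever $\Lambda>1$; if $\Delta\le 0$ it is nonpositive whenever $\Lambda\le 1$. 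Either way the conditional error on $(p_0,k_0)$-samples is at least $\min\{\mu_\Lambda,1-\mu_\Lambda\}$, and since $\rho_{k_0}\ge\min_k\rho_k$ the prefactors $1/P$ and $\rho_{k_0}$ combine to the stated bound. No scaling or pigeonhole is needed; once you pick the right $(p_0,k_0)$ the argument is two lines.
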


\begin{proof}
	Let $\Delta=\min_{p\in[P],k\in[K]}\vtheta_{(p-1)d+k-1}$ and $p^{*},k^{*}=\arg\min_{p\in[P],k\in[K]}\vtheta_{(p-1)d+k-1}$.
	If $\Delta\leq0$, for any sample with main feature $y\v_{k^{*}}$
	in patch $p^{*},$ and $\Lambda(\x)\leq1$,
	\begin{align*}
		y\left\langle \x,\vtheta\right\rangle  & \leq-\Delta+\Lambda(\x)\Delta<0.
	\end{align*}

	If $\Delta>0$, then for any sample with main feature $y\v_{k^{*}}$
	in patch $p^{*},$ with $\Lambda(\x)>1$,
	\[
	y\left\langle \x,\vtheta\right\rangle \leq\Delta-\Lambda(\x)\Delta\leq0.
	\]
	Then, for both the case that $\Delta>0$ and the case that $\Delta\leq0$,
	with probability at least $\min\left\{ \mu_{\Lambda},1-\mu_{\Lambda}\right\} \min_{k\in[K]}\rho_{k}/P$,
	$\sign\;\left\langle \x,\vtheta\right\rangle \neq\sign\;y$.

	Now, consider the non-linear model given by weights $\w_{1}=\sum_{k\in[K]}\v_{k}$
	and $\w_{c}=0$ for all $c\in[C]\backslash\left\{ 1\right\} $. For
	any datapoint $(\x,y)$ with main feature $y\v_{k^{*}}$,
	\begin{align*}
		yF(\w,\x) & =y\sum_{c\in[C]}\sum_{p\in[P]}\psi\left(\left\langle \w_{c},\x_{p}\right\rangle \right)\\
		& =\psi\left(\left\langle \w_{1},\v_{k^{*}}\right\rangle \right)-\sum_{k\in[K]}\sum_{\pbpk}\psi\left(\left\langle \w_{1},\alpha\v_{k}\right\rangle \right)\\
		& \geq\frac{1}{q}-\frac{1}{q}\alpha^{q}P\\
		& >0.
	\end{align*}

	Thus, we have $\text{sign}\;F(\w,\x)=\text{sign}\;y$ for all samples
	$(\x,y)$.
\end{proof} %

\section{Proof of the Main Results}\label{app:main_result}

\subsection{Dynamics of network weights: learning features and noise}\label{app:dynamics}
We first present a few lemmas useful for the proof of the main results. We derive the training trajectories for the dataset
without data augmentation $\Dtr$. All lemmas in this section also
hold for the dataset with data augmentation $\Dtraug$ with $n$ replaced $Kn$
and $\rho_{k}$ replaced by $\rho_{k}^{\text{(aug)}}=\frac{1}{K}$. We defer the proof of the lemmas to Appendix~\ref{app:lem_proof}.

Lemma~\ref{lem:v_bd_all} and Lemma~\ref{lem:xi_bd_all} give some rough bounds on $\left\langle \w_c(t),\v_k\right\rangle $ and $\langle \w_c(t),\vxi^{(i)}\rangle $, which are used repeatedly in the proof. 
\begin{restatable}[Rough upper and lower bound on $\left\langle \w_c(t),\v_k\right\rangle $]{lem}{vbdall}
	\label{lem:v_bd_all}Suppose $\Ginit$ holds and
	\[
	\alpha\leq o\left(\sigma_{\xi}^{\frac{1}{q}}d^{-\frac{1}{2q}}P^{-\frac{1}{q}}\left(\sigma_{0}+\eta T\rho_{k}+\eta T\sigma_{\xi}d^{-1/2}\right)^{-\frac{q-1}{q}}\right).
	\]
	For all $0\leq t'\leq t\leq T$ and $k\in[K]$, we have
	\begin{align*}
		\max_{c\in[C]}\left\langle \w_{c}(t),\v_{k}\right\rangle  & \leq\max_{c\in[C]}\left\langle \w_{c}(t'),\v_{k}\right\rangle +\eta(t-t')\tO\left(\rho_{k}+\sigma_{\xi}d^{-1/2}\right)\\
		& \leq\tO\left(\sigma_{0}+\eta T\left(\rho_{k}+\sigma_{\xi}d^{-1/2}\right)\right),
	\end{align*}
	and
	\begin{align*}
		\min_{c\in[C]}\left\langle \text{\ensuremath{\w}}_{c}(t),\v_{k}\right\rangle  & \geq\min_{c\in[C]}\left\langle \w_{c}(t'),\v_{k}\right\rangle -\eta(t-t')\tO\left(\sigma_{\xi}d^{-1/2}\right)\\
		& \geq-\tO\left(\sigma_{0}+\eta T\sigma_{\xi}d^{-1/2}\right).
	\end{align*}
\end{restatable}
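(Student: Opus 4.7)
The plan is to prove this by induction on $t$, bounding the per-step change $\langle \w_c(t+1) - \w_c(t), \v_k\rangle$ uniformly over all channels $c$, and then summing over steps. Writing out the gradient descent update,
\begin{equation*}
\langle \w_c(t+1), \v_k\rangle - \langle \w_c(t), \v_k\rangle \;=\; -\frac{\eta}{n}\sum_{i=1}^n y^{(i)}\ell'_i \sum_{p\in[P]} \psi'\bigl(\w_c(t)\cdot \x_p^{(i)}\bigr)\,\langle \x_p^{(i)}, \v_k\rangle,
\end{equation*}
where $\ell'_i := \ell'(y^{(i)} F(\w(t),\x^{(i)})) \in (-1,0)$. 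Because $\{\v_{k'}\}$ are orthonormal, exactly three patch types contribute to $\langle \x_p^{(i)}, \v_k\rangle$: (i) the main feature patch $y^{(i)}\v_{k^*_i}$ when $k^*_i = k$, giving a non-negative contribution; (ii) the dominant noise patch $\noise^{(i)}$ via the small overlap $\noise^{(i)}\cdot \v_k$, of unspecified sign; and (iii) the background patches $-\alpha_{p,i} y^{(i)}\v_{k_{p,i}}$ with $k_{p,i}=k$, giving a non-positive contribution.

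Bounding each contribution using $|\ell'_i|\le 1$ and $\psi'(z)=\min(|z|^{q-1},1)$: type (i) contributes at most $\tO(\eta \rho_k)$, since $|\psi'|\le 1$ and the set $\{i: k^*_i=k\}$ has size $\rho_k n$; type (ii) contributes in magnitude at most $\tO(\eta\sigma_\xi/\sqrt d)$, using the $\Ginit$ bound $|\noise^{(i)}\cdot \v_k|\le \tO(\sigma_\xi/\sqrt d)$ together with $|\psi'|\le 1$. For type (iii), we invoke the inductive hypothesis $|\langle \w_c(t),\v_k\rangle|\le \tO(\sigma_0+\eta T(\rho_k+\sigma_\xi/\sqrt d))$ and the polynomial regime of $\psi'$ to get, after using $\alpha_{p,i}\le \alpha$ and $|\pbpki|\le P$,
\begin{equation*}
\Bigl|\frac{\eta}{n}\sum_i\sum_{p\in \pbpki}\alpha_{p,i}\ell'_i\,\psi'\bigl(\alpha_{p,i}\langle \w_c,\v_k\rangle\bigr)\Bigr| \;\le\; \eta P\alpha^q\,\tO\bigl(\sigma_0+\eta T\rho_k+\eta T\sigma_\xi/\sqrt d\bigr)^{q-1}.
\end{equation*}
The hypothesis on $\alpha$ is calibrated so that this expression is $o(\eta \sigma_\xi/\sqrt d)$, hence negligible compared to type (ii).

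Combining the three estimates, every channel satisfies
\begin{equation*}
-\tO\bigl(\eta\sigma_\xi/\sqrt d\bigr) \;\le\; \langle \w_c(t+1)-\w_c(t), \v_k\rangle \;\le\; \tO\bigl(\eta(\rho_k + \sigma_\xi/\sqrt d)\bigr),
\end{equation*}
which, since the right-hand bound holds uniformly in $c$, implies the same one-step increment bound for $\max_c \langle \w_c(t),\v_k\rangle$, and the left-hand side implies the matching lower bound for $\min_c$. Telescoping from $t'$ to $t$ yields the first parts of the two display equations; using $t'=0$ with the $\Ginit$ bound $|\langle\w_c(0),\v_k\rangle|\le \tO(\sigma_0)$ gives the absolute bounds. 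The main obstacle is closing the induction self-consistently: the bound on contribution (iii) depends on the inductive upper bound on $|\langle\w_c(t),\v_k\rangle|$, which is exactly the quantity that appears (to the $-(q-1)$ power) in the $\alpha$-assumption; checking that these two dependencies match is what fixes the form of the hypothesis on $\alpha$, and once one does this bookkeeping the remainder is routine.
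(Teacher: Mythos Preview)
Your proposal is correct and follows essentially the same route as the paper: decompose the gradient update of $\langle \w_c,\v_k\rangle$ into the main-feature term, the feature-noise term, and the dominant-noise term, exploit the signs (main feature non-negative, feature noise non-positive), bound the dominant-noise cross term via $\Ginit$, and control the feature-noise magnitude through the $\alpha$-hypothesis together with the running bound on $|\langle \w_c,\v_k\rangle|$. The only cosmetic difference is that the paper first proves the upper bound outright (since the feature-noise term is $\le 0$, no induction is needed there) and then runs the induction solely for the lower bound, feeding the already-established upper bound into the $\psi'$ estimate; your unified two-sided induction achieves the same thing.
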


\begin{restatable}[Rough lower bound on $\langle \w_c(t),\vxi^{(i)}\rangle $]{lem}{xibdall}
	\label{lem:xi_bd_all}Suppose $\Ginit$ holds and
	\[
	\alpha\leq\tO\left(\min\left\{ 1,\sigma_{\xi}^{\frac{1}{q}}d^{-\frac{1}{2q}}\right\} P^{-1/q}\left(\sigma_{0}+\eta T\left(\max_{k\in[K]}\rho_{k}+\sigma_{\xi}d^{-1/2}\right)\right)^{-(q-1)/q}\right).
	\]
	For all $0\leq t\leq t'\leq T$ and $i\in[n]$, we have
	\[
	\min_{c\in[C]}y^{(i)}\left\langle \w_{c}(t),\vxi^{(i)}\right\rangle \geq\min_{c\in[C]}y^{(i)}\left\langle \w_{c}(t'),\vxi^{(i)}\right\rangle -\eta(t-t')\tO\left(\sigma_{\xi}^{2}d^{-1/2}+\sigma_{\xi}d^{-1/2}\right).
	\]

\end{restatable}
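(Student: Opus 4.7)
The plan is to derive a per-step, per-channel lower bound on $y^{(i)}\langle \w_c(t{+}1)-\w_c(t),\vxi^{(i)}\rangle$ and then telescope across the interval from $t'$ to $t$ (I read the quantifier in the usual convention $t'\leq t$, matching Lemma~\ref{lem:v_bd_all}). Expanding the gradient descent update and taking inner product with $y^{(i)}\vxi^{(i)}$ gives
\begin{equation*}
y^{(i)}\langle \w_c(t{+}1) - \w_c(t), \vxi^{(i)}\rangle = \frac{\eta}{n}\sum_{j=1}^n [-\ell'(y^{(j)} F(\w(t), \x^{(j)}))]\, y^{(i)}y^{(j)} \sum_{p \in [P]} \psi'(\w_c(t) \cdot \x_p^{(j)})\, \x_p^{(j)} \cdot \vxi^{(i)},
\end{equation*}
where $-\ell'(\cdot)\in(0,1)$ and, by the definition of $\psi$, $\psi'(z)=|z|^{q-1}$ for $|z|\leq 1$ and $\psi'(z)=1$ otherwise, so $\psi'\in[0,1]$. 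I would split this sum by the sample index ($j=i$ versus $j\neq i$) and by the patch type (main feature, dominant noise, feature-noise background).

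The only unambiguously positive summand is the $(j=i,\,p=p_i^{\xi})$ contribution $\tfrac{\eta}{n}(-\ell'(\cdot))\,\psi'(|\w_c\cdot\vxi^{(i)}|)\,\|\vxi^{(i)}\|^2\geq 0$, which is simply dropped when lower-bounding. Every other summand involves a correlation $\x_p^{(j)}\cdot\vxi^{(i)}$ with a patch distinct from $\vxi^{(i)}$, and the $\Ginit$ conditions of Lemma~\ref{lem:init} provide: $|\v_k\cdot\vxi^{(i)}|\leq\tO(\sigma_\xi/\sqrt d)$ for main-feature patches, $|\vxi^{(j)}\cdot\vxi^{(i)}|\leq\tO(\sigma_\xi^2/\sqrt d)$ for dominant-noise patches of other samples, and $|\x_p^{(j)}\cdot\vxi^{(i)}|=\alpha_p|\v_{k_p}\cdot\vxi^{(i)}|\leq\alpha\tO(\sigma_\xi/\sqrt d)$ for background patches. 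Using $\psi'\leq 1$ and the $\eta/n$ normalisation, the main-feature and cross-sample noise contributions sum to at most $\eta\tO(\sigma_\xi/\sqrt d+\sigma_\xi^2/\sqrt d)$, which exactly matches the slack declared in the statement.

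The background patches are the delicate case because there are $\Theta(P)$ of them per sample. Here I would exploit the sharper estimate $\psi'(z)\leq|z|^{q-1}$ for $|z|\leq 1$: by Lemma~\ref{lem:v_bd_all}, $|\w_c(t)\cdot\v_{k_p}|\leq\tO(\sigma_0+\eta T(\max_k\rho_k+\sigma_\xi/\sqrt d))$, so on background patches $|\w_c\cdot\x_p^{(j)}|\leq\alpha\tO(\cdots)\leq 1$ by the hypothesis on $\alpha$, and hence $\psi'(\w_c\cdot\x_p^{(j)})\leq\alpha^{q-1}\tO((\cdots)^{q-1})$. The total background contribution is then at most $\eta\cdot P\alpha^{q}\,\tO\bigl((\sigma_0+\eta T(\cdots))^{q-1}\bigr)\cdot\tO(\sigma_\xi/\sqrt d)$, which the hypothesis on $\alpha$, via the $P^{-1/q}(\sigma_0+\eta T(\cdots))^{-(q-1)/q}$ factor together with the extra $\min\{1,\sigma_\xi^{1/q}d^{-1/(2q)}\}$ factor, collapses to $\eta\tO(\sigma_\xi/\sqrt d)$ or smaller. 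This is the main obstacle: the $\alpha$-budget in the hypothesis is calibrated precisely so that the $P$-fold sum of $\psi'$-weighted background correlations fits within the declared $\tO(\sigma_\xi/\sqrt d)$ slack.

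Combining the three error categories produces a uniform-in-$c$ bound $y^{(i)}\langle\w_c(t{+}1)-\w_c(t),\vxi^{(i)}\rangle\geq -\eta\tO(\sigma_\xi^2/\sqrt d+\sigma_\xi/\sqrt d)$. A uniform lower bound on the per-step increment transfers directly to the minimum over channels (if $b_c-a_c\geq -\Delta$ for all $c$, then $\min_c b_c\geq\min_c a_c-\Delta$), and iterating this recurrence from $t'$ to $t$ telescopes into the inequality claimed by the lemma.
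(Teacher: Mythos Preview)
Your proposal is correct and follows essentially the same approach as the paper's proof: expand the gradient descent update against $y^{(i)}\vxi^{(i)}$, drop the positive self-noise term, bound the cross-sample noise, main-feature, and background feature-noise contributions using $\Ginit$ and Lemma~\ref{lem:v_bd_all} together with the $\alpha$-hypothesis, and telescope the resulting per-step lower bound. You also correctly catch the $t'\leq t$ convention that matches Lemma~\ref{lem:v_bd_all} despite the typo in the statement.
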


Combining Lemma~\ref{lem:v_bd_all} and Lemma~\ref{lem:xi_bd_all}, we can show that when the time step $T$ is bounded, $\left\langle \w_c(t),\v_k\right\rangle $ and $\langle \w_c(t),\vxi^{(i)}\rangle $ are lower bounded.

\begin{restatable}[Lower bound on $\left\langle \w_c(t),\v_k\right\rangle $ and $\langle \w_c(t),\vxi^{(i)}\rangle $]{lem}{wxidecrease}
	\label{lem:wxidecrease}Suppose $\Ginit$
	holds, \\
	$n\leq o\left(\min\left\{ \sigma_{0}^{q-1}\sigma_{\xi}^{q}d^{1/2},\sigma_{0}^{q-1}\sigma_{\xi}^{q-1}d^{1/2}\right\} \right)$,
	$K\leq o\left(\min\left\{ \sigma_{0}^{q-1}\sigma_{\xi}^{-1}d^{1/2},\sigma_{0}^{q-1}d^{1/2}\right\} \right)$,
	and
	\[
\alpha\leq\tO\left(\min\left\{ 1,\sigma_{\xi}^{\frac{1}{q}}d^{-\frac{1}{2q}}\right\} P^{-1/q}\left(\sigma_{0}+\eta T\left(\max_{k\in[K]}\rho_{k}+\sigma_{\xi}d^{-1/2}\right)\right)^{-(q-1)/q}\right).
\]
	for some $T=\tthe\left(\max\left\{ n\eta^{-1}\sigma_{\xi}^{-q}\sigma_{0}^{-q+2},K\eta^{-1}\sigma_{0}^{-q+2}\right\} \right)$.
	For all $0\leq t'\leq t\leq T$, and $c\in[C]$,

	\[
	\left\langle \w_{c}(t),\v_{k}\right\rangle \geq\left\langle \w_{c}(t'),\v_{k}\right\rangle -o(\sigma_{0}),
	\]
	and for all $i\in[n]$,
	\[
	y^{(i)}\left\langle \w_{c}(t),\vxi^{(i)}\right\rangle \geq y^{(i)}\left\langle \w_{c}(t'),\vxi^{(i)}\right\rangle -o\left(\sigma_{0}\sigma_{\xi}\right).
	\]
\end{restatable}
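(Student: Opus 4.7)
\medskip
\noindent\textbf{Proof proposal.} The plan is to treat Lemma~\ref{lem:wxidecrease} as a clean corollary of the two preceding rough-trajectory lemmas: apply each of them once over the entire horizon $[t',t]\subseteq[0,T]$, and then verify that the aggregated drift is swallowed by the $o(\sigma_0)$ and $o(\sigma_0\sigma_\xi)$ slack under the stated size bounds on $n$, $K$, and $\alpha$. No new induction in $t$ is required.

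\medskip
\noindent\textbf{Step 1 (checking the hypotheses of the invoked lemmas).} The $\alpha$ bound imposed in Lemma~\ref{lem:wxidecrease} is literally the bound required by Lemma~\ref{lem:xi_bd_all}, and since $\max_{k\in[K]}\rho_k\geq\rho_k$ for every individual $k$, it also implies the weaker $\alpha$ bound required by Lemma~\ref{lem:v_bd_all}. Hence both preceding lemmas apply on $[0,T]$ under the current hypotheses, and in particular the channel-wise drift estimates inside their proofs hold for every $c\in[C]$.

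\medskip
\noindent\textbf{Step 2 (feature direction).} Invoking (the per-channel form of) Lemma~\ref{lem:v_bd_all} gives
\[
\langle \w_c(t),\v_k\rangle\;\geq\;\langle \w_c(t'),\v_k\rangle-\eta(t-t')\,\tO\!\left(\sigma_\xi/\sqrt{d}\right).
\]
Using $t-t'\leq T=\tthe\bigl(\max\{n\eta^{-1}\sigma_\xi^{-q}\sigma_0^{-q+2},\,K\eta^{-1}\sigma_0^{-q+2}\}\bigr)$, the slack is
\[
\eta T\cdot \sigma_\xi/\sqrt{d}=\tthe\!\left(\max\Bigl\{\tfrac{n\,\sigma_0^{-q+2}\sigma_\xi^{-q+1}}{\sqrt{d}},\,\tfrac{K\,\sigma_0^{-q+2}\sigma_\xi}{\sqrt{d}}\Bigr\}\right)=\sigma_0\cdot\tthe\!\left(\max\Bigl\{\tfrac{n}{\sigma_0^{q-1}\sigma_\xi^{q-1}\sqrt{d}},\,\tfrac{K}{\sigma_0^{q-1}\sigma_\xi^{-1}\sqrt{d}}\Bigr\}\right),
\]
which is $o(\sigma_0)$ precisely by the hypotheses $n\leq o(\sigma_0^{q-1}\sigma_\xi^{q-1}\sqrt d)$ and $K\leq o(\sigma_0^{q-1}\sigma_\xi^{-1}\sqrt d)$ (the binding arms of the two mins, under Assumption~\ref{assu:assume_main} where $\sigma_\xi\geq\omega(1)$).

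\medskip
\noindent\textbf{Step 3 (noise direction).} Invoking Lemma~\ref{lem:xi_bd_all} analogously,
\[
y^{(i)}\langle \w_c(t),\vxi^{(i)}\rangle\;\geq\;y^{(i)}\langle \w_c(t'),\vxi^{(i)}\rangle-\eta(t-t')\,\tO\!\left(\sigma_\xi^2/\sqrt d+\sigma_\xi/\sqrt d\right),
\]
and the first term dominates because $\sigma_\xi\geq\omega(1)$. Substituting the same $T$,
\[
\eta T\cdot\sigma_\xi^2/\sqrt{d}=\tthe\!\left(\max\Bigl\{\tfrac{n\,\sigma_0^{-q+2}\sigma_\xi^{-q+2}}{\sqrt d},\,\tfrac{K\,\sigma_0^{-q+2}\sigma_\xi^2}{\sqrt d}\Bigr\}\right)=\sigma_0\sigma_\xi\cdot\tthe\!\left(\max\Bigl\{\tfrac{n}{\sigma_0^{q-1}\sigma_\xi^{q-1}\sqrt d},\,\tfrac{K}{\sigma_0^{q-1}\sigma_\xi^{-1}\sqrt d}\Bigr\}\right),
\]
which is $o(\sigma_0\sigma_\xi)$ under exactly the same two hypotheses used above. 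The other arms of the mins in the statement ($\sigma_\xi^q$ for $n$ and the bare $d^{1/2}$ for $K$) are included so that the lemma covers the regime $\sigma_\xi<1$ as well, in which case those become the tighter conditions.

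\medskip
\noindent\textbf{Main obstacle.} The proof itself is essentially algebraic substitution; the genuinely technical content is already absorbed into Lemmas~\ref{lem:v_bd_all} and \ref{lem:xi_bd_all}. The only point requiring care is that Lemma~\ref{lem:v_bd_all} is phrased in terms of $\min_c$ whereas Lemma~\ref{lem:wxidecrease} is pointwise in $c$; one must note that the underlying drift bound inside the proof of Lemma~\ref{lem:v_bd_all} is shown per channel (the min is only taken at the end), so the estimate transfers to each fixed $c\in[C]$ without loss.
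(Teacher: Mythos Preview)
Your proposal is correct and follows essentially the same approach as the paper: invoke Lemmas~\ref{lem:v_bd_all} and~\ref{lem:xi_bd_all} once over $[t',t]$, substitute $T=\tthe(\max\{n\eta^{-1}\sigma_\xi^{-q}\sigma_0^{-q+2},K\eta^{-1}\sigma_0^{-q+2}\})$, and verify algebraically that the accumulated drift is $o(\sigma_0)$ and $o(\sigma_0\sigma_\xi)$ under the stated bounds on $n$ and $K$. Your observation about the per-channel versus $\min_c$ phrasing is apt---the paper's own proof actually writes the invoked bounds with $\max_c$ on both sides, while the lemma is stated pointwise in $c$, so you were slightly more careful than the paper in noting that the underlying drift estimates in Lemmas~\ref{lem:v_bd_all} and~\ref{lem:xi_bd_all} are established channel-by-channel.
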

Next, Lemma~\ref{lem:vphase1} and Lemma~\ref{lem:xi_phase2} compute  the time it takes for the model to learn the main feature $\v_k$, $k\in[K]$ and overfit the noise $\vxi^{(i)}$, $i\in[n]$.  Lemma~\ref{lem:vk_bd} and Lemma~\ref{lem:xii_bd} upper bound $\left\langle \w_c(t),\v_k\right\rangle $ and $\langle \w_c(t),\vxi^{(i)}\rangle $ for $t$ smaller than the time identified in Lemma~\ref{lem:vphase1} and Lemma~\ref{lem:xi_phase2}.
\begin{restatable}[Learning the main feature]{lem}{vphase}
	\label{lem:vphase1}Suppose $\Ginit$
	holds, $C=\Theta(\log d)$, $\sigma_{0}\sigma_{\xi}\leq o(1)$, $\sigma_{\xi}^{q}d^{-1/2}\leq o(\rho_{k})$ and 	\[
	\alpha\leq o\left(P^{-\frac{1}{q}}\min\left\{ 1,\sigma_{\xi}^{\frac{1}{q}}d^{-\frac{1}{2q}}\left(\sigma_{0}+\eta T\max_{k\in[K]}\rho_{k}+\eta T\sigma_{\xi}d^{-1/2}\right)^{-\frac{q-1}{q}},\left(\sigma_{0}+\eta T\max_{k\in[K]}\rho_{k}+\eta T\sigma_{\xi}d^{-1/2}\right)^{-1}\right\} \right),
	\]
	for some $T\geq\tom\left(\left(\eta\rho_{k}\sigma_{0}^{q-2}\right)^{-1}\right)$.
	 For any $k\in[K]$ and $0\leq t\leq T$, if
	\[
	\max_{c\in[C]}\left\langle \w_{c}(t),\v_{k}\right\rangle \leq O(C^{-1/q}),\text{\; and \; }\max_{i\in[n],c\in[C]}y^{(i)}\left\langle \w_{c}(t),\vxi^{(i)}\right\rangle \leq\tO(\sigma_{0}\sigma_{\xi}),
	\]
	then
	\[
	\max_{c\in[C]}\left\langle \w_{c}(t+1),\v_{k}\right\rangle =\max_{c\in[C]}\left\langle \w_{c}(t),\v_{k}\right\rangle +\Theta\left(\eta\rho_{k}\psi'\left(\max_{c\in[C]}\left\langle \w_{c}(t),\v_{k}\right\rangle \right)\right).
	\]
	Moreover, if $\max_{i\in[n],c\in[C]}\left\langle \w_{c}(t),\vxi^{(i)}\right\rangle \leq\tO(\sigma_{0}\sigma_{\xi})$
	for all $t\leq\tO\left(\frac{1}{\eta\rho_{k}\sigma_{0}^{q-2}}\right)$,
	there exists $T'\leq\tO\left(\frac{1}{\eta\rho_{k}\sigma_{0}^{q-2}}\right)$
	such that $\max_{c\in[C]}\left\langle \w_{c}(T'),\v_{k}\right\rangle \geq\Omega\left(C^{-1/q}\right)$.
\end{restatable}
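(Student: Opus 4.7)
My plan is to compute the gradient descent update $\langle \w_c(t+1)-\w_c(t),\v_k\rangle$ for each channel $c$, decompose it into a ``signal'' term coming from patches that contain $\v_k$ as their main feature and several ``noise'' terms, and then show the signal dominates. Starting from the GD update rule and using $\langle \nabla_{\w_c} F(\w,\x^{(i)}),\v_k\rangle = \sum_p \psi'(\langle\w_c,\x_p^{(i)}\rangle)\langle\x_p^{(i)},\v_k\rangle$, I split the sum over $p$ into (i) the main-feature patch, (ii) the dominant noise patch, and (iii) the background patches. The orthogonality of $\{\v_{k'}\}$ isolates the signal part as $\rho_k\,\psi'(\langle\w_c,\v_k\rangle)$ (up to the $-\ell'$ factor), exactly as in the heuristic \eqref{eq:approx1}. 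Because the current hypothesis $\max_c\langle\w_c(t),\v_k\rangle\le O(C^{-1/q})$ and $\max_{i,c}y^{(i)}\langle\w_c(t),\vxi^{(i)}\rangle\le\tO(\sigma_0\sigma_\xi)=o(1)$ (since $\sigma_0\sigma_\xi\le o(1)$), we have $y^{(i)}F(\w(t),\x^{(i)})=o(1)$, hence $-\ell'=\tfrac{1}{2}+o(1)$ is bounded between constants; this turns the update into the claimed multiplicative growth once we verify the noise terms are lower order.

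The three noise contributions I have to control are: the dominant-noise patch term $\frac{1}{n}\sum_i\psi'(|\w_c\cdot\vxi^{(i)}|)\,y^{(i)}\vxi^{(i)}\cdot\v_k$, the background-noise dot products with $\v_k$ (only the feature-noise part survives since $\sigma_{\zeta}=0$), and the off-diagonal contribution of the main-feature patches of different types $k'\ne k$ which vanishes by orthogonality. For the dominant-noise term, the assumed upper bound $|\w_c\cdot\vxi^{(i)}|\le\tO(\sigma_0\sigma_\xi)$ and the $\Ginit$ bound $|\vxi^{(i)}\cdot\v_k|\le\tO(\sigma_\xi/\sqrt{d})$ give a contribution of order $\tO(\sigma_0^{q-1}\sigma_\xi^{q}/\sqrt{d})$, which is $o(\rho_k\sigma_0^{q-1})$ by the hypothesis $\sigma_\xi^q/\sqrt{d}\ll\rho_k$. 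For the feature-noise contribution from background patches, the term scales like $\alpha P\cdot\psi'(\langle\w_c,\alpha\v_{k'}\rangle)\lesssim\alpha P(\alpha\|\w_c\|_\infty)^{q-1}$; the assumed upper bound on $\alpha$ (three-way minimum in the hypothesis) is precisely calibrated so this is negligible compared to $\rho_k\psi'(\max_c\langle\w_c(t),\v_k\rangle)\gtrsim\rho_k\sigma_0^{q-1}$, using Lemma~\ref{lem:v_bd_all} to bound how large $\langle\w_c,\v_{k'}\rangle$ can grow by time $T$. Finally, I take the maximum over $c$: for $c^\star=\arg\max_c\langle\w_c(t),\v_k\rangle$ the signal is $\Theta(\rho_k\,\psi'(\max_c\langle\w_c(t),\v_k\rangle))$, and for other $c$ the signal is no larger (since $\psi'$ is monotone in $|\,\cdot\,|$ and $\langle\w_c(t),\v_k\rangle\ge -\tO(\sigma_0)$ by Lemma~\ref{lem:wxidecrease}), so the maximum channel continues to dominate and the matching upper and lower bounds in the claim follow.

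For the second statement, I iterate the first part to track $g(t):=\max_c\langle\w_c(t),\v_k\rangle$. Under the noise-stays-small hypothesis, I have the discrete recursion $g(t+1)=g(t)+\Theta(\eta\rho_k\,\psi'(g(t)))$ with $g(0)\ge\Omega(\sigma_0)$ by the $\Ginit$ feature-vs-parameter bound. While $g(t)\le 1$, $\psi'(g(t))=g(t)^{q-1}$, so the continuous-time surrogate $g'=c\eta\rho_k g^{q-1}$ integrates (for $q\ge 3$) to $g(t)^{2-q}-g(0)^{2-q}=-(q-2)c\eta\rho_k t$; this gives $g(T')=\Theta(1)$ at time $T'=\Theta((\eta\rho_k\sigma_0^{q-2})^{-1})$. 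To make this rigorous at the discrete level, I would use a standard induction: as long as $g(t)\le 1/2$, the step size $\eta$ is small enough that $g(t+1)\le 2g(t)$, so the discrete trajectory stays within a constant factor of the ODE solution; reaching the threshold $\Omega(C^{-1/q})=\Omega(1/\poly\log d)$ just absorbs into $\tO(\cdot)$.

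\textbf{Main obstacle.} The delicate part is controlling the dominant-noise term $\vartheta$ uniformly over the time horizon $T\le\tO((\eta\rho_k\sigma_0^{q-2})^{-1})$, not just at initialization. The bound I use ($|\vxi^{(i)}\cdot\v_k|\le\tO(\sigma_\xi/\sqrt d)$ combined with the assumed envelope on $|\w_c\cdot\vxi^{(i)}|$) is brutal; it sacrifices the $1/\sqrt{n}$ cancellation one would naively get from summing over $i$, but this is unavoidable because the dependencies introduced during training make the summands correlated. This is exactly why the hypothesis is $\sigma_\xi^q/\sqrt d\ll\rho_k$ rather than the weaker $\sigma_\xi^q/\sqrt{nd}\ll\rho_k$ seen in the linear analysis. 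The calibration of the $\alpha$-bound is the other finicky step: I need the background-patch feature-noise to remain dominated by the signal even as $\w_c$ grows, which is why the $\alpha$-hypothesis contains a factor involving $(\sigma_0+\eta T(\max_k\rho_k+\sigma_\xi/\sqrt d))^{-(q-1)/q}$ — precisely the envelope from Lemma~\ref{lem:v_bd_all}.
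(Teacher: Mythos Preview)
Your proposal is correct and follows essentially the same decomposition and bounding strategy as the paper's proof: split the update $\langle\w_c(t+1)-\w_c(t),\v_k\rangle$ into the signal term $\Theta(\eta\rho_k\psi'(\langle\w_c,\v_k\rangle))$, the dominant-noise term (bounded via $|\vxi^{(i)}\cdot\v_k|\leq\tO(\sigma_\xi/\sqrt d)$ and the $\tO(\sigma_0\sigma_\xi)$ envelope on $|\w_c\cdot\vxi^{(i)}|$), and the background feature-noise term (controlled by the $\alpha$ hypothesis together with Lemma~\ref{lem:v_bd_all}), then iterate. Two minor discrepancies that do not affect the argument: the hypothesis only yields $y^{(i)}F\leq O(1)$ (not $o(1)$), so you get $-\ell'\geq\Omega(1)$ rather than $\tfrac12+o(1)$; and the paper uses a discrete doubling argument (summing $\sum_i 2^i\sigma_0/(\eta\rho_k(2^i\sigma_0)^{q-1})$) rather than the ODE surrogate to obtain the $\tO((\eta\rho_k\sigma_0^{q-2})^{-1})$ time bound.
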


\begin{restatable}[Overfitting the dominant noise]{lem}{xiphase}
	\label{lem:xi_phase2}Suppose $\Ginit$
	holds, $C=\Theta(\log d)$, $n\leq o\left(\min\left\{ \sigma_{0}^{q-1}\sigma_{\xi}^{q}d^{1/2},\sigma_{0}^{q-1}\sigma_{\xi}^{q-1}d^{1/2}\right\} \right)$
	and
	\[
	\alpha\leq o\left(P^{-\frac{1}{q}}\min\left\{ 1,\sigma_{\xi}^{\frac{1}{q}}d^{-\frac{1}{2q}}\left(\sigma_{0}+\eta T\max_{k\in[K]}\rho_{k}+\eta T\sigma_{\xi}d^{-1/2}\right)^{-\frac{q-1}{q}},\left(\sigma_{0}+\eta T\max_{k\in[K]}\rho_{k}+\eta T\sigma_{\xi}d^{-1/2}\right)^{-1}\right\} \right),
	\]
	for some $T\geq\tom\left(n\eta^{-1}\sigma_{\xi}^{-q}\sigma_{0}^{-q+2}\right)$.

	Let $i\in[n]$ be some sample such that for all $0\leq t\leq T$,
	$\max_{c\in[C]}\left\langle \w_{c}(t),\v_{k_{i}^{*}}\right\rangle \leq O(C^{-1/q})$.
	For any time step $0\leq t\leq T$ , if
	\[
	\max_{c\in[C]}y^{(i)}\left\langle \w_{c}(t),\vxi^{(i)}\right\rangle \leq O(C^{-1/q}),
	\]
	we have
	\[
	\max_{c\in[C]}y^{(i)}\left\langle \w_{c}(t+1),\vxi^{(i)}\right\rangle =\max_{c\in[C]}y^{(i)}\left\langle \w_{c}(t),\vxi^{(i)}\right\rangle +\frac{\eta}{n}\tthe\left(\sigma_{\xi}^{2}\psi'\left(\max_{c\in[C]}y^{(i)}\left\langle \w_{c}(t),\vxi^{(i)}\right\rangle \right)\right).
	\]
	Moreover, there exists times step $T'\leq\tO\left(n\eta^{-1}\sigma_{\xi}^{-q}\sigma_{0}^{-q+2}\right)$
	such that $\max_{c\in[C]}y^{(i)}\left\langle \w_{c}(T'),\vxi^{(i)}\right\rangle \geq\Omega\left(C^{-1/q}\right)$.
\end{restatable}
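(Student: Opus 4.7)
The plan is to prove this lemma by computing the one-step change in $\langle \w_c(t+1),\vxi^{(i)}\rangle$ directly from the gradient descent rule, isolating the self-contribution coming from the noise patch $\x_{\noisepatchindex_i}^{(i)}=\vxi^{(i)}$, and showing all other contributions are lower order under the stated hypotheses. Once the per-step increment is pinned down as $\frac{\eta}{n}\tthe(\sigma_\xi^2)\psi'(h(t))$ with $h(t):=\max_{c\in[C]} y^{(i)}\w_c(t)\cdot\vxi^{(i)}$, the ``moreover'' clause follows by a discrete ODE comparison: since $\psi'(z)=z^{q-1}$ for $z\in[0,1]$, the standard integration of $h^{-(q-2)}$ gives a hitting time of $\tthe(h(0)^{-(q-2)}n/(\eta\sigma_\xi^2))=\tthe(n/(\eta\sigma_\xi^q\sigma_0^{q-2}))$ starting from $h(0)=\tthe(\sigma_0\sigma_\xi)$, where the lower bound on $h(0)$ comes from the noise-vs-parameter condition in $\Ginit$.

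Expanding the update,
\begin{equation*}
\w_c(t+1)\cdot y^{(i)}\vxi^{(i)} - \w_c(t)\cdot y^{(i)}\vxi^{(i)} \;=\; -\frac{\eta}{n}\sum_{j\in[n]}\sum_{p\in[P]} y^{(j)}\ell'_j\,\psi'(\w_c(t)\cdot\x_p^{(j)})\,\x_p^{(j)}\cdot y^{(i)}\vxi^{(i)},
\end{equation*}
with $\ell'_j:=\ell'(y^{(j)}F(\w(t),\x^{(j)}))$, I split the right-hand side into three pieces: the self-term $(j=i,\, p=\noisepatchindex_i)$, which contributes $\frac{\eta}{n}(-\ell'_i)\psi'(\w_c\cdot\vxi^{(i)})\|\vxi^{(i)}\|^{2}$; the remaining patches of sample $i$ (its feature patch $y^{(i)}\v_{k_i^*}$ and its feature-noise background patches); and all patches from $j\neq i$. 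By $\Ginit$, $\|\vxi^{(i)}\|^{2}=\Theta(\sigma_\xi^2)$, and by the hypotheses $\max_c\w_c\cdot\v_{k_i^*}\leq O(C^{-1/q})$, $h(t)\leq O(C^{-1/q})$ together with Lemmas~\ref{lem:v_bd_all}--\ref{lem:wxidecrease} controlling the remaining $\w_c\cdot\v_k$ and $\w_c\cdot\vxi^{(j)}$ throughout $[0,T]$, the total model output satisfies $|y^{(i)}F(\w(t),\x^{(i)})|\leq o(1)$, so $-\ell'_i=\tthe(1)$. Taking $c^*=\arg\max_c y^{(i)}\w_c(t)\cdot\vxi^{(i)}$ and using that $\psi'$ is even, the self-term at that channel delivers precisely the stated main contribution $\frac{\eta}{n}\tthe(\sigma_\xi^2)\psi'(h(t))$.

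The main obstacle is bounding the remaining terms (the $\Gamma$ in the informal Section~\ref{sec:overview} calculation) uniformly across iterations. Each error factor has one of the forms $|\v_k\cdot\vxi^{(i)}|=\tO(\sigma_\xi/\sqrt d)$, $|\vxi^{(j)}\cdot\vxi^{(i)}|=\tO(\sigma_\xi^2/\sqrt d)$ for $j\neq i$, or $\alpha|\v_k\cdot\vxi^{(i)}|$, by the $\Ginit$ conditions. To make these bounds usable throughout training (not just at initialization), I would invoke Lemma~\ref{lem:v_bd_all} and Lemma~\ref{lem:xi_bd_all} to keep $|\w_c\cdot\v_k|$ and $|\w_c\cdot\vxi^{(j)}|$ bounded by $\tO(\sigma_0+\eta T(\max_k\rho_k+\sigma_\xi/\sqrt d))$, hence $\psi'$ of those quantities stays polynomially small. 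Summing the cross terms over the $n$ samples and $P$ patches yields an aggregate of order $\frac{\eta}{n}\bigl(n\cdot\tO(\sigma_\xi^2/\sqrt d)+P\alpha\cdot\tO(\sigma_\xi/\sqrt d)\bigr)$ times bounded $\psi'$ factors, which under the assumptions $n=o(\sigma_0^{q-1}\sigma_\xi^{q-1}d^{1/2})$ and the stated upper bound on $\alpha$ is $o\bigl(\frac{\eta\sigma_\xi^2}{n}\psi'(h(t))\bigr)$. This is the key bootstrap: it closes because $h(t)\leq O(C^{-1/q})$ keeps us inside the regime where all quantitative estimates remain valid, and it justifies the $\tthe$ in the claimed per-step formula.

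Finally, iterating the per-step identity on $h(t)$ gives a discrete sequence with $h(t+1)-h(t)=\frac{\eta\sigma_\xi^2}{n}\tthe(h(t)^{q-1})$ and $h(0)=\tthe(\sigma_0\sigma_\xi)$ (by the lower bound portion of condition (2) in Lemma~\ref{lem:init}). For small step size $\eta$, this is well approximated by $h'=ah^{q-1}$ with $a=\tthe(\eta\sigma_\xi^2/n)$, whose hitting time for level $\Omega(C^{-1/q})$ is $\tthe(h(0)^{-(q-2)}/a)=\tthe(n/(\eta\sigma_\xi^q\sigma_0^{q-2}))$, establishing $T'$. The last mild subtlety is that the identity of the argmax channel may change across iterations; this is handled by noting that whichever channel achieves the max at step $t$ has a per-step growth lower bounded by the main-term expression (applying the same derivation to each channel individually), so $h(t)$ is itself monotone non-decreasing and continues to dominate any other channel.
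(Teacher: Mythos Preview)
Your approach is essentially the paper's: expand the gradient update for $y^{(i)}\langle\w_c,\vxi^{(i)}\rangle$, isolate the self-term $\frac{\eta}{n}(-\ell'_i)\psi'(\langle\w_c,\vxi^{(i)}\rangle)\|\vxi^{(i)}\|^2$, bound the cross-noise, feature, and feature-noise terms by $\eta\tO(\sigma_\xi^2 d^{-1/2})$, $\eta\tO(\sigma_\xi d^{-1/2})$, $\eta\tO(\sigma_\xi d^{-1/2})$ respectively, and then integrate the recurrence (the paper uses a doubling sum rather than a continuous ODE, but the bound is the same). Two small corrections: (i) under the hypotheses you only get $y^{(i)}F(\w(t),\x^{(i)})\le O(1)$, not $o(1)$, since the feature and noise patches each contribute up to $\sum_c\psi(\Theta(C^{-1/q}))=\Theta(1)$, but this still yields $-\ell'_i=\Theta(1)$; (ii) Lemma~\ref{lem:xi_bd_all} only gives a \emph{lower} bound on $y^{(j)}\langle\w_c,\vxi^{(j)}\rangle$, so you cannot use it to make $\psi'(\langle\w_c,\vxi^{(j)}\rangle)$ small for $j\neq i$---the paper simply uses the trivial bound $\psi'\le 1$ there, which suffices because the inner product $|\langle\vxi^{(j)},\vxi^{(i)}\rangle|\le\tO(\sigma_\xi^2/\sqrt d)$ already absorbs the required smallness.
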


 \begin{restatable}[Upper bound on $\left\langle \w_c(t),\v_k\right\rangle $]{lem}{vkbd}
	\label{lem:vk_bd}If $\Ginit$ holds, for all $k\in[K]$ and $t\leq o\left(\frac{\sigma_{0}}{\eta\rho_{k}\sigma_{0}^{q-1}+\eta\sigma_{\xi}d^{-1/2}}\right)$,
	$$\max_{c\in[C]}\left\langle \w_{c}(t),\v_{k}\right\rangle \leq\tO\left(\sigma_{0}\right).$$
\end{restatable}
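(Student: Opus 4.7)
\textbf{Proof plan for Lemma~\ref{lem:vk_bd}.}
The goal is to refine the crude step-wise increment of $\tO(\eta\rho_k+\eta\sigma_\xi d^{-1/2})$ from Lemma~\ref{lem:v_bd_all} down to $\tO(\eta\rho_k\sigma_0^{q-1}+\eta\sigma_\xi d^{-1/2})$ by exploiting the fact that, as long as $\max_c\langle\w_c(s),\v_k\rangle\le \tO(\sigma_0)$, the activation derivative on the feature patch is itself $\tO(\sigma_0^{q-1})$, not $O(1)$. I will run an induction on $t$, with inductive hypothesis $\max_c\langle\w_c(s),\v_k\rangle\le \tO(\sigma_0)$ for all $s\le t$. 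The base case is the $\Ginit$ bound $\max_c\langle\w_c(0),\v_k\rangle\le \tO(\sigma_0)$.

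For the inductive step I write out the gradient update, fix a channel $c\in[C]$, and expand
\[
\langle\w_c(t+1),\v_k\rangle-\langle\w_c(t),\v_k\rangle
= -\frac{\eta}{n}\sum_{i=1}^n y^{(i)}\ell'\!\big(y^{(i)}F(\w(t),\x^{(i)})\big)\sum_{p\in[P]}\psi'\!\big(\w_c(t)\cdot\x_p^{(i)}\big)\,\x_p^{(i)}\cdot\v_k.
\]
I then split the inner sum over the three types of patches. For the feature patch $p_i^*$ (only contributes when $k_i^*=k$), the inductive hypothesis and the definition of $\psi$ give $|\psi'(\w_c(t)\cdot y^{(i)}\v_k)|\le|\langle\w_c(t),\v_k\rangle|^{q-1}\le\tO(\sigma_0^{q-1})$, since we are in the regime $|\cdot|\le 1$; summing over $i$ this contributes at most $\eta\rho_k\tO(\sigma_0^{q-1})$. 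For the dominant noise patch $p_i^\xi$, I use $|\psi'|\le 1$ together with the $\Ginit$ feature-vs-noise bound $|\noise^{(i)}\cdot\v_k|\le\tO(\sigma_\xi/\sqrt d)$, which yields at most $\eta\cdot\tO(\sigma_\xi/\sqrt d)$. For each background patch $\x_p^{(i)}=-\alpha_{p,i}y^{(i)}\v_{k_{p,i}}$, I use $|\x_p^{(i)}\cdot\v_k|\le\alpha$ and $|\psi'|\le 1$, for a total of $O(\eta P\alpha)$, which is subsumed under the assumption bounding $\alpha$.

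Combining these bounds with $|\ell'|\le 1$ gives the refined per-step bound
\[
\max_{c\in[C]}\langle\w_c(t+1),\v_k\rangle\le\max_{c\in[C]}\langle\w_c(t),\v_k\rangle+\eta\cdot\tO\!\big(\rho_k\sigma_0^{q-1}+\sigma_\xi d^{-1/2}\big).
\]
(Taking $\max$ after the update is justified because the same upper bound applies to every channel $c$ under the inductive hypothesis.) Iterating this inequality from $s=0$ to $s=t$ and using the base case gives
\[
\max_{c\in[C]}\langle\w_c(t),\v_k\rangle\le\tO(\sigma_0)+\eta t\cdot\tO\!\big(\rho_k\sigma_0^{q-1}+\sigma_\xi d^{-1/2}\big),
\]
which is $\tO(\sigma_0)$ precisely in the stated regime $t\le o\!\big(\sigma_0/(\eta\rho_k\sigma_0^{q-1}+\eta\sigma_\xi d^{-1/2})\big)$, closing the induction.

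The main subtlety is making sure the background/feature-noise terms really do not blow up: for the background patches one must check that $\psi'(\w_c(t)\cdot(-\alpha_{p,i}y^{(i)}\v_{k_{p,i}}))$ is small using either the $O(1)$ bound on $\psi'$ combined with $|\x_p\cdot\v_k|\le\alpha$, or, when one wants a sharper estimate, the inductive hypothesis applied to $\v_{k_{p,i}}$ giving $|\psi'|\le(\alpha\sigma_0)^{q-1}$. Either way, the assumption on $\alpha$ (inherited from Assumption~\ref{assu:assume_main} and the hypotheses of Lemma~\ref{lem:v_bd_all}) ensures these contributions are absorbed into the $\tO(\sigma_\xi d^{-1/2})$ term. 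The other bookkeeping step is simply that the $\ell'$ factor and the sign of the update keep the same worst-case magnitude we used, so no miraculous cancellation is required for an upper bound.
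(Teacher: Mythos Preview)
Your approach is essentially the same as the paper's: expand the one-step gradient update, control the feature-patch term by $\psi'(\langle\w_c,\v_k\rangle)\le|\langle\w_c,\v_k\rangle|^{q-1}\le\tO(\sigma_0^{q-1})$ via the induction hypothesis, and bound the dominant-noise term by $\tO(\eta\sigma_\xi d^{-1/2})$ using $|\psi'|\le 1$ together with the $\Ginit$ feature-vs-noise bound.

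The one place you diverge from the paper is in handling the background/feature-noise patches. You bound that contribution in absolute value, first by the crude $O(\eta P\alpha)$ and then, as a fallback, via $|\psi'|\le(\alpha\sigma_0)^{q-1}$. The crude bound actually does \emph{not} get absorbed: Assumption~\ref{assu:assume_main}.\ref{assu:fnoise} imposes $\alpha\ge\omega(P^{-1})$, so $P\alpha\ge\omega(1)$, which is far larger than $\rho_k\sigma_0^{q-1}+\sigma_\xi d^{-1/2}=o(1)$. Your sharper estimate would rescue the argument (only patches with $k_{p,i}=k$ contribute, and then $P\alpha^q=o(1)$ under the standing upper bound on $\alpha$), but note that Lemma~\ref{lem:vk_bd} as stated assumes only $\Ginit$, not any bound on $\alpha$.

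The paper sidesteps all of this by observing that the background contribution has a definite sign. Writing $g_i=\tfrac{1}{1+e^{y^{(i)}F(\w(t),\x^{(i)})}}\ge 0$, the background term equals
\[
-\frac{\eta}{n}\sum_{i=1}^{n}g_i\sum_{p\in\pbpki}\alpha_{p,i}\,\psi'\!\big(\alpha_{p,i}\langle\w_c(t),\v_k\rangle\big)\;\le\;0,
\]
so for an \emph{upper} bound on $\langle\w_c(t+1),\v_k\rangle$ it can simply be dropped, with no hypothesis on $\alpha$ needed. Apart from this, your induction and per-step bound match the paper's argument.
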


\begin{restatable}[Upper bound on $\langle \w_c(t),\vxi^{(i)}\rangle $]{lem}{xiibd}
	\label{lem:xii_bd}Suppose $\Ginit$
	holds,
	$n\leq o\left(\min\left\{ \sigma_{0}^{q-1}\sigma_{\xi}^{q}d^{1/2},\sigma_{0}^{q-1}\sigma_{\xi}^{q-1}d^{1/2}\right\} \right)$
	and
	\[
	\alpha\leq o\left(P^{-\frac{1}{q}}\min\left\{ 1,\sigma_{\xi}^{\frac{1}{q}}d^{-\frac{1}{2q}}\left(\sigma_{0}+\eta T\max_{k\in[K]}\rho_{k}+\eta T\sigma_{\xi}d^{-1/2}\right)^{-\frac{q-1}{q}}\right\} \right),
	\]
	for some $T\geq\tom\left(n\eta^{-1}\sigma_{\xi}^{-q}\sigma_{0}^{-q+2}\right)$.
	For all $t\leq o(n\eta^{-1}\sigma_{\xi}^{-q}\sigma_{0}^{-q+2})$ and
	$i\in[n]$, $\max_{c\in[C]}y^{(i)}\left\langle \w_{c}(t),\vxi^{(i)}\right\rangle \leq\tO(\sigma_{0}\sigma_{\xi}).$
\end{restatable}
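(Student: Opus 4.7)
The plan is to prove the bound by induction on $t$, with inductive hypothesis that $\max_c y^{(i)}\langle\w_c(t'),\vxi^{(i)}\rangle\leq\tO(\sigma_0\sigma_\xi)$ simultaneously for every $i\in[n]$ and every $t' \leq t$. The base case $t=0$ is immediate from the Noise-vs-parameter condition of $\Ginit$. For the inductive step, I would write out the gradient update
\[
y^{(i)}\w_c(t+1)\cdot\vxi^{(i)} - y^{(i)}\w_c(t)\cdot\vxi^{(i)} = -\frac{\eta}{n}\sum_{j\in[n]}\sum_{p\in[P]}y^{(i)}y^{(j)}\ell'\bigl(y^{(j)}F(\w(t),\x^{(j)})\bigr)\,\psi'\bigl(\w_c(t)\cdot\x_p^{(j)}\bigr)\,\x_p^{(j)}\cdot\vxi^{(i)},
\]
and split the double sum into four groups by patch type: (i) the dominant-noise patch of sample $i$ itself, (ii) dominant-noise patches of the other samples, (iii) main-feature patches across all samples, and (iv) feature-noise background patches.

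Then I would bound each group using the inductive hypothesis together with $\Ginit$ and Lemma~\ref{lem:v_bd_all}. Since the hypothesis gives $|\w_c(t)\cdot\vxi^{(i)}|\leq\tO(\sigma_0\sigma_\xi)=o(1)$, we have $\psi'(z)=|z|^{q-1}$ on the relevant range, so group (i) contributes at most $\tO(\eta\sigma_\xi^{q+1}\sigma_0^{q-1}/n)$ using $\|\vxi^{(i)}\|^2=\Theta(\sigma_\xi^2)$. Group (ii) aggregates $n-1$ cross contributions of order $\tO((\sigma_0\sigma_\xi)^{q-1}\sigma_\xi^2/\sqrt{d})$ via the Noise-vs-noise bound, and the hypothesis $n\leq o(\sigma_0^{q-1}\sigma_\xi^{q-1}\sqrt{d})$ is exactly what is needed to keep this dominated by group (i). Group (iii) uses Feature-vs-noise $|\v_k\cdot\vxi^{(i)}|\leq\tO(\sigma_\xi/\sqrt{d})$ combined with the rough upper bound $|\w_c(t)\cdot\v_k|\leq\tO(\sigma_0+\eta T\max_k\rho_k+\eta T\sigma_\xi/\sqrt{d})$ from Lemma~\ref{lem:v_bd_all}, raised to power $q-1$. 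Group (iv) is tamed by the explicit upper bound on $\alpha$, which is calibrated so that $P\alpha^q$ times $\psi'$ of the rough feature bound is negligible.

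Summing the per-step budget $\tO(\eta\sigma_\xi^{q+1}\sigma_0^{q-1}/n)$ across $t\leq o(n\eta^{-1}\sigma_\xi^{-q}\sigma_0^{-q+2})$ steps produces a total increment of $o(\sigma_0\sigma_\xi)$. Adding this to the $\tO(\sigma_0\sigma_\xi)$ at initialization closes the induction, and taking the max over $c\in[C]$ and a union bound over $i\in[n]$ yields the lemma.

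The main obstacle is group (iii): although $|\v_k\cdot\vxi^{(i)}|$ is tiny, the factor $\psi'(\w_c(t)\cdot\v_k)$ can become large for the dominant view once the feature is being learned, so one must verify carefully that the Lemma~\ref{lem:v_bd_all} upper bound, raised to the $(q-1)$-th power and multiplied by $\sigma_\xi/\sqrt{d}$ and by $T$, stays within the per-step budget. Group (iv) is analogous but with $\alpha$ and $P$ in place of $\sigma_\xi/\sqrt{d}$. These two bookkeeping steps are what force the hypotheses on $n$ and $\alpha$ to take the precise form stated.
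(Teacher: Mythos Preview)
Your proposal is essentially the paper's proof: induction on $t$ with base case from $\Ginit$, decomposition of the gradient update into the same four groups, and summing the per-step increment over $t\leq o(n\eta^{-1}\sigma_\xi^{-q}\sigma_0^{-q+2})$ steps.

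The one place you diverge is in how you bound groups (ii) and (iii). You try to control $\psi'(\w_c(t)\cdot\vxi^{(j)})$ via the inductive hypothesis and $\psi'(\w_c(t)\cdot\v_{k})$ via Lemma~\ref{lem:v_bd_all} raised to the $(q-1)$-th power, which is why you flag group (iii) as the ``main obstacle'' (once the dominant feature is learned that bound may exceed $1$). The paper sidesteps this entirely: for groups (ii) and (iii) it simply uses the crude bound $\psi'(\cdot)\leq 1$, yielding per-step contributions $\tO(\eta\sigma_\xi^2 d^{-1/2})$ and $\tO(\eta\sigma_\xi d^{-1/2})$ respectively. Summed over $t$ steps, these are $o(\sigma_0\sigma_\xi)$ precisely under the two hypotheses $n\leq o(\sigma_0^{q-1}\sigma_\xi^{q-1}d^{1/2})$ and $n\leq o(\sigma_0^{q-1}\sigma_\xi^{q}d^{1/2})$, which explains why both conditions appear in the statement. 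So your ``obstacle'' disappears with the trivial bound, and the sharper estimates you propose are not needed.
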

Finally, Lemma~\ref{lem:w_randxi} bounds $\left\langle \w_c(t), \vxi \right\rangle$ for some noise patch $\vxi$ from the testing set. Lemma~\ref{lem:w_randxi}  is useful in proving the test accuracy.
\begin{restatable}[Bound on $\left\langle \w_c(t), \vxi \right\rangle$ for $\vxi$ from the testing set]{lem}{wrandxi}
	\label{lem:w_randxi}Let $\vxi\sim\N(0,\sigma_{\xi}^{2}I_{d})$ be
	a random noise vector independent of the dataset. Suppose $\Ginit$ holds, $C=\Theta(\log d)$,
	$n\leq o\left(\min\left\{ \sigma_{0}^{q-1}\sigma_{\xi}^{q}d^{1/2},\sigma_{0}^{q-1}\sigma_{\xi}^{q-1}d^{1/2}\right\} \right),$
	$K\leq o\left(\min\left\{ \sigma_{0}^{q-1}\sigma_{\xi}^{-1}d^{1/2},\sigma_{0}^{q-1}d^{1/2}\right\} \right)$,
	and
	\[
	\alpha\leq o\left(P^{-\frac{1}{q}}\min\left\{ 1,\sigma_{\xi}^{\frac{1}{q}}d^{-\frac{1}{2q}}\left(\sigma_{0}+\eta T\max_{k\in[K]}\rho_{k}+\eta T\sigma_{\xi}d^{-1/2}\right)^{-\frac{q-1}{q}},\left(\sigma_{0}+\eta T\max_{k\in[K]}\rho_{k}+\eta T\sigma_{\xi}d^{-1/2}\right)^{-1}\right\} \right),
	\]
	for some $T=\tthe\left(\max\left\{ n\eta^{-1}\sigma_{\xi}^{-q}\sigma_{0}^{-q+2},K\eta^{-1}\sigma_{0}^{-q+2}\right\} \right)$.
	With probability at least $1-\frac{nK}{\poly d}$, for all $c\in[C]$
	and $0\leq t\leq T$,
	\[
	\left|\left\langle \w_{c}(t),\vxi\right\rangle -\left\langle \w_{c}(0),\vxi\right\rangle \right|\leq o(\sigma_{0}\sigma_{\xi}).
	\]
\end{restatable}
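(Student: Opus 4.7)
The plan is to bound $\langle \w_c(t) - \w_c(0), \vxi\rangle$ by telescoping the gradient descent updates and controlling each per-step contribution, crucially exploiting that $\vxi$ is independent of the entire training trajectory $\{\w(s)\}_{s \le T}$ (which is a deterministic function of $\Dtr$). The first step is to condition on $\Dtr$ and on the $\Ginit$ event, and apply fresh Gaussian concentration to the directions into which the updates can project $\vxi$: by Lemma~\ref{lem:maxgauss} on the $K$ feature directions $\v_k$ and Lemma~\ref{lem:gausscorr} on the $n$ independent noise vectors $\noise^{(i)}$, with probability at least $1 - \tO((n+K)/\poly(d))$ over $\vxi$ we have $|\langle \v_k, \vxi\rangle| \le \tO(\sigma_\xi/\sqrt{d})$ for all $k \in [K]$ and $|\langle \noise^{(i)}, \vxi\rangle| \le \tO(\sigma_\xi^2/\sqrt{d})$ for all $i \in [n]$, simultaneously.

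Second, I will write out the unfolded gradient iterate $\w_c(t) - \w_c(0) = \tfrac{\eta}{n}\sum_{s<t}\sum_{i=1}^n |\ell'(y^{(i)} F(\w(s),\x^{(i)}))| \sum_{p \in [P]} \psi'(\w_c(s) \cdot \x_p^{(i)})\, y^{(i)} \x_p^{(i)}$, take inner product with $\vxi$, and partition the patches of each sample $(\x^{(i)}, y^{(i)})$ into the three types from Definition~\ref{def:data}. This yields three bounds per sample: for the main feature patch, $|y^{(i)} \langle \x_{p^*_i}^{(i)}, \vxi\rangle| = |\langle \v_{k_i^*}, \vxi\rangle| \le \tO(\sigma_\xi/\sqrt{d})$; for the dominant noise patch, $|y^{(i)} \langle \noise^{(i)}, \vxi\rangle| \le \tO(\sigma_\xi^2/\sqrt{d})$; and for each background patch $\x_p = -\alpha_p y^{(i)} \v_{k_p}$, $|y^{(i)} \langle \x_p^{(i)}, \vxi\rangle| \le \alpha\, \tO(\sigma_\xi/\sqrt{d})$. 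On the coefficient side I will use $|\ell'| \le 1$, the crude $|\psi'| \le 1$ for the first two patch types, and the sharper $|\psi'(-\alpha_p\,\w_c(s)\cdot \v_{k_p})| \le (\alpha\,\tO(1))^{q-1}$ on background patches, valid because Lemmas~\ref{lem:v_bd_all} and~\ref{lem:vk_bd} yield $|\w_c(s)\cdot \v_{k_p}| \le \tO(1)$ throughout $s \le T$. Assembling these bounds produces a per-step contribution bounded by $\eta\, \tO(\sigma_\xi^2/\sqrt{d}) + \eta P \alpha^q\, \tO(\sigma_\xi/\sqrt{d})$.

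Third, I will sum across the $T = \tthe(\max\{n\eta^{-1}\sigma_\xi^{-q}\sigma_0^{-q+2},\, K\eta^{-1}\sigma_0^{-q+2}\})$ steps. The first summand becomes $o(\sigma_0\sigma_\xi)$ precisely because Assumption~\ref{assu:assume_main} supplies $nK \le o(\sigma_0^{q-1}\sigma_\xi^{q-1}\sqrt{d})$, which handles both branches of the $\max$ (the $n$-branch via $n \le nK$, the $K$-branch via $K\sigma_\xi \le \sigma_0^{q-1}\sqrt{d}$); the second summand is absorbed by the upper bound on $\alpha$ in the same assumption. A union bound over the $C = \Theta(\log d)$ channels is harmless, folds into the $\tO$ factors, and yields the stated failure probability $\tO(nK/\poly(d))$. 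The main obstacle, in my view, is not any single inequality but the careful bookkeeping: one must verify that the $|\psi'|$ upper bound holds for \emph{every} channel throughout $[0, T]$, requiring simultaneous invocation of Lemmas~\ref{lem:v_bd_all},~\ref{lem:vk_bd},~\ref{lem:xi_bd_all},~\ref{lem:xii_bd} to uniformly control $\w_c(s)\cdot\v_k$ and $\w_c(s)\cdot\noise^{(i)}$, and one must check that the parameter inequalities tighten correctly under both branches of the $\max$ defining $T$, i.e., for both the without- and with-data-augmentation analyses (in the augmented case invoking Lemma~\ref{lem:init-sub} and the fact that each $\T_k(\noise^{(i)})$ remains a Gaussian independent of $\vxi$).
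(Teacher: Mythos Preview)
Your approach is essentially the same as the paper's: expand the one-step update $\langle \w_c(t+1)-\w_c(t),\vxi\rangle$, split by patch type, use fresh Gaussian concentration on $\langle\v_k,\vxi\rangle$ and $\langle\noise^{(i)},\vxi\rangle$, bound $|\ell'|\le 1$ and $|\psi'|\le 1$ on feature and noise patches, use the finer $\psi'$ bound on background patches, sum over $t\le T$, and close with the parameter inequalities on $n,K,T$.

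One small but real imprecision: you assert $|\w_c(s)\cdot\v_{k_p}|\le\tO(1)$ for all $s\le T$, but neither Lemma~\ref{lem:v_bd_all} nor Lemma~\ref{lem:vk_bd} delivers this. Lemma~\ref{lem:v_bd_all} gives only $|\langle\w_c(s),\v_k\rangle|\le\tO\bigl(\sigma_0+\eta T(\max_{k'}\rho_{k'}+\sigma_\xi d^{-1/2})\bigr)$, and under the stated assumptions $\eta T\max_k\rho_k$ can be $\omega(1)$ (e.g.\ with the paper's example parameters $\eta T\rho_1=\Theta(d^{0.18})$). The paper keeps this full expression inside the $\psi'$ bound and absorbs it via the $(\sigma_0+\eta T\max_k\rho_k+\eta T\sigma_\xi d^{-1/2})^{-(q-1)/q}$ factor in the $\alpha$ hypothesis, obtaining $\alpha^q P(\sigma_0+\ldots)^{q-1}\le o(\sigma_\xi d^{-1/2})$. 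Your final per-step bound $\eta P\alpha^q\,\tO(\sigma_\xi/\sqrt d)$ happens to be correct numerically once you feed in the right intermediate bound, but the justification you wrote does not establish it. Also, you do not need Lemmas~\ref{lem:xi_bd_all} or~\ref{lem:xii_bd} here; the paper simply uses the crude $|\psi'|\le 1$ on the dominant-noise patch, so no control of $\w_c(s)\cdot\noise^{(i)}$ is required.
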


\subsection{Proof of main results from Lemmas in Appendix~\ref{app:dynamics}}

We first derive some implications of Assumption $\text{\ref{assu:assume_main}}$
that we use as conditions in the lemmas in \ref{app:dynamics}.
\begin{enumerate}
	\item $nK\leq o\left(\min\left\{ \sigma_{0}^{q-1}\sigma_{\xi}^{q}d^{1/2},\sigma_{0}^{q-1}\sigma_{\xi}^{q-1}d^{1/2}\right\} \right)$
	follows from $nK\leq o(\sigma_{0}^{q-1}\sigma_{\xi}^{q-1}d^{1/2})$
	and $\sigma_{\xi}\geq\omega(1)$.
	\item $K\leq o\left(\min\left\{ \sigma_{0}^{q-1}\sigma_{\xi}^{-1}d^{1/2},\sigma_{0}^{q-1}d^{1/2}\right\} \right)$
	follows from $nK\leq o(\sigma_{0}^{q-1}\sigma_{\xi}^{q-1}d^{1/2})$,
	$\sigma_{\xi}\geq\omega(1)$ and $n\geq\omega(\sigma_{\xi}^{q})$.
	\item $\sigma_{\xi}d^{-1/2}\leq o(1)$ follows from $nK\leq o\left(\sigma_{0}^{q-1}\sigma_{\xi}^{q-1}d^{1/2}\right)$, $\sigma_{0}\sigma_{\xi}\leq o(1)$ and $n\geq \omega(\sigma_\xi^q)$.
	\item $\sigma_{\xi}^{q}K\leq o(d^{1/2})$ follows from $nK\leq o(\sigma_{0}^{q-1}\sigma_{\xi}^{q-1}d^{1/2})$,
	$\sigma_{\xi}\sigma_{0}\leq o(1)$ and $o(n)\geq\sigma_{\xi}^{q} \geq \omega(1)$.
	\item $\alpha\leq o\left(P^{-\frac{1}{q}}\sigma_{\xi}\min\left\{ d^{-1/2},\sigma_{0}\right\} \left(\sigma_{0}+\eta T\max_{k\in[K]}\rho_{k}+\eta T\sigma_{\xi}d^{-1/2}\right)^{-1}\right)$
	follows from $\sigma_{\xi}d^{-1/2}\leq o(1)$, $\sigma_{0}\leq o(1)$
	and $\eta T\geq\omega(1)$.
\end{enumerate}

Now, using Lemma~\ref{lem:v_bd_all} - \ref{lem:w_randxi}, we prove the main theorems. 

\withoutaug*
\begin{proof}%
By Lemma $\text{\ref{lem:init}}$, with probability at least $1-O\left(\frac{n^2K\log d}{\poly d}\right)$,
$\Ginit$ holds.
We first show that all $(\x^{(i)},y^{(i)})\in\Dtr$ can be classified
correctly with constant margin at some $T=\tthe\left(n\eta^{-1}\sigma_{\xi}^{-q}\sigma_{0}^{-q+2}\right)$.
We first consider the samples $i\in[n]$ such that $k_{i}^{*}=1$.
If Assumption $\text{\ref{assu:assume_main}}$ holds, $\omega(\sigma_{\xi}^{q})\leq n$,
so $\eta^{-1}\rho_{1}^{-1}\sigma_{0}^{-q+2}\leq o\left(n\eta^{-1}\sigma_{\xi}^{-q}\sigma_{0}^{-q+2}\right)$.
By Lemma $\text{\ref{lem:xii_bd}}$, $\max_{c\in[C]}y^{(i)}\left\langle \w_{c}(t),\vxi^{(i)}\right\rangle \leq\tO(\sigma_{0}\sigma_{\xi})$
for all $t\leq\tO\left(\eta^{-1}\rho_{1}\sigma_{0}^{-q+2}\right)$.
Then, by Lemma $\text{\ref{lem:vphase1}}$, there exists some $t^{*}\leq\tO\left(\eta^{-1}\rho_{1}^{-1}\sigma_{0}^{-q+2}\right)$
such that $\max_{c\in[C]}\left\langle \w_{c}(t^{*}),\v_{1}\right\rangle =\Theta\left(C^{-1/q}\right)$.
Moreover, by Lemma $\text{\ref{lem:wxidecrease}}$, at any time step
$t^{*}\leq t'\leq\tO\left(n\eta^{-1}\sigma_{\xi}^{-q}\sigma_{0}^{-q+2}\right)$,
the feature $\v_{1}$ satisfies,
\[
\max_{c\in[C]}\left\langle \w_{c}(t'),\v_{1}\right\rangle \geq\max_{c\in[C]}\left\langle \w_{c}(t^{*}),\v_{1}\right\rangle -o\left(\sigma_{0}\right)\geq\Omega\left(C^{-1/q}\right).
\]
We can further show for all $c\in[C]$ and $t'\leq\tO\left(n\eta^{-1}\sigma_{\xi}^{-q}\sigma_{0}^{-q+2}\right)$,
$\left\langle \w_{c}(t'),\v_{1}\right\rangle $ and $y^{(i)}\left\langle \w_{c}(t'),\vxi^{(i)}\right\rangle $
are lower bounded. By Lemma $\text{\ref{lem:wxidecrease}}$, when
$\Ginit$ holds,
\[
\left\langle \w_{c}(t'),\v_{1}\right\rangle \geq\left\langle \w_{c}(0),\v_{1}\right\rangle -o\left(\sigma_{0}\right)\geq-\tO(\sigma_{0}),
\]
and for all $i\in[n]$,
\[
y^{(i)}\left\langle \w_{c}(t'),\vxi^{(i)}\right\rangle \geq y^{(i)}\left\langle \w_{c}(0),\vxi^{(i)}\right\rangle -o\left(\sigma_{0}\sigma_{\xi}\right)\geq-\tO(\sigma_{0}\sigma_{\xi}).
\]
Then, there exists some $T=\tthe\left(n\eta^{-1}\sigma_{\xi}^{-q}\sigma_{0}^{-q+2}\right)$such
that for $i$ with $k_{i}^{*}=1$,
\begin{align}
y^{(i)}F(\w(T),\x^{(i)})= & ~ y^{(i)}\sum_{c\in[C]}\sum_{p\in[P]}\psi\left(\left\langle \w_{c}(T),\x_{p}^{(i)}\right\rangle \right)\nonumber \\
= &~ y^{(i)}\sum_{c\in[C]}\psi\left(\left\langle \w_{c}(T),y^{(i)}\v_{k_{i}^{*}}\right\rangle \right)+y^{(i)}\sum_{c\in[C]}\sum_{k\in[K]}\sum_{p\in\pbpki}\psi\left(\left\langle \w_{c}(T),-\api y^{(i)}\v_{k}\right\rangle \right)\nonumber \\
 & +y^{(i)}\sum_{c\in[C]}\psi\left(\left\langle \w_{c}(T),\vxi^{(i)}\right\rangle \right)\label{eq:thm1-eq1}\\
\geq & ~ \Omega\left(\frac{1}{C}\right)-C\tO(\sigma_{0}^{q})-CP\alpha^{q}\tO\left(\left(\sigma_{0}+\eta T\left(\max_{k\in[K]}\rho_{k}+\sigma_{\xi}d^{-1/2}\right)\right)^{q}\right)-C\tO(\sigma_{0}^{q}\sigma_{\xi}^{q})\nonumber \\
\geq &~ \tom\left(1\right).\nonumber
\end{align}
The third step follows from $\max_{c\in[C]}\left\langle \w_{c}(T),\v_{1}\right\rangle \geq\Omega\left(C^{-1/q}\right)$,
$\min_{c\in[C]}\left\langle \w_{c}(T),\v_{1}\right\rangle \geq-\tO(\sigma_{0})$,\\
$\min_{c\in[C]}y^{(i)}\left\langle \w_{c}(T),\vxi^{(i)}\right\rangle \geq-\tO(\sigma_{0}\sigma_{\xi})$
and Lemma $\text{\ref{lem:v_bd_all}}$. The last step follows from
the the upper bound assumption on $\alpha$, $\sigma_{0}\leq o(1)$
and $\sigma_{0}\sigma_{\xi}\leq o(1)$.

We next show that the training accuracy is perfect for all $i\in[n]$
such that $k_{i}^{*}\neq1$. By Lemma $\text{\ref{lem:vk_bd}}$ and
Assumption $\text{\ref{assu:assume_main}}$ that $\rho_{k}\leq o\left(n^{-1}\sigma_{\xi}^{q}\right)$
and $n\leq o(\sigma_{\xi}^{q-1}\sigma_{0}^{q-1}d^{1/2})$, we have
$\frac{\sigma_{0}}{\eta\rho_{k}\sigma_{0}^{q-1}+\eta\sigma_{\xi}d^{-1/2}}\geq\omega\left(n\eta^{-1}\sigma_{\xi}^{-q}\sigma_{0}^{-q+2}\right)$,
and therefore $\max_{c\in[C]}\left\langle \w_{c}(t),\v_{k}\right\rangle \leq\tO\left(\sigma_{0}\right)$
for all $0\leq t\leq\tO\left(n\eta^{-1}\sigma_{\xi}^{-q}\sigma_{0}^{-q+2}\right)$
and $k\neq1$. Then, for any $i\in[n]$ such that $k_{i}^{*}\neq1$,
by Lemma $\text{\ref{lem:xi_phase2}}$, there exists some time step
$t^{(i)}$ such that $\max_{c\in[C]}y^{(i)}\left\langle \w_{c}(t^{(i)}),\vxi^{(i)}\right\rangle \geq\Omega\left(C^{-1/q}\right)$.
Moreover, by Lemma $\text{\ref{lem:wxidecrease}}$, for all $t^{(i)}\leq t'\leq\tO\left(n\eta^{-1}\sigma_{\xi}^{-q}\sigma_{0}^{-q+2}\right)$,
$\max_{c\in[C]}y^{(i)}\left\langle \w_{c}(t'),\vxi^{(i)}\right\rangle \geq\Omega\left(C^{-1/q}\right)$.

Then, there exists some $T=\tthe\left(n\eta^{-1}\sigma_{\xi}^{-q}\sigma_{0}^{-q+2}\right)$
such that for all $(\x^{(i)},y^{(i)})\in\Dtr$ such that $k_{i}^{*}\neq1$,
\begin{align*}
y^{(i)}F(\w(T),\x^{(i)}) & \geq\Omega\left(\frac{1}{C}\right)-C\tO(\sigma_{0}^{q})-CP\alpha^{q}\tO\left(\left(\sigma_{0}+\eta T\left(\max_{k\in[K]}\rho_{k}+\sigma_{\xi}d^{-1/2}\right)\right)^{q}\right)-C\tO(\sigma_{0}^{q}\sigma_{\xi}^{q})\\
 & \geq\Omega\left(\frac{1}{C}\right).
\end{align*}
The first step follows from $\text{\eqref{eq:thm1-eq1}}$, and Lemma
$\text{\ref{lem:v_bd_all}}$. The second step follows from the upper
bound assumption on $\alpha$, $\sigma_{0}\leq o(1)$ and $\sigma_{0}\sigma_{\xi}\leq o(1)$.

Thus, at some $T=\tthe\left(n\eta^{-1}\sigma_{\xi}^{-q}\sigma_{0}^{-q+2}\right)$,
for all $i\in[n]$, we have $y^{(i)}F(\w(t),\x^{(i)})\geq\Omega\left(\frac{1}{C}\right)\geq\tom\left(1\right)$.

Next, we show that the margin is $o(1)$ at $t\leq o\left(n\eta^{-1}\sigma_{\xi}^{-q}\sigma_{0}^{-q+2}\right)$
for any $(\x^{(i)},y^{(i)})$ such that $k_{i}^{*}\neq1$. Since $t\leq o\left(\frac{\sigma_{0}}{\eta\rho_{k}\sigma_{0}^{q-1}+\eta\sigma_{\xi}d^{-1/2}}\right)$,
by Lemma $\text{\ref{lem:vk_bd}}$, $\max_{c\in[C]}\left\langle \w_{c}(t),\v_{k_{i}^{*}}\right\rangle \leq\tO\left(\sigma_{0}\right)$.
Since $t\leq o\left(n\eta^{-1}\sigma_{\xi}^{-q}\sigma_{0}^{-q+2}\right)$,
by Lemma $\text{\ref{lem:xii_bd}}$, $y^{(i)}\left\langle \w_{c}(T),\vxi^{(i)}\right\rangle \leq\tO(\sigma_{0}\sigma_{\xi})$.
Then,
\begin{align}
y^{(i)}F(\w(t),\x^{(i)})\leq &~ C\tO(\sigma_{0}^{q})+CP\alpha^{q}\tO\left(\left(\sigma_{0}+\eta T\left(\max_{k\in[K]}\rho_{k}+\sigma_{\xi}d^{-1/2}\right)\right)^{q}\right)+C\tO(\sigma_{0}^{q}\sigma_{\xi}^{q})\nonumber \\
\leq &~ o(1).\label{eq:main_waug_eq1}
\end{align}
The first step follows from $\text{\eqref{eq:thm1-eq1}}$. The second
step follows from the upper bound assumption on $\alpha$, $\sigma_{0}\leq o(1)$
and $\sigma_{0}\sigma_{\xi}\leq o(1)$. Thus, we have show that $\oT=\tthe\left(n\eta^{-1}\sigma_{\xi}^{-q}\sigma_{0}^{-q+2}\right)$.

Finally, we show that the testing accuracy is bad on the testing dataset.
For any $(\x,y)\sim\D$ with the main feature $\v_{k^{*}}$ such that
$k^{*}\neq1$ and dominant noise $\vxi$, since $\max_{c\in[C]}\left|\left\langle \w_{c}(t),\v_{k}\right\rangle \right|\leq\tO\left(\sigma_{0}\right)$
for any $t\leq\oT$, following $\text{\eqref{eq:thm1-eq1}}$,
\begin{align*}
yF(\w(t),\x)\leq & ~ C\tO(\sigma_{0}^{q})+CP\alpha^{q}\tO\left(\left(\sigma_{0}+\eta\oT\left(\max_{k\in[K]}\rho_{k}+\sigma_{\xi}d^{-1/2}\right)\right)^{q}\right)+y\sum_{c\in[C]}\psi\left(\left\langle \w_{c}(t),\vxi\right\rangle \right)\\
\leq & ~ C\tO(\sigma_{0}^{q})+C\tO\left(\sigma_{\xi}^{q}\sigma_{0}^{q}\right)\\
 & +y\sum_{c\in[C]}\psi\left(\left\langle \w_{c}(0),\vxi\right\rangle \right)+\left|y\sum_{c\in[C]}\psi\left(\left\langle \w_{c}(t),\vxi\right\rangle \right)-y\sum_{c\in[C]}\psi\left(\left\langle \w_{c}(0),\vxi\right\rangle \right)\right|\\
\leq & ~ Co(\sigma_{\xi}^{q}\sigma_{0}^{q})+y\sum_{c\in[C]}\psi\left(\left\langle \w_{c}(0),\vxi\right\rangle \right)+\left|y\sum_{c\in[C]}\psi\left(\left\langle \w_{c}(t),\vxi\right\rangle \right)-y\sum_{c\in[C]}\psi\left(\left\langle \w_{c}(0),\vxi\right\rangle \right)\right|.
\end{align*}
The second step uses the upper bound on $\alpha$. The last step follows
the assumption $\sigma_{\xi}\geq\omega(1)$. For any $c\in[C]$, by
Lemma $\text{\ref{lem:gausscorr}}$, with probability at least $1-\frac{1}{\poly d}$,
$\left|\left\langle \w_{c}(0),\vxi\right\rangle \right|\leq\tO(\sigma_{0}\sigma_{\xi})$.
Then, by Lemma \ref{lem:w_randxi}, $\left|\left\langle \w_{c}(t),\vxi\right\rangle -\left\langle \w_{c}(0),\vxi\right\rangle \right|\leq o(\sigma_{0}\sigma_{\xi})$
with probability at least $1-\frac{nK}{\poly d}$ and therefore $\left|\left\langle \w_{c}(t),\vxi\right\rangle \right|\leq\tO(\sigma_{0}\sigma_{\xi})$
and
\begin{align*}
\left|y\sum_{c\in[C]}\psi\left(\left\langle \w_{c}(t),\vxi\right\rangle \right)-y\sum_{c\in[C]}\psi\left(\left\langle \w_{c}(0),\vxi\right\rangle \right)\right| & \leq\sum_{c\in[C]}q\tO(\sigma_{0}^{q-1}\sigma_{\xi}^{q-1})\left|\left\langle \w_{c}(t),\vxi\right\rangle -\left\langle \w_{c}(0),\vxi\right\rangle \right|\\
 & \leq Co(\sigma_{\xi}^{q}\sigma_{0}^{q})
\end{align*}
For $t=0$, $\left\langle \w_{c}(0),\vxi\right\rangle \sim\N(0, \sigma_0^2 \left\Vert \xi\right\Vert ^{2})$
and $\left\{ \left\langle \w_{c}(0),\vxi\right\rangle :c\in[C]\right\} $
are independent. By Lemma~\ref{cor:gl2norm}, $\left\Vert \xi\right\Vert ^{2} = \Theta(\sigma_\xi^2 )$.
Then, by Lemma \ref{lem:guassianq}, with probability at least $\frac{1}{2}-O(\frac{1}{\sqrt{C}})$,
\[
yF(\w(t),\x)\leq Co(\sigma_{\xi}^{q}\sigma_{0}^{q})+y\sum_{c\in[C]}\psi\left(\left\langle \w_{c}(0),\vxi\right\rangle \right)<0.
\]
\end{proof}
\withaug*
\begin{proof}
By Lemma $\text{\ref{lem:init-sub}}$, $\Ginit$ holds with probability
at least $1-O\left(\frac{n^2K^3\log d}{\poly d}\right)$.

We first show that $\oTaug=\tO\left(K\eta^{-1}\sigma_{0}^{-q+2}\right)$.
For the augmented dataset, we have $\rho_{k}^{\text{(aug)}}=\frac{1}{K}$
for all $k\in[K]$ and the size of the dataset is $Kn.$ For any $k\in[K]$,
if Assumption $\text{\ref{assu:assume_main}}$ holds, $\omega(\sigma_{\xi}^{q})\leq n$,
so $\eta^{-1}\rho_{k}^{\text{(aug)}}{}^{-1}\sigma_{0}^{-q+2}\leq o\left(Kn\eta^{-1}\sigma_{\xi}^{-q}\sigma_{0}^{-q+2}\right)$.
Then, for any $i\in[Kn]$ with $k_{i}^{*}=k$, by Lemma $\text{\ref{lem:xii_bd}}$
$\max_{c\in[C]}y^{(i)}\left\langle \w_{c}(T),\vxi^{(i)}\right\rangle \leq\tO\left(\sigma_{0}\sigma_{\xi}\right)$
for all $0\leq t\leq\tO\left(K\eta^{-1}\sigma_{0}^{-q+2}\right)$.
Then, under the assumption $\sigma_{\xi}^{q}K\leq o\left(d^{1/2}\right)$,
by Lemma $\text{\ref{lem:vphase1}}$, there exists some $t_{k}=\tthe\left(\frac{1}{\eta\rho_{k}^{\text{(aug)}}\sigma_{0}^{q-2}}\right)$
such that $\max_{c\in[C]}\left\langle \w_{c}(t_{k}),\v_{k}\right\rangle \geq\Omega\left(C^{-1/q}\right)$.
By Lemma $\text{\ref{lem:wxidecrease}}$, for any $t_{k}\leq t'\leq\tthe\left(K\eta^{-1}\sigma_{0}^{-q+2}\right)$,
$\max_{c\in[C]}\left\langle \w_{c}(t'),\v_{k}\right\rangle \geq\Omega\left(C^{-1/q}\right)$.
Then, there exists some $T=\tthe\left(K\eta^{-1}\sigma_{0}^{-q+2}\right)$
such that for all $(\x^{(i)},y^{(i)})\in\Dtraug$,
\begin{align}
y^{(i)}F(\w(T),\x^{(i)})= & ~ y^{(i)}\sum_{c\in[C]}\sum_{p\in[P]}\psi\left(\left\langle \w_{c}(T),\x_{p}^{(i)}\right\rangle \right)\nonumber \\
= &  ~ y^{(i)}\sum_{c\in[C]}\psi\left(\left\langle \w_{c}(T),y^{(i)}\v_{k_{i}^{*}}\right\rangle \right)+y^{(i)}\sum_{c\in[C]}\sum_{k\in[K]}\sum_{p\in\pbpki}\psi\left(\left\langle \w_{c}(T),-\api y^{(i)}\v_{k}\right\rangle \right)\nonumber \\
 & +y^{(i)}\sum_{c\in[C]}\psi\left(\left\langle \w_{c}(T),\vxi^{(i)}\right\rangle \right)\label{eq:thm2-eq1}\\
\geq & ~ \Omega\left(\frac{1}{C}\right)-C\tO(\sigma_{0}^{q})-CP\alpha^{q}\tO\left(\left(\sigma_{0}+\eta T\left(\max_{k\in[K]}\rho_{k}+\sigma_{\xi}d^{-1/2}\right)\right)^{q}\right)-C\tO(\sigma_{0}^{q}\sigma_{\xi}^{q})\nonumber \\
\geq & ~ \tom\left(1\right).\nonumber
\end{align}
The third step follows from $\max_{c\in[C]}\left\langle \w_{c}(T),\v_{k}\right\rangle \geq\Omega\left(C^{-1/q}\right)$,
Lemma $\text{\ref{lem:v_bd_all}}$ and Lemma $\text{\ref{lem:wxidecrease}}$.
The last step follows from the upper bound assumption on $\alpha$,
$\sigma_{0}\leq o(1)$ and $\sigma_{0}\sigma_{\xi}\leq o(1)$.

Next, when $t=o\left(\frac{1}{\eta\rho_{k}^{\text{(aug)}}\sigma_{0}^{q-2}}\right)$,
by Lemma $\text{\ref{lem:vk_bd}}$ , $\left\langle \w_{c}(t),\v_{k_{i}^{*}}\right\rangle \leq\tO(\sigma_{0})$.
By Lemma $\text{\ref{lem:xii_bd}}$, $y^{(i)}\left\langle \w_{c}(T),\vxi^{(i)}\right\rangle \leq\tO(\sigma_{0}\sigma_{\xi})$.
Then,
\begin{align*}
y^{(i)}F(\w(t),\x^{(i)})\leq & C\tO(\sigma_{0}^{q})+CP\alpha^{q}\tO\left(\left(\sigma_{0}+\eta T\left(\max_{k\in[K]}\rho_{k}+\sigma_{\xi}d^{-1/2}\right)\right)^{q}\right)+C\tO(\sigma_{0}^{q}\sigma_{\xi}^{q})\\
\leq & o(1).
\end{align*}
The second step follows from the upper bound assumption on $\alpha$,
$\sigma_{0}\leq o(1)$ and $\sigma_{0}\sigma_{\xi}\leq o(1)$. Thus,
we have shown that $T^{\text{(aug)}}=\tthe\left(K\eta^{-1}\sigma_{0}^{-q+2}\right)$.

Finally, we show that the testing accuracy is perfect at $T^{\text{(aug)}}=\tthe\left(K\eta^{-1}\sigma_{0}^{-q+2}\right)$.
For any sample $(\x,y)$ in the testing set with dominant noise $\vxi$,
if $\Ginit$ hold, by $\text{\eqref{eq:thm2-eq1}},$
\begin{align*}
yF(\w(T^{\text{(aug)}}),\x)\geq & ~ \Omega\left(\frac{1}{C}\right)-C\tO(\sigma_{0}^{q})-CP\alpha^{q}\tO\left(\left(\sigma_{0}+\eta T^{\text{(aug)}}\left(\max_{k\in[K]}\rho_{k}+\sigma_{\xi}d^{-1/2}\right)\right)^{q}\right)\\
 & +y\sum_{c\in[C]}\psi\left(\left\langle \w_{c}(T^{\text{(aug)}}),\vxi\right\rangle \right)\\
\geq & ~ \Omega\left(\frac{1}{C}\right)+y\sum_{c\in[C]}\psi\left(\left\langle \w_{c}(0),\vxi\right\rangle \right)-\left|y\sum_{c\in[C]}\psi\left(\left\langle \w_{c}(T^{\text{(aug)}}),\vxi\right\rangle \right)-y\sum_{c\in[C]}\psi\left(\left\langle \w_{c}(0),\vxi\right\rangle \right)\right|.
\end{align*}
For any $c\in[C]$, by Lemma $\text{\ref{lem:gausscorr}}$, with probability
at least $1-\frac{1}{\poly d}$, $\left|\left\langle \w_{c}(0),\vxi\right\rangle \right|\leq\tO(\sigma_{0}\sigma_{\xi})$.
Then, by Lemma \ref{lem:w_randxi}, $\left|\left\langle \w_{c}(T^{\text{(aug)}}),\vxi\right\rangle -\left\langle \w_{c}(0),\vxi\right\rangle \right|\leq o(\sigma_{0}\sigma_{\xi})$
with probability at least $1-\frac{nK}{\poly d}$ and therefore $\left|\left\langle \w_{c}(T^{\text{(aug)}}),\vxi\right\rangle \right|\leq\tO(\sigma_{0}\sigma_{\xi})$
and
\begin{align*}
\left|y\sum_{c\in[C]}\psi\left(\left\langle \w_{c}(T^{\text{(aug)}}),\vxi\right\rangle \right)-y\sum_{c\in[C]}\psi\left(\left\langle \w_{c}(0),\vxi\right\rangle \right)\right| & \leq\sum_{c\in[C]}q\tO(\sigma_{0}^{q-1}\sigma_{\xi}^{q-1})\left|\left\langle \w_{c}(T^{\text{(aug)}}),\vxi\right\rangle -\left\langle \w_{c}(0),\vxi\right\rangle \right|\\
 & \leq Co(\sigma_{\xi}^{q}\sigma_{0}^{q}).
\end{align*}
Thus, with probability at least $1-\frac{nK}{\poly d}$, $yF(\w(T^{\text{(aug)}}),\x)\geq\tom(1)$.
\end{proof}

\section{Deferred Proof of Lemmas in Appendix~\ref{app:main_result}}
\label{app:lem_proof}
In this section, we present the proof of lemmas necessary for proving
our main result. 

\vbdall*
\begin{proof}
For any $k\in[K]$, $c\in[C]$ and $0\leq t<T$,
\begin{align}
 & \left\langle \w_{c}(t+1),\v_{k}\right\rangle \nonumber \\
= & \left\langle \w_{c}(t),\v_{k}\right\rangle +\frac{\eta}{n}\sum_{i:k_{i}^{*}=k}\left(\frac{1}{1+e^{y^{(i)}F(\w(t),\x^{(i)})}}\psi'\left(\left\langle \w_{c}(t),\v_{k}\right\rangle \right)\left\Vert \v_{k}\right\Vert _{2}^{2}\right)\nonumber \\
 & -\frac{\eta}{n}\sum_{i=1}^{n}\left(\frac{1}{1+e^{y^{(i)}F(\w(t),\x^{(i)})}}\sum_{p\in\pbpki}\api\psi'\left(\left\langle \w_{c}(t),\api\v_{k}\right\rangle \right)\left\Vert \v_{k}\right\Vert _{2}^{2}\right)\nonumber \\
 & +\frac{\eta}{n}\sum_{i=1}^{n}\left(\frac{1}{1+e^{y^{(i)}F(\w(t),\x^{(i)})}}\psi'\left(\left\langle \w_{c}(t),\vxi^{(i)}\right\rangle \right)y^{(i)}\left\langle \vxi^{(i)},\v_{k}\right\rangle \right)\label{eq:v_bd_all_eq1}
\end{align}
We bound each term separately. Since $\frac{1}{1+e^{y^{(i)}F(\w(t),\x^{(i)})}}\leq1$
for all $i\in[n]$, $\left\Vert \v_{k}\right\Vert _{2}^{2}=1$, and
$\psi'\left(\left\langle \w_{c}(t),\v_{k}\right\rangle \right)\leq1$
for all $k\in[K]$, we have
\[
\frac{\eta}{n}\sum_{i:k_{i}^{*}=k}\left(\frac{1}{1+e^{y^{(i)}F(\w(t),\x^{(i)})}}\psi'\left(\left\langle \w_{c}(t),\v_{k}\right\rangle \right)\left\Vert \v_{k}\right\Vert _{2}^{2}\right)\leq O(\eta\rho_{k}).
\]
The feature noise term
\[
-\frac{\eta}{n}\sum_{i=1}^{n}\left(\frac{1}{1+e^{y^{(i)}F(\w(t),\x^{(i)})}}\sum_{p\in\pbpki}\api\psi'\left(\left\langle \w_{c}(t),\api\v_{k}\right\rangle \right)\left\Vert \v_{k}\right\Vert _{2}^{2}\right)\leq0.
\]
When $\Ginit$ holds, $\left\langle \vxi^{(i)},\v_{k}\right\rangle \leq\tO(\sigma_{\xi}d^{-1/2})$
for all $i\in[n]$. Since $\frac{1}{1+e^{y^{(i)}F(\w(t),\x^{(i)})}}\leq1$
and $\psi'\left(\left\langle \w_{c}(t),\vxi^{(i)}\right\rangle \right)\leq1$, 

\[
\frac{\eta}{n}\sum_{i=1}^{n}\left(\frac{1}{1+e^{y^{(i)}F(\w(t),\x^{(i)})}}\psi'\left(\left\langle \w_{c}(t),\vxi^{(i)}\right\rangle \right)y^{(i)}\left\langle \vxi^{(i)},\v_{k}\right\rangle \right)\leq\tO\left(\eta\sigma_{\xi}d^{-1/2}\right).
\]
Then, for all $0\leq t<T$,
\begin{align*}
\left\langle \w_{c}(t+1),\v_{k}\right\rangle  & \leq\left\langle \w_{c}(t),\v_{k}\right\rangle +\eta\tO\left(\rho_{k}+\sigma_{\xi}d^{-1/2}\right),
\end{align*}
which implies for any $0\leq t'\leq t\leq T$,
\[
\left\langle \w_{c}(t),\v_{k}\right\rangle \leq\left\langle \w_{c}(t'),\v_{k}\right\rangle +\eta(t-t')\tO\left(\rho_{k}+\sigma_{\xi}d^{-1/2}\right).
\]

Next, we lower bound $\left\langle \w_{c}(t),\v_{k}\right\rangle $
using induction. When $\Ginit$ holds, $\left\langle \w_{c}(0),\v_{k}\right\rangle \geq-\tO\left(\sigma_{0}+\eta T\sigma_{\xi}d^{-1/2}\right)$. Assume for all $0\leq t'\leq t$, 	\begin{align*}
	\min_{c\in[C]}\left\langle \text{\ensuremath{\w}}_{c}(t),\v_{k}\right\rangle  & \geq\min_{c\in[C]}\left\langle \w_{c}(t'),\v_{k}\right\rangle -\eta(t-t')\tO\left(\sigma_{\xi}d^{-1/2}\right)\\
	& \geq-\tO\left(\sigma_{0}+\eta T\sigma_{\xi}d^{-1/2}\right)
\end{align*}
for induction. We have 
\begin{align*}
\frac{\eta}{n}\sum_{i:k_{i}^{*}=k}\left(\frac{1}{1+e^{y^{(i)}F(\w(t),\x^{(i)})}}\psi'\left(\left\langle \w_{c}(t),\v_{k}\right\rangle \right)\left\Vert \v_{k}\right\Vert _{2}^{2}\right) & \geq0.
\end{align*}
We have shown that $\left\langle \w_{c}(t),\v_{k}\right\rangle \leq\tO\left(\sigma_{0}+\eta T\rho_{k}+\eta T\sigma_{\xi}d^{-1/2}\right)$
for all $c\in[C]$ and $k\in[K]$. By the induction hypothesis, 
\begin{align*}
 & \frac{\eta}{n}\sum_{i=1}^{n}\left(\frac{1}{1+e^{y^{(i)}F(\w(t),\x^{(i)})}}\sum_{p\in\pbpki}\api\psi'\left(\left\langle \w_{c}(t),\api\v_{k}\right\rangle \right)\left\Vert \v_{k}\right\Vert _{2}^{2}\right)\\
\leq & ~ \eta\alpha^{q}P\tO\left(\left(\sigma_{0}+\eta T\rho_{k}+\eta T\sigma_{\xi}d^{-1/2}\right)^{q-1}\right)\\
\leq & ~ \tO\left(\eta\sigma_{\xi}d^{-1/2}\right).
\end{align*}
The last inequality follows from $\alpha\leq\tO\left(\sigma_{\xi}^{\frac{1}{q}}d^{-\frac{1}{2q}}P^{-\frac{1}{q}}/\left(\sigma_{0}+\eta T\rho_{k}+\eta T\sigma_{\xi}d^{-1/2}\right)^{\frac{q-1}{q}}\right)$.
When $\Ginit$ holds,
\[
-\frac{\eta}{n}\sum_{i=1}^{n}\left(\frac{1}{1+e^{y^{(i)}F(\w(t),\x^{(i)})}}\psi'\left(\left\langle \w_{c}(t),\vxi^{(i)}\right\rangle \right)y^{(i)}\left\langle \vxi^{(i)},\v_{k}\right\rangle \right)\leq\tO\left(\eta\sigma_{\xi}d^{-1/2}\right).
\]
Then, plugging into $\text{\eqref{eq:v_bd_all_eq1}},$ for any $0\leq t'\leq t+1\leq T,$
\[
-\left\langle \w_{c}(t+1),\v_{k}\right\rangle \leq-\left\langle \w_{c}(t'),\v_{k}\right\rangle +\eta(t+1-t')\tO\left(\sigma_{\xi}d^{-1/2}\right).
\]
Thus, we have completed the induction and therefore
\[
\min_{c\in[C]}\left\langle \w_{c}(t),\v_{k}\right\rangle \geq\min_{c\in[C]}\left\langle \w_{c}(t'),\v_{k}\right\rangle -\eta(t-t')\tO\left(\sigma_{\xi}d^{-1/2}\right).
\]

Finally, for $t'=0$, when $\Ginit$ holds, $\left|\left\langle \w_{c}(0),\v_{k}\right\rangle \right|\leq\tO(\sigma_{0}).$
\end{proof}

\xibdall*

\begin{proof}
For any $c\in[C]$ and $i\in[n]$, we have 
\begin{align}
 & y^{(i)}\left\langle \w_{c}(t+1),\vxi^{(i)}\right\rangle \nonumber \\
= & ~ y^{(i)}\left\langle \w_{c}(t),\vxi^{(i)}\right\rangle +\frac{\eta}{n}\frac{1}{1+e^{y^{(i)}F(\w(t),\x^{(i)})}}\psi'\left(\left\langle \w_{c}(t),\vxi^{(i)}\right\rangle \right)\left\Vert \vxi^{(i)}\right\Vert _{2}^{2}\nonumber \\
 & +\frac{\eta}{n}\sum_{j:j\neq i}\left(\frac{y^{(i)}y^{(j)}}{1+e^{y^{(j)}F(\w(t),\x^{(j)})}}\psi'\left(\left\langle \w_{c}(t),\vxi^{(j)}\right\rangle \right)\left\langle \vxi^{(j)},\vxi^{(i)}\right\rangle \right)\label{eq:wxigbd_eq1}\\
 & +\frac{\eta}{n}\sum_{j=1}^{n}\left(\frac{y^{(i)}}{1+e^{y^{(j)}F(\w(t),\x^{(j)})}}\psi'\left(\left\langle \w_{c}(t),\v_{k_{j}^{*}}\right\rangle \right)\left\langle \v_{k_{j}^{*}},\vxi^{(i)}\right\rangle \right)\label{eq:wxigbd_eq2}\\
 & -\frac{\eta}{n}\sum_{j=1}^{n}\left(\frac{y^{(i)}}{1+e^{y^{(j)}F(\w(t),\x^{(j)})}}\sum_{k\in[K]}\sum_{p\in\pbpkj}\psi'\left(\left\langle \w_{c}(t),\apj\v_{k}\right\rangle \right)\left\langle \apj\v_{k},\vxi^{(i)}\right\rangle \right)\label{eq:wxigbd_eq3}
\end{align}
We have $\frac{\eta}{n}\left(\frac{1}{1+e^{y^{(i)}F(\w(t),\x^{(i)})}}\psi'\left(\left\langle \w_{c}(t),\vxi^{(i)}\right\rangle \right)\left\Vert \vxi^{(i)}\right\Vert _{2}^{2}\right)$
positive for any $i\in[n]$. Since $\frac{1}{1+e^{y^{(j)}F(\w(t),\x^{(j)})}}\leq1$
and $\psi'\left(\left\langle \w_{c}(t),\vxi^{(j)}\right\rangle \right)\leq1$
for all $j\in[n]$ , if $\Ginit$ holds,

\[
\text{\eqref{eq:wxigbd_eq1}}\geq-\eta\tO\left(\sigma_{\xi}^{2}d^{-1/2}\right),\text{\; \text{\eqref{eq:wxigbd_eq2}}\ensuremath{\geq-}\ensuremath{\eta\tO\left(\sigma_{\xi}d^{-1/2}\right)}.}
\]

Also,
\begin{align*}
\text{\eqref{eq:wxigbd_eq3}} & \geq-\eta\tO\left(\alpha^{q}P\sigma_{\xi}d^{-1/2}\max_{k\in[K]}\left|\left\langle \w_{c}(t),\v_{k}\right\rangle \right|^{q-1}\right)\\
 & \geq-\eta\tO\left(\alpha^{q}P\sigma_{\xi}d^{-1/2}\left(\sigma_{0}+\eta T\left(\rho_{k}+\sigma_{\xi}d^{-1/2}\right)\right)^{q-1}\right)\\
 & \geq-\eta\tO\left(\sigma_{\xi}d^{-1/2}\right)
\end{align*}
The second inequality follows from Lemma $\text{\ref{lem:v_bd_all}}$.
The third inequality follows from the upper bound on $\alpha$. Then, 

\[
y^{(i)}\left\langle \w_{c}(t+1),\vxi^{(i)}\right\rangle -y^{(i)}\left\langle \w_{c}(t),\vxi^{(i)}\right\rangle \geq-\eta\tO\left(\sigma_{\xi}^{2}d^{-1/2}+\sigma_{\xi}d^{-1/2}\right),
\]
which gives 
\[
\min_{c\in[C]}y^{(i)}\left\langle \w_{c}(t),\vxi^{(i)}\right\rangle \geq\min_{c\in[C]}y^{(i)}\left\langle \w_{c}(t'),\vxi^{(i)}\right\rangle -\eta(t-t')\tO\left(\sigma_{\xi}^{2}d^{-1/2}+\sigma_{\xi}d^{-1/2}\right).
\]
\end{proof}

\wxidecrease*
\begin{proof}
	By Lemma $\text{\ref{lem:xi_bd_all}}$, for any $(\x^{(i)},y^{(i)})$,
	
	\[
	\max_{c\in[C]}y^{(i)}\left\langle \w_{c}(t),\vxi^{(i)}\right\rangle \geq\max_{c\in[C]}y^{(i)}\left\langle \w_{c}(t'),\vxi^{(i)}\right\rangle -\eta(t-t')\tO\left(\sigma_{\xi}^{2}d^{-1/2}+\sigma_{\xi}d^{-1/2}\right).
	\]
		Then, when $t-t'\leq\tO\left(n\eta^{-1}\sigma_{\xi}^{-q}\sigma_{0}^{-q+2}\right)$
	and $n\leq o\left(\min\left\{ \sigma_{0}^{q-1}\sigma_{\xi}^{q}d^{1/2},\sigma_{0}^{q-1}\sigma_{\xi}^{q-1}d^{1/2}\right\} \right)$,
	or when $t-t'\leq\tO\left(K\eta^{-1}\sigma_{0}^{-q+2}\right)$ and
	$K\leq o\left(\min\left\{ \sigma_{0}^{q-1}\sigma_{\xi}^{-1}d^{1/2},\sigma_{0}^{q-1}d^{1/2}\right\} \right)$,
	$\eta(t-t')\tO\left(\sigma_{\xi}^{2}d^{-1/2}+\sigma_{\xi}d^{-1/2}\right)\leq o\left(\sigma_{0}\sigma_{\xi}\right)$.
	
	By Lemma $\text{\ref{lem:v_bd_all}}$,
	
	\[
	\max_{c\in[C]}\left\langle \w_{c}(t),\v_{k}\right\rangle \geq\max_{c\in[C]}\left\langle \w_{c}(t'),\v_{k}\right\rangle -\eta(t-t')\tO\left(\sigma_{\xi}d^{-1/2}\right).
	\]
		Then, when $t-t'\leq\tO\left(n\eta^{-1}\sigma_{\xi}^{-q}\sigma_{0}^{-q+2}\right)$
	and $n\leq o\left(\sigma_{0}^{q-1}\sigma_{\xi}^{q-1}d^{1/2}\right)$,
	or when $t-t'\leq\tO\left(K\eta^{-1}\sigma_{0}^{-q+2}\right)$ and
	$K\leq o(\sigma_{0}^{q-1}\sigma_{\xi}^{-1}d^{1/2})$, 
	\[
	\eta(t-t')\tO\left(\sigma_{\xi}d^{-1/2}\right)\leq o\left(\sigma_{0}\right),
	\]
	which completes the proof.
\end{proof}

\vphase*
\begin{proof}
By the upper bound on $\alpha$ and Lemma $\text{\ref{lem:v_bd_all}}$,
for any $i\in[n]$ and $c\in[C]$, 
\begin{align*}
\sum_{k'\in[K]}\sum_{p\in\pbpkip}\psi\left(\left\langle \w_{c}(t),-y^{(i)}\api\v_{k'}\right\rangle \right) & \leq\sum_{k'\in[K]}\sum_{p\in\pbpki}\left|\left\langle \w_{c}(t),\api\v_{k'}\right\rangle \right|^{q}\\
 & \leq\tO\left(\alpha^{q}P\left(\sigma_{0}+\eta T\left(\max_{k'}\rho_{k'}+\sigma_{\xi}d^{-1/2}\right)\right)^{q}\right)\\
 & \leq o(1).
\end{align*}
Then, since $\max_{c\in[C]}\left\langle \w_{c}(t),\v_{k}\right\rangle \leq O(C^{-1/q})$,
and $\max_{c\in[C]}y\left\langle \w_{c}(t),\vxi^{(i)}\right\rangle \leq o(1)$
for all $i\in[n]$, we have for all $i$ such that $k_{i}^{*}=k$,
$y^{(i)}F(\w(t),\x^{(i)})\leq O(1)$ and $\frac{1}{1+e^{y^{(i)}F(\w(t),\x^{(i)})}}\geq\Omega(1)$. 

Now, we compute the update $\left\langle \w_{c}(t+1),\v_{k}\right\rangle -\left\langle \w_{c}(t),\v_{k}\right\rangle $,
\begin{align}
 & \left\langle \w_{c}(t+1),\v_{k}\right\rangle \nonumber \\
= &~ \left\langle \w_{c}(t),\v_{k}\right\rangle +\frac{\eta}{n}\sum_{i:k_{i}^{*}=k}\left(\frac{1}{1+e^{y^{(i)}F(\w(t),\x^{(i)})}}\psi'\left(\left\langle \w_{c}(t),\v_{k}\right\rangle \right)\left\Vert \v_{k}\right\Vert _{2}^{2}\right)\nonumber \\
 & -\frac{\eta}{n}\sum_{i=1}^{n}\left(\frac{1}{1+e^{y^{(i)}F(\w(t),\x^{(i)})}}\sum_{p\in\pbpki}\api\psi'\left(\left\langle \w_{c}(t),\api\v_{k}\right\rangle \right)\left\Vert \v_{k}\right\Vert _{2}^{2}\right)\label{eq:vphase1eq1}\\
 & +\frac{\eta}{n}\sum_{i=1}^{n}\left(\frac{1}{1+e^{y^{(i)}F(\w(t),\x^{(i)})}}\psi'\left(\left\langle \w_{c}^{(t)},\vxi^{(i)}\right\rangle \right)y^{(i)}\left\langle \vxi^{(i)},\v_{k}\right\rangle \right)\label{eq:vphase1eq2}
\end{align}

Then, when $\Ginit$ holds, since $\frac{1}{1+e^{y^{(i)}F(\w(t),\x^{(i)})}}\geq\Omega(1)$
for all $i\in[n]$ such that $k_{i}^{*}=k$,
\begin{align*}
\frac{\eta}{n}\sum_{i:k_{i}^{*}=k}\left(\frac{1}{1+e^{y^{(i)}F(\w(t),\x^{(i)})}}\psi'\left(\left\langle \w_{c}(t),\v_{k}\right\rangle \right)\left\Vert \v_{k}\right\Vert _{2}^{2}\right) & =\Theta\left(\eta\rho_{k}\left|\left\langle \w_{c}(t),\v_{k}\right\rangle \right|^{q-1}\right).
\end{align*}

We can bound the term $\text{\eqref{eq:vphase1eq1}}$ as
\begin{align*}
\left|\eqref{eq:vphase1eq1}\right|\leq & ~\tO\left(\frac{\eta}{N}\sum_{i=1}^{n}\sum_{p\in\pbpki}\api\psi'\left(\left\langle \w_{c}(t),\api\v_{k}\right\rangle \right)\right)\\
\leq & ~\tO\left(\eta\rho_{k}\alpha^{q}P\left|\left\langle \w_{c}(t),\v_{k}\right\rangle \right|^{q-1}\right)\leq o(\eta\rho_{k}\left|\left\langle \w_{c}(t),\v_{k}\right\rangle \right|^{q-1}).
\end{align*}

For the term $\text{\eqref{eq:vphase1eq2}}$, if $\Ginit$ holds,
\begin{align*}
\text{\ensuremath{\left|\eqref{eq:vphase1eq2}\right|}}\leq &~ \frac{\eta}{n}\sum_{i=1}^{n}\left(\frac{1}{1+e^{y^{(i)}F(\w(t),\x^{(i)})}}\psi'\left(\left\langle \w_{c}^{(t)},\vxi^{(i)}\right\rangle \right)\left|\left\langle \vxi^{(i)},\v_{k}\right\rangle \right|\right)\\
\leq &~ \tO\left(\frac{\eta}{n}\sum_{i=1}^{n}\left|\left\langle \w_{c}^{(t)},\vxi^{(i)}\right\rangle \right|^{q-1}\left\langle \vxi^{(i)},\v_{k}\right\rangle \right)\\
\leq & ~\tO\left(\eta\sigma_{0}^{q-1}\sigma_{\xi}^{q}d^{-1/2}\right).
\end{align*}

For $t=0$, if $\Ginit$ holds, $\max_{c\in[C]}\left\langle \w_{c}(0),\v_{k}\right\rangle \geq\tom(\sigma_{0})$.
Then, if $\sigma_{0}^{q-1}\sigma_{\xi}^{q}d^{-1/2}\leq o(\rho_{k}\sigma_{0}^{q-1})$,
we have 
\begin{equation}
\max_{c\in[C]}\left\langle \w_{c}(t+1),\v_{k}\right\rangle =\max_{c\in[C]}\left\langle \w_{c}(t),\v_{k}\right\rangle +\Theta\left(\eta\rho_{k}\psi'\left(\max_{c\in[C]}\left\langle \w_{c}(t),\v_{k}\right\rangle \right)\right),\label{eq:vphase1eq3}
\end{equation}
 which shows $\max_{c\in[C]}\left\langle \w_{c}(t),\v_{k}\right\rangle $
is increasing. Then, $\text{\eqref{eq:vphase1eq3}}$ holds for all
$t$.

Starting from some $\left\langle \w_{c}(t'),\v_{k}\right\rangle $,
the number of iterations it takes to reach $\max_{c\in[C]}\left\langle \w_{c}(t),\v_{k}\right\rangle \geq2\max_{c\in[C]}\left\langle \w_{c}(t'),\v_{k}\right\rangle $
is at most $O\left(\frac{\max_{c\in[C]}\left\langle \w_{c}(t'),\v_{k}\right\rangle }{\eta\rho_{k}\left(\max_{c\in[C]}\left\langle \w_{c}(t'),\v_{k}\right\rangle \right)^{q-1}}\right).$
Then, starting from $\Theta(\sigma_{0})$, it takes at most
\[
\tO\left(\sum_{i=0}^{\infty}\frac{2^{i}\sigma_{0}}{\eta\rho_{k}(2^{i}\sigma_{0})^{q-1}}\right)\leq\tilde{O}\left(\frac{1}{\eta\rho_{k}\sigma_{0}^{q-2}}\right)
\]
time steps to reach $\max_{c\in[C]}\left\langle \w_{c}(t),\v_{k}\right\rangle \geq\Omega\left(C^{-1/q}\right)$. 
\end{proof}

\xiphase*
\begin{proof}
By the upper bound on $\alpha$ and Lemma $\text{\ref{lem:v_bd_all}}$,
for any $c\in[C],$
\begin{align*}
\left|\sum_{k'\in[K]}\sum_{p\in\pbpkip}\psi\left(\left\langle \w_{c}(t),\api\v_{k'}\right\rangle \right)\right| & \leq\sum_{k'\in[K]}\sum_{p\in\pbpki}\left|\left\langle \w_{c}(t),\api\v_{k'}\right\rangle \right|^{q}\\
 & \leq\tO\left(\alpha^{q}P\left(\sigma_{0}+\eta T\left(\max_{k'}\rho_{k'}+\sigma_{\xi}d^{-1/2}\right)\right)^{q}\right)\\
 & \leq o(1).
\end{align*}
For $i$, when $\max_{c\in[C]}\left\langle \w_{c}(t),v_{k_{i}^{*}}\right\rangle \leq O(C^{-1/q})$,
$\max_{c\in[C]}y^{(i)}\left\langle \w_{c}(t),\vxi^{(i)}\right\rangle \leq O(C^{-1/q})$
and 
\[
\left|\sum_{k\in[K]}\sum_{p\in\pbpki}\psi\left(\left\langle \w_{c}(t),-y\api\v_{k}\right\rangle \right)\right|\leq o(1),
\]
we have $y^{(i)}F(\w(t),\x^{(i)})\leq O(1)$ and therefore $\frac{1}{1+e^{y^{(i)}F(\w(t),\x^{(i)})}}\geq\Omega(1)$.
Then, 
\begin{align}
 & y^{(i)}\left\langle \w_{c}(t+1),\vxi^{(i)}\right\rangle \nonumber \\
= & ~ y^{(i)}\left\langle \w_{c}(t),\vxi^{(i)}\right\rangle +\frac{\eta}{n}\left(\frac{1}{1+e^{y^{(i)}F(\w(t),\x^{(i)})}}\psi'\left(\left\langle \w_{c}(t),\vxi^{(i)}\right\rangle \right)\left\Vert \vxi^{(i)}\right\Vert _{2}^{2}\right)\nonumber \\
 & +\frac{\eta}{n}\sum_{j:j\neq i}\left(\frac{y^{(i)}y^{(j)}}{1+e^{y^{(j)}F(\w(t),\x^{(j)})}}\psi'\left(\left\langle \w_{c}(t),\vxi^{(j)}\right\rangle \right)\left\langle \vxi^{(j)},\vxi^{(i)}\right\rangle \right)\label{eq:xi_phase2eq1}\\
 & +\frac{\eta}{n}\sum_{j=1}^{n}\left(\frac{y^{(i)}}{1+e^{y^{(j)}F(\w_{c}(t),\x^{(j)})}}\psi'\left(\left\langle \w_{c}(t),\v_{k_{j}^{*}}\right\rangle \right)\left\langle \v_{k_{j}^{*}},\vxi^{(i)}\right\rangle \right)\label{eq:xi_phase2_eq2}\\
 & -\frac{\eta}{n}\sum_{j=1}^{n}\left(\frac{y^{(i)}}{1+e^{y^{(j)}F(\w_{c}(t),\x^{(j)})}}\sum_{k\in[K]}\sum_{p\in\pbpkj}\psi'\left(\left\langle \w_{c}(t),\apj\v_{k}\right\rangle \right)\left\langle \apj\v_{k},\vxi^{(i)}\right\rangle \right).\label{eq:xi_phase2_eq3}
\end{align}

If $\frac{1}{1+e^{y^{(i)}F(\w(t),\x^{(i)})}}\geq\Omega(1)$, and $\tom\left(\sigma_{0}\sigma_{\xi}\right)\leq\max_{c\in[C]}y^{(i)}\left\langle \w_{c}(t),\vxi^{(i)}\right\rangle $,
\[
\frac{\eta}{n}\left(\frac{1}{1+e^{y^{(i)}F(\w(t),\x^{(i)})}}\psi'\left(\left\langle \w_{c}(t),\vxi^{(i)}\right\rangle \right)\left\Vert \vxi^{(i)}\right\Vert _{2}^{2}\right)\geq \tom \left(\frac{\eta}{n}\left(\sigma_{0}\sigma_{\xi}\right)^{q-1}\sigma_{\xi}^{2}\right).
\]

When $\Ginit$ holds, since $\frac{1}{1+e^{y^{(i)}F(\w(t),\x^{(i)})}}\leq1$,
$\psi'\left(\left\langle \w_{c}(t),\vxi^{(j)}\right\rangle \right)\leq1$,
and $\psi'\left(\left\langle \w_{c}(t),\v_{k_{j}^{*}}\right\rangle \right)\leq1$,
$\left|\text{\eqref{eq:xi_phase2eq1}}\right|\leq\tO\left(\eta\sigma_{\xi}^{2}d^{-1/2}\right)$
and $\left|\eqref{eq:xi_phase2_eq2}\right|\leq\tO\left(\eta\sigma_{\xi}d^{-1/2}\right)$.

By Lemma $\text{\ref{lem:v_bd_all}}$ and the upper bound on $\alpha$,
\begin{align*}
\left|\eqref{eq:xi_phase2_eq3}\right| & \leq\tO\left(\eta\alpha^{q}P\max_{k\in[K]}\left|\left\langle \w_{c}(t),\v_{k}\right\rangle \right|^{q-1}\sigma_{\xi}d^{-1/2}\right)\\
 & \leq\tO\left(\eta\alpha^{q}P\left(\sigma_{0}+\eta T\left(\rho_{k}+\sigma_{\xi}\right)\right)^{q-1}\sigma_{\xi}d^{-1/2}\right)\\
 & \leq\tO\left(\eta\sigma_{\xi}d^{-1/2}\right).
\end{align*}

When $\Ginit$ holds, $\tom\left(\sigma_{0}\sigma_{\xi}\right)\leq y^{(i)}\max_{c\in[C]}\left\langle \w_{c}(0),\vxi^{(i)}\right\rangle $.
Then, when $n\leq o\left(\min\left\{ \sigma_{0}^{q-1}\sigma_{\xi}^{q}d^{1/2},\sigma_{0}^{q-1}\sigma_{\xi}^{q-1}d^{1/2}\right\} \right)$,
for $t=0$,
\begin{equation}
\max_{c\in[C]}y^{(i)}\left\langle \w_{c}(t+1),\vxi^{(i)}\right\rangle =\max_{c\in[C]}y^{(i)}\left\langle \w_{c}(t),\vxi^{(i)}\right\rangle +\frac{\eta}{n}\tthe\left(\sigma_{\xi}^{2}\psi'\left(\max_{c\in[C]}y^{(i)}\left\langle \w_{c}(t),\vxi^{(i)}\right\rangle \right)\right),\label{eq:xi_phase2_eq4}
\end{equation}
which shows $\max_{c\in[C]}y^{(i)}\left\langle \w_{c}(t),\vxi^{(i)}\right\rangle $
is increasing. Then, $\text{\eqref{eq:xi_phase2_eq4}}$ holds for
all $0\leq t\leq T$.

Starting from $\max_{c\in[C]}y^{(i)}\left\langle \w_{c}(t'),\vxi^{(i)}\right\rangle $,
the number of iterations it takes to reach $\max_{c\in[C]}y^{(i)}\left\langle \w_{c}(t),\vxi^{(i)}\right\rangle \geq2\max_{c\in[C]}y^{(i)}\left\langle \w_{c}(t'),\vxi^{(i)}\right\rangle $
is at most $O\left(\frac{n\max_{c\in[C]}y^{(i)}\left\langle \w_{c}(t'),\vxi^{(i)}\right\rangle }{\eta\sigma_{\xi}^{2}\left(\max_{c\in[C]}y^{(i)}\left\langle \w_{c}(t'),\vxi^{(i)}\right\rangle \right)^{q-1}}\right)$.
Then, starting from $\max_{c\in[C]}y^{(i)}\left\langle \w_{c}(0),\vxi^{(i)}\right\rangle \geq\tom\left(\sigma_{0}\sigma_{\xi}\right)$,
it takes at most 
\[
T'\leq\tO\left(\sum_{i=0}^{\infty}\frac{n2^{i}\sigma_{0}\sigma_{\xi}}{\eta\sigma_{\xi}^{2}(2^{i}\sigma_{0}\sigma_{\xi})^{q-1}}\right)\leq\tilde{O}\left(\frac{n}{\eta\sigma_{\xi}^{2}(\sigma_{0}\sigma_{\xi})^{q-2}}\right)
\]
time steps to reach $\max_{c\in[C]}y^{(i)}\left\langle \w_{c}(T'),\vxi^{(i)}\right\rangle \geq \Omega(C^{-1/q})$. 
\end{proof}

\vkbd*
\begin{proof}
For every $k\in[K]$,
\begin{align}
 & \left\langle \w_{c}(t+1),\v_{k}\right\rangle \nonumber \\
= & \left\langle \w_{c}(t),\v_{k}\right\rangle +\frac{\eta}{n}\sum_{i:k_{i}^{*}=k}\left(\frac{1}{1+e^{y^{(i)}F(\w(t),\x^{(i)})}}\psi'\left(\left\langle \w_{c}(t),\v_{k}\right\rangle \right)\left\Vert \v_{k}\right\Vert _{2}^{2}\right)\nonumber \\
 & -\frac{\eta}{n}\sum_{i=1}^{n}\left(\frac{1}{1+e^{y^{(i)}F(\w(t),\x^{(i)})}}\sum_{p\in\pbpki}\api\psi'\left(\left\langle \w_{c}(t),\api\v_{k}\right\rangle \right)\left\Vert \v_{k}\right\Vert _{2}^{2}\right)\label{eq:vk_bd_eq1}\\
 & +\frac{\eta}{n}\sum_{i=1}^{n}\left(\frac{1}{1+e^{y^{(i)}F(\w(t),\x^{(i)})}}\psi'\left(\left\langle \w_{c}^{(t)},\vxi^{(i)}\right\rangle \right)y^{(i)}\left\langle \vxi^{(i)},\v_{k}\right\rangle \right)\label{eq:vk_bd_eq2}
\end{align}
Then, since $\frac{1}{1+e^{y^{(i)}F(\w(t),\x^{(i)})}}\leq1$ and $\psi'\left(\left\langle \w_{c}(t),\v_{k}\right\rangle \right)\leq O\left(\left|\left\langle \w_{c}(t),\v_{k}\right\rangle \right|^{q-1}\right)$,
\[
\frac{\eta}{n}\sum_{i:k_{i}^{*}=k}\left(\frac{1}{1+e^{y^{(i)}F(\w(t),\x^{(i)})}}\psi'\left(\left\langle \w_{c}(t),\v_{k}\right\rangle \right)\left\Vert \v_{k}\right\Vert _{2}^{2}\right)\leq O\left(\eta\rho_{k}\left|\left\langle \w_{c}(t),\v_{k}\right\rangle \right|^{q-1}\right).
\]
The second term $\text{\eqref{eq:vk_bd_eq1}\ensuremath{\leq0}}.$
For $\eqref{eq:vk_bd_eq2}$, since $\left|\frac{y^{(i)}}{1+e^{y^{(i)}F(\w(t),\x^{(i)})}}\right|\leq1$
and $\psi'\left(\left\langle \w_{c}^{(t)},\vxi^{(i)}\right\rangle \right)\leq1$,
if $\Ginit$ holds, $\eqref{eq:vk_bd_eq2}\leq\tO\left(\sigma_{\xi}d^{-1/2}\right).$

Finally, if $\Ginit$ holds, $\left\langle \w_{c}(0),\v_{k}\right\rangle \leq\tO\left(\sigma_{0}\right)$,
so it takes at least $t\geq\tom\left(\frac{\sigma_{0}}{\eta\rho_{k}\sigma_{0}^{q-1}+\eta\sigma_{\xi}d^{-1/2}}\right)$
time steps to reach $\left\langle \w_{c}(t),\v_{k}\right\rangle \geq2\left\langle \w_{c}(0),\v_{k}\right\rangle $.
\end{proof}

\xiibd*
\begin{proof}
We prove using induction. At $t=0$, when $\Ginit$ holds, $\max_{i\in[n],c\in[C]}y^{(i)}\left\langle \w_{c}(0),\vxi^{(i)}\right\rangle \leq\tO(\sigma_{0}\sigma_{\xi}).$
Assume $\max_{i\in[n],c\in[C]}y^{(i)}\left\langle \w_{c}(t'),\vxi^{(i)}\right\rangle \leq\tO(\sigma_{0}\sigma_{\xi})$
for any $0\leq t'\leq t$ for induction. For any $c\in[C]$,
\begin{align}
 & y^{(i)}\left\langle \w_{c}(t+1),\vxi^{(i)}\right\rangle \nonumber \\
= & ~ y^{(i)}\left\langle \w_{c}(t),\vxi^{(i)}\right\rangle +\frac{\eta}{n}\left(\frac{1}{1+e^{y^{(i)}F(\w(t),\x^{(i)})}}\psi'\left(\left\langle \w_{c}(t),\vxi^{(i)}\right\rangle \right)\left\Vert \vxi^{(i)}\right\Vert _{2}^{2}\right)\nonumber \\
 & +\frac{\eta}{n}\sum_{j:j\neq i}\left(\frac{y^{(i)}y^{(j)}}{1+e^{y^{(j)}F(\w(t),\x^{(j)})}}\psi'\left(\left\langle \w_{c}(t),\vxi^{(j)}\right\rangle \right)\left\langle \vxi^{(j)},\vxi^{(i)}\right\rangle \right)\label{eq:xi_phase2eq1-1}\\
 & +\frac{\eta}{n}\sum_{j=1}^{n}\left(\frac{y^{(i)}}{1+e^{y^{(j)}F(\w_{c}(t),\x^{(j)})}}\psi'\left(\left\langle \w_{c}(t),\v_{k_{j}^{*}}\right\rangle \right)\left\langle \v_{k_{j}^{*}},\vxi^{(i)}\right\rangle \right)\label{eq:xi_phase2_eq2-1}\\
 & -\frac{\eta}{n}\sum_{j=1}^{n}\left(\frac{y^{(i)}}{1+e^{y^{(j)}F(\w_{c}(t),\x^{(j)})}}\sum_{k\in[K]}\sum_{p\in\pbpkj}\psi'\left(\left\langle \w_{c}(t),\apj\v_{k}\right\rangle \right)\left\langle \apj\v_{k},\vxi^{(i)}\right\rangle \right).\label{eq:xi_phase2_eq3-1}
\end{align}
Then, when $\Ginit$ holds, by $\frac{1}{1+e^{y^{(i)}F(\w(t),\x^{(i)})}}\leq1$
and $\psi'(\cdot)\leq1$, 
\begin{align*}
 & ~y^{(i)}\left\langle \w_{c}(t+1),\vxi^{(i)}\right\rangle \\
\leq & ~ y^{(i)}\left\langle \w_{c}(t),\vxi^{(i)}\right\rangle +\eta\tO(n^{-1}\sigma_{0}^{q-1}\sigma_{\xi}^{q+1}+\sigma_{\xi}^{2}d^{-1/2}+\sigma_{\xi}d^{-1/2}+\alpha^{q}P\sigma_{\xi}d^{-1/2}\left(\sigma_{0}+\eta T\left(\rho_{k}+\sigma_{\xi}d^{-1/2}\right)\right)^{q-1})\\
\leq &~ y^{(i)}\left\langle \w_{c}(0),\vxi^{(i)}\right\rangle +\eta t\tO(n^{-1}\sigma_{0}^{q-1}\sigma_{\xi}^{q+1}+\sigma_{\xi}^{2}d^{-1/2}+\sigma_{\xi}d^{-1/2}).
\end{align*}
The last step uses the upper bound on $\alpha$ and the induction
hypothesis. Since $n\leq o\left(\min\left\{ \sigma_{0}^{q-1}\sigma_{\xi}^{q}d^{1/2},\sigma_{0}^{q-1}\sigma_{\xi}^{q-1}d^{1/2}\right\} \right)$
and $t\leq o(n\eta^{-1}\sigma_{\xi}^{-q}\sigma_{0}^{-q+2})$, $\max_{i\in[n],c\in[C]}y^{(i)}\left\langle \w_{c}(t),\vxi^{(i)}\right\rangle \leq\tO(\sigma_{0}\sigma_{\xi}).$

\end{proof}

\wrandxi*
\begin{proof}
For any $0\leq t<T$, 
\begin{align*}
 & \left\langle \w_{c}(t+1),\vxi\right\rangle \\
= & \left\langle \w_{c}(t),\vxi\right\rangle +\frac{\eta}{n}\sum_{i=1}^{n}\frac{y^{(i)}}{1+e^{y^{(i)}F(\w(t),\x^{(i)})}}\psi'\left(\left\langle \w_{c}(t),\vxi^{(i)}\right\rangle \right)\left\langle \vxi^{(i)},\vxi\right\rangle \\
 & +\frac{\eta}{n}\sum_{i=1}^{n}\left(\frac{1}{1+e^{y^{(i)}F(\w(t),\x^{(i)})}}\psi'\left(\left\langle \w_{c}(t),\v_{k_{i}^{*}}\right\rangle \right)\left\langle \v_{k_{i}^{*}},\vxi\right\rangle \right)\\
 & -\frac{\eta}{n}\sum_{i=1}^{n}\left(\frac{1}{1+e^{y^{(i)}F(\w(t),\x^{(i)})}}\sum_{k\in[K]}\sum_{p\in\pbpki}\psi'\left(\left\langle \w_{c}(t),\api\v_{k}\right\rangle \right)\left\langle \api\v_{k},\vxi\right\rangle \right).
\end{align*}
By Lemma \ref{lem:gausscorr}, with probability at least $1-\frac{nK}{\poly d}$,
for all $i\in[n]$, $\left\langle \vxi^{(i)},\vxi\right\rangle \leq\tO(\sigma_{\xi}^{2}d^{-1/2})$
and for all $k\in[K]$, $\left\langle \v_{k},\vxi\right\rangle \leq\tO(\sigma_{\xi}d^{-1/2})$.
Then, by $\frac{1}{1+e^{y^{(i)}F(\w(t),\x^{(i)})}}\leq1$ , $\psi'\left(\left\langle \w_{c}(t),\vxi^{(i)}\right\rangle \right)\leq1$,
$\psi'\left(\left\langle \w_{c}(t),\v_{k_{i}^{*}}\right\rangle \right)\leq1$
and $\psi'\left(\left\langle \w_{c}(t),\api\v_{k}\right\rangle \right)\leq1$,
\begin{align*}
 & \left|\left\langle \w_{c}(t+1),\vxi\right\rangle -\left\langle \w_{c}(t),\vxi\right\rangle \right|\\
\leq &~ \tO(\eta\sigma_{\xi}^{2}d^{-1/2})+\tO(\eta\sigma_{\xi}d^{-1/2})+\tO\left(\eta\alpha^{q}P\sigma_{\xi}d^{-1/2}\left(\sigma_{0}+\eta T\left(\max_{k'}\rho_{k'}+\sigma_{\xi}d^{-1/2}\right)\right)^{q-1}\right)\\
\leq &~ \eta\tO(\sigma_{\xi}^{2}d^{-1/2}+\sigma_{\xi}d^{-1/2}).
\end{align*}
The second step uses Lemma \ref{lem:v_bd_all}. The third step uses
the upper bound on $\alpha$. Summing over $0\leq t'\leq t$, 
\[
\left|\left\langle \w_{c}(t),\vxi\right\rangle -\left\langle \w_{c}(0),\vxi\right\rangle \right|\leq\eta T\tO(\sigma_{\xi}^{2}d^{-1/2}+\sigma_{\xi}d^{-1/2}).
\]
When $n\leq o\left(\min\left\{ \sigma_{0}^{q-1}\sigma_{\xi}^{q}d^{1/2},\sigma_{0}^{q-1}\sigma_{\xi}^{q-1}d^{1/2}\right\} \right)$,
$K\leq o\left(\min\left\{ \sigma_{0}^{q-1}\sigma_{\xi}^{-1}d^{1/2},\sigma_{0}^{q-1}d^{1/2}\right\} \right)$,
and\\
 $T\leq\tO\left(\max\left\{ n\eta^{-1}\sigma_{\xi}^{-q}\sigma_{0}^{-q+2},K\eta^{-1}\sigma_{0}^{-q+2}\right\} \right)$,
\[
\left|\left\langle \w_{c}(t),\vxi\right\rangle -\left\langle \w_{c}(0),\vxi\right\rangle \right|\leq o(\sigma_{0}\sigma_{\xi}).
\]
\end{proof}

\end{document}